\DeclareFontShape{T1}{lmr}{m}{sc}{<->ssub*cmr/m/sc}{}
\DeclareFontShape{T1}{lmr}{b}{sc}{<->ssub*cmr/b/sc}{}
\DeclareFontShape{T1}{lmr}{bx}{sc}{<->ssub*cmr/bx/sc}{}
\newcommand{\thmheadercommand}[1]{\textbf{\scshape{}#1.\\*}}
\newtheoremstyle{yannthm}{\topsep}{\topsep}{\slshape}{}{\scshape\bfseries}{.}{.5em}{%
\thmname{#1}\thmnumber{ #2}\thmnote{#3}%
}
\newtheoremstyle{yannthm2}{\topsep}{\topsep}{}{}{\scshape\bfseries}{.}{.5em}{%
\thmname{#1}\thmnumber{ #2}\thmnote{#3}%
}
\def\d{\operatorname{d}\!{}}
\def\R{{\mathbb{R}}}
\newcommand{\deq}{\mathrel{\mathop:}=}
\newcommand{\from}{\colon} 
\def\eps{\varepsilon}
\renewcommand{\epsilon}{\varepsilon}
\renewcommand{\phi}{\varphi}
\DeclareMathOperator{\Hess}{Hess}
\DeclareMathOperator{\Var}{Var}
\let\oldPr\Pr
\renewcommand{\Pr}{\oldPr\nolimits}
\newcommand{\E}{\mathbb{E}}
\DeclareMathOperator{\Id}{Id}
\newcommand{\abs}[1]{\left|\mskip1mu#1\right|}
\newcommand{\norm}[1]{\left\|#1\right\|}
\newcommand{\scal}[2]{\left< \, #1 \mid #2 \, \right>}
\newcommand{\1}{\mathbbm{1}}
\DeclareMathOperator*{\argmin}{arg\,min}
\theoremstyle{yannthm}
\newtheorem{defi}{Definition}
\newtheorem*{defi*}{Definition}
\newtheorem{prop}[defi]{Proposition}
\newtheorem*{prop*}{Proposition}
\newtheorem*{thm*}{Theorem}
\newtheorem*{lem*}{Lemma}
\newtheorem*{cor*}{Corollary}
\newtheorem{ex}[defi]{Example}
\newtheorem*{ex*}{Example}
\newtheorem*{subenonce}{}
\theoremstyle{yannthm2}
\newtheorem*{exo*}{Exercise}
\newtheorem{rem}[defi]{Remark}
\newtheorem*{rem*}{Remark}
\newtheorem*{subenonce2}{}
\newenvironment{enonce2}[1]{\begin{subenonce2}[#1]}{\end{subenonce2}}
\newcommand{\transp}[1]{#1^{\!\top}\!}
\newcommand{\e}{\mathrm{e}}
\renewcommand{\d}{\ensuremath{\hspace{0.05em}\delta\hspace{-.05em}}}
\newcommand{\grad}{\nabla\!}
\renewcommand{\L}{\mathcal{L}}
\newcommand{\Lout}{\L_\mathrm{out}}
\newcommand{\Lin}{\L_\mathrm{in}}
\newcommand{\A}{\mathcal{A}}
\newcommand{\actf}{s}
\newcommand{\deractf}{r}
\DeclareMathOperator{\sigm}{sigm}
\DeclareMathOperator{\diag}{diag}
\newcommand{\ssum}{{\textstyle\sum}}
\newcommand{\nat}{_{\mathrm{nat}}}
\newcommand{\datnat}[1]{_{\mathrm{nat},#1}}
\newcommand{\natnorm}[1]{\norm{#1}\nat}
\newcommand{\datnatnorm}[2]{\norm{#1}\datnat{#2}}
\newcommand{\unat}{_{\mathrm{u\text{-}nat}}}
\newcommand{\unatnorm}[1]{\norm{#1}\unat}
\newcommand{\bp}{_{\mathrm{bp}}}
\newcommand{\datbp}[1]{_{\mathrm{bp},#1}}
\newcommand{\bpnorm}[1]{\norm{#1}\bp}
\newcommand{\datbpnorm}[2]{\norm{#1}\datbp{#2}}
\newcommand{\fmnorm}[1]{\norm{#1}_{\mathrm{F\text{-}mod}}}
\newcommand{\qdnorm}[1]{\norm{#1}_{\mathrm{qd}}}
\newcommand{\opnorm}[1]{\norm{#1}_{\mathrm{op}}}
\newcommand{\uopnorm}[1]{\norm{#1}_{\mathrm{u\text{-}op}}}
\newcommand{\out}{_\mathrm{out}}
\newcommand{\D}{\mathcal{D}}
\newcommand{\dw}{\ensuremath{\hspace{0.05em}\d\hspace{-.06em}w\hspace{0.05em}}}
\title{Riemannian metrics for neural networks I: Feedforward networks}
\author{Yann Ollivier}
\begin{document}

\maketitle

\begin{abstract}
We describe four algorithms for neural network training, each adapted to different scalability
constraints. These algorithms are mathematically principled and invariant under
a number of transformations in data and network representation, from which
performance is thus independent. These
algorithms are obtained from the setting of differential geometry, and are
based on either the natural gradient using the Fisher information matrix, or on Hessian methods, scaled down in a
specific way to allow for scalability while keeping some of their key mathematical
properties.
\end{abstract}

The most standard way to train neural networks, backpropagation, has
several known shortcomings. Convergence can be quite slow.
Backpropagation is sensitive to data representation: for instance, even
such a simple operation as exchanging $0$'s and $1$'s on the input layer
will affect performance (Figure~\ref{fig:bptraj}), because this amounts
to changing the parameters (weights and biases) in a non-trivial way,
resulting in different gradient directions in parameter space, and better
performance with $1$'s than with $0$'s. (In the related context of
restricted Boltzmann machines, the standard training technique by
gradient ascent favors setting hidden units to $1$, for similar reasons
\cite[Section~5]{IGO}.) This specific phenomenon disappears if, instead
of the logistic function, the hyperbolic tangent is used as the
activation function, or if the input is normalized. But this will not
help if, for instance, the activities of internal units in a multilayer
network are not centered on average. Scaling also has an effect on
performance: for instance, a common recommendation \cite{LBOM96} is to
use $1.7159\tanh(2x/3)$ instead of just $\tanh(x)$ as the activation
function.

\begin{figure}
\begin{center}
\includegraphics[width=.7\columnwidth]{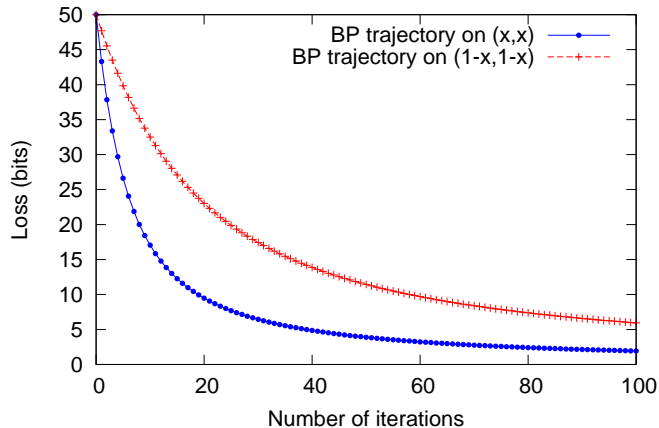}

\caption{\label{fig:bptraj}Backpropagation learns faster with 1's. A
neural network with two layers of size $50$ (no hidden layer) is trained to reproduce
its input. A random binary sequence $x$ of length 50 with
$75\%$ of 1's is generated. The network is trained on the input-output pair
$(x,x)$ for 100 backpropagation steps (learning rate $0.01$). The
experiment is repeated on the input-output pair $(1-x,1-x)$. In both
cases all initial weights are set to $0$.}
\end{center}
\end{figure}

It would be interesting to have algorithms whose performance is
insensitive to particular choices such as scaling factors in network
construction, parameter encoding or data representation.
We call an algorithm \emph{invariant}, or \emph{intrinsic}, if applying a
change of variables to the parameters and activities results in the same
learning trajectory. This is not the case for backpropagation (even after
changing the learning rate): for instance, changing from sigmoid to tanh
activation amounts to dividing the connection weights by $4$ and
shifting the biases by half the weights
(Eqs.\ \eqref{eq:sigmtanh1}--\eqref{eq:sigmtanh2}), which does not preserve
gradient directions.

Invariance of an algorithm means fewer arbitrary design choices, and also
more robustness: good performance on a particular problem indicates good
performance over a whole class of problems equivalent to the first one by
simple (e.g., affine) transformations.

Known invariant algorithms include Newton or quasi-Newton
methods \cite{BeckerLeCun88}, or the
natural gradient \cite{Amari98}. The latter, in particular, is invariant (and thus preserves performance)
under a wide class of changes in the representation of the data and of the
network, while Newton-like methods are only invariant under affine
transforms. However, these methods are generally not scalable: the cost of
maintaining the whole Hessian or Fisher information matrix
is quadratic in parameter dimension and prohibitive for large networks \cite{BeckerLeCun88,LBOM96}. The approximations
made to ensure their scalability, such as keeping only diagonal terms
\cite{LBOM96,peskylr},
making small-rank approximations \cite{TONGA}, or using limited-memory BFGS
(e.g.~\cite{NN_BFGS2011} for a recent example), break their invariance properties.

\paragraph{Scalable Riemannian methods for neural networks.} In this work
we introduce four invariant algorithms adapted to four different
scalability constraints. For this we develop a suitable theoretical
framework in which to build invariant algorithms, by treating neural
networks in the context of Riemannian geometry.

For a network with $n$ units, $n\out$ output
units and at most $d$ incoming connections per unit, processing each data sample
with backpropagation has an algorithmic cost $O(nd)$.
The most lightweight of our invariant algorithms has $O(nd)$ cost per
sample as well. The heaviest one is more faithful to the true natural gradient
but has a cost $O(nd^2+ndn\out)$ and thus requires that the network is sparsely
connected (the average number of units influencing a given unit must not
be too large) and that the output layer is not too large. This latter
condition is typically fulfilled in classification tasks.

The \emph{unitwise natural gradient} is a scaled-down version of Amari's
natural gradient \cite{Amari98} in which the blocks of incoming parameters
to each unit are treated independently, thus dealing, at each
unit, with a square matrix indexed by the incoming parameters to this
unit. This has been proposed as far back as \cite{Kur94} to train
neural networks; however the algorithm in \cite{Kur94} is limited to
networks with only one hidden layer, because it relies on an explicit
symbolic computation of entries of the Fisher matrix. Here
Proposition~\ref{prop:fishisfish} allows for an efficient computation of
the \emph{exact} Fisher information matrix in arbitrary neural networks,
by doing $n\out$ distinct backpropagations for each sample in the
dataset. As a result, the unitwise natural gradient is adapted to
situations where both the connectivity of the network and the output
layer are reasonably small: the algorithmic cost of
processing each data sample is $O(nd^2+ndn\out)$.

The \emph{backpropagated metric gradient} 
can be described as a blockwise quasi-Newton method in which several
approximations (Gauss--Newton and neglecting cross-unit terms) are used.
However, we describe it in an intrinsic way: it stems from a well-defined
\emph{backpropagated metric} on parameter space, in which no
approximations are involved. Invariance properties follow
from this viewpoint. It is adapted to networks with reasonably small
connectivity but output layers of arbitrary size: the cost of processing
a data sample is $O(nd^2)$.

The \emph{quasi-diagonal natural gradient} and \emph{quasi-diagonal
backpropagated metric gradient} apply a
``quasi-diagonal reduction'' to these two algorithms, which
removes the quadratic dependency on connectivity at each unit. This is
done in a specific way to keep some (but not all) of the invariance
properties, such as insensitivity to using sigmoid or
$1.7159\tanh(2x/3)$. The quasi-diagonal natural gradient still requires
that the output layer is not too large, with a cost of $O(ndn\out)$ per
data sample, whereas the quasi-diagonal
backpropagated metric gradient has the same $O(nd)$ complexity as ordinary
backpropagation.
These quasi-diagonal methods
have not been described before, to the best of our
knowledge.

In this context, we also clarify another method found in the literature
\cite{APF00,TONGA}. It is related to, and sometimes confused with, the
natural gradient (discussion in \cite{BengioNG2013}). We call this method
the \emph{outer product metric gradient} and introduce a scaled-down,
invariant version. We prove a novel interpretation of this method as the
unique one that, at each step, spreads the improvement most uniformly
over all training samples (Proposition~\ref{prop:equalize}).

\paragraph{Organization of the text.} In Section~\ref{sec:algos} we
give the explicit form of the algorithms, without justification, to serve
as a reference for implementation. In Section~\ref{sec:invmet} we provide
the mathematical principles behind the algorithms, starting with the
relationship between gradients, metrics, and choice of coordinates
(Section~\ref{sec:gradintro}), then using the tools of Riemannian
geometry to build several invariant metrics for neural networks
(Section~\ref{sec:main} and Appendix~\ref{sec:formal}) together with a way of computing them. In
Section~\ref{sec:qd} we introduce the quasi-diagonal reduction of a
metric. These metrics produce associated gradient
descents (Section~\ref{sec:grads}). In Section~\ref{sec:usualfish} we
discuss in detail the case of the Fisher metric for neural networks and
various ways to compute or approximate it. In Section~\ref{sec:someprops}
we present some mathematical properties of these algorithms, focusing on
invariance by change of coordinates (Section~\ref{sec:inv}) and a
``best-fit'' interpretation (Section~\ref{sec:bestfit}). 
Section~\ref{sec:exp} contains a set of small-scale experiments as a proof
of concept for the new algorithms.

A companion article \cite{pcnn} develops related ideas for
\emph{recurrent} neural networks and provides more in-depth experiments
on complex symbolic data sequences.

\bigskip

We now provide an introduction to how invariant algorithms are built, and
an overview of the experimental results.

\paragraph{Gradient descents and metrics.} To build these invariant
algorithms, we use gradient descent in suitable invariant \emph{metrics}.

Backpropagation is the simple gradient descent over parameter space.
Gradient descents follow the steepest direction of change in parameter
space, and implicitly rely on a norm (or quadratic form, or metric) to
define the steepest direction:
the gradient step
$x\gets x-\eta\,\nabla f$ can also be rewritten (for small enough
$\eta$, up to $O(\eta^2)$ and for regular enough functions $f$) as
\begin{equation}
x\gets \argmin_y \left\{ f(y)+\frac{1}{2\eta}\norm{y-x}^2\right\}
\end{equation}
namely, the gradient descent moves into the direction yielding the
smallest values of $f$, penalized by the distance from the current
point, measured by the square norm $\norm{y-x}^2$.
For
backpropagation this norm $\norm{\cdot}^2$ is the numerical change in the values of the
parameters: backpropagation provides the direction of largest
improvement for a minimal change in these numerical values. Hence
simple changes in parametrization influence the behavior of the
algorithm. On the other hand, norms $\norm{\cdot}^2$ based on what the network does,
rather than how it is represented as numbers, will lead to ``intrinsic''
algorithms. This is one of the ideas behind Amari's natural gradient.

In Section~\ref{sec:invmet} we build several invariant norms, by placing
neural networks in the context of differential manifolds and Riemannian
geometry. The gradient descent coming from an invariant norm (Riemannian
metric) will itself be invariant. Moreover, any gradient descent using
any norm has the property that small enough learning rates ensure
performance improvement at each step.

The resulting algorithms are all invariant under a number of
transformations, including affine reparametrization
of the unit activities. Among the invariance properties enjoyed by the
unitwise natural gradient and the backpropagated metric (but not their
quasi-diagonal reductions) are linear recombinations of the input
received by a given unit in the network, so that
a unit receiving signals $f$ and $f+\eps g$ (as functions over the
dataset) from incoming units will learn an output correlated to
$g$ just as fast as a unit receiving signals $f$ and $g$ (on the input layer
this can be accounted for by normalizing and de-correlating the inputs,
but this could occur at internal units as well). Thus these gradients have a
``best-fit'' interpretation (Section~\ref{sec:bestfit}): at each unit
they solve a least-square problem of matching input signals and desired
backpropagated output, an interpretation proposed in \cite{Kur94}.

The quasi-diagonal reductions of these algorithms are based on the
observation that there is a distinction between
weights $w_{ik}$ and $w_{jk}$ coming to $k$ from different units, but no
intrinsic mathematical separation between weights and biases.
Intuitively, given that unit $k$ receives a signal $w_{ik}a_i$ from unit $i$,
if we change $w_{ik}$ to $w_{ik}+\d w_{ik}$, the average signal to
unit $k$ will change by $\d w_{ik} \bar a_i$ where $\bar a_i$ is the
average activation of $i$. Hence it might be a good idea to automatically
add $-\d w_{ik} \bar a_i$ to the bias of $k$, to compensate. The
quasi-diagonal algorithms we present are more sophisticated versions of
this, tuned for invariance and using weighted averages. The few added
terms in the update sometimes greatly improve performance
(Fig~\ref{fig:diaghess} on page~\pageref{fig:diaghess}).

Arguably, none of these algorithms is second-order: the update on
parameter $\theta$ takes the form $\theta\gets
\theta-A(\theta)^{-1}\,\partial_\theta f$ where $A(\theta)$ is a matrix
depending on the network but not on the objective function $f$. This
matrix comes from (Riemannian)
metrics evaluating the magnitude of the effect on the output of changes
in a given direction, thus providing a suitable learning rate for each
direction, without estimating derivatives of gradients. 
Second-order effects are emulated in the same way the Gauss--Newton algorithm emulates
the Newton method\footnote{Actually, in the framework of differential
geometry, without a metric, the Hessian is only defined at local optima
of the function \cite[paragraph 3.37]{GHL87}, so one could say that in
such a setting the Newton method approximates the Gauss--Newton algorithm
rather than the other way around.}.

\paragraph{Experimental proof of concept.} While the focus of this article is
mainly the mathematics of neural network training, we quickly tested
experimentally the implementability and impact of using the various methods. We selected  a
very simple auto-encoding problem on which we expected that any training
method would perform well. A sparsely connected network with $5$ layers
of size $100$, $30$, $10$, $30$, and $100$ was built, and $16$ random
length-$100$ binary strings were fed to the input layer, with the target
equal to the input. Ideally the network learns to encode each of the $16$
samples using $4$ bits on the middle layer (thus with room to spare) and
rewrites the output from this.  The details are given in
Section~\ref{sec:exp}.

Even then, backpropagation performs poorly: after $10,000$ batch passes
the average log-loss is about $36$ bits per sample (out of $100$) for
sigmoid backpropagation, and about $25$ bits per sample for tanh
backpropagation. Note that $30$ bits per sample would correspond to a
method which learns only the parameters of the output layer and can
reproduce the output if someone fills the last hidden layer with the
correct $30$ bits.

\begin{figure}
\begin{center}
\includegraphics[width=.9\columnwidth]{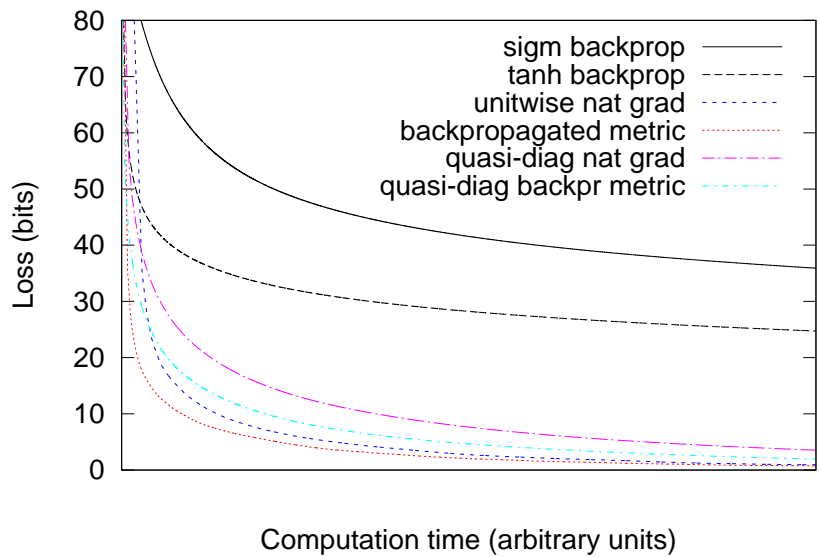}

\caption{\label{fig:mainfig}
Auto-encoding using a 100--30--10--30--100 deep sparsely connected network.
Comparison of backpropagation using sigmoid and tanh activation, and the
four algorithms described in Section~\ref{sec:algos}, for a given
computation time budget.}
\end{center}
\end{figure}

For the \emph{same} total computation time equivalent to $10,000$ batch
backpropagations\footnote{as measured in CPU
time on a personal computer, but this can depend a lot on implementation
details}, the quasi-diagonal algorithms have a log loss of about $1.5$ to
$3.5$ bits per sample, and both the unitwise natural gradient and the
backpropagated metric gradient have a log loss of $0.3$ to $1.5$ bit per
sample, thus essentially solving the problem. See
Figure~\ref{fig:mainfig}.

The impact of the few non-diagonal terms in the quasi-diagonal
algorithms was tested by removing them. In this case the quasi-diagonal
backpropagated metric gradient reduces to the diagonal Gauss--Newton
method (\cite[Section 7.4]{LBOM96}, also used in \cite{peskylr}). This breaks the invariance
properties, thus the impact is different for sigmoid or tanh
implementations.  The diagonal Gauss--Newton method in sigmoid
implementation was found to perform more poorly, with a final log-loss
of about $12$ bits per sample (Figure~\ref{fig:diaghess} on
page~\pageref{fig:diaghess}), while in tanh implementation it comes
somewhat close to
the quasi-diagonal algorithms at about $3.5$ bits per sample
(presumably because in our problem, the activity of all units, not only
input units, stay perfectly centered during training). Thus the
quasi-diagonal backpropagated metric gradient can be seen as ``the
invariant way'' to write the diagonal Gauss--Newton method, while
performance of the latter is not at all invariant.

We also compared the exact unitwise natural gradient obtained thanks to
Proposition~\ref{prop:fishisfish}, to a variant of the natural gradient
in which only the gradient terms $b\transp{b}$ corresponding to the target
for each sample are added to the Fisher matrix (\cite{APF00,TONGA} and
Section~\ref{sec:usualfish} below). The latter,
when implemented unitwise, performs rather poorly on this auto-encoding
task, with a log loss of about $25$ to $28$ bits per sample. The reason,
discussed in Section~\ref{sec:exp}, may be that quality of this
approximation to the Fisher matrix strongly depends on output
dimensionality.

One lesson from the numerical experiments is that the regularization
term $\eps \Id$ added to the matrices, needed to prevent bad
behavior upon inversion, formally breaks the invariance properties:
individual trajectories in sigmoid or tanh implementations, initialized
in the same way, start to differ after a dozen iterations. Still, overall
performance is not affected and is the same in both implementations
(Figure~\ref{fig:invtraj}, p.~\pageref{fig:invtraj}).

Though the quasi-diagonal methods perform well, the only methods to
sometimes reach very small values of the loss function on this example
(less than $0.1$ bit per sample) are the unitwise natural
gradient and the backpropagated metric, which at each unit maintain a
full matrix over the incoming parameters and thus achieve invariance
under affine recombination of incoming signals. These two methods are
relevant only when network connectivity is not too high. This highlights
the interest of sparsely connected networks from a theoretical viewpoint.

\subsection*{Notation for neural networks}

Consider a directed neural network model: a set $\L$ of units together
with a set of directed edges $i\to j$ for $i,j\in\L$, without cycle. Let
$\Lout$ be the output units, that is, the units with no outgoing
edges\footnote{This restriction on output units is not necessary (any
unit could be an output unit) but
simplifies notation.},
and similarly let $\Lin$ be the set of units without incoming edges.
Let $s:\R \to \R$ be an \emph{activation function}. 
Given an activation level for the input
units, each unit $j$ gets an
activation level
\begin{equation}
a_j=\actf\left(\ssum_{i\to j} w_{ij} a_i\right)
\end{equation}
depending on the activation levels of the units $i$ pointing to $j$ and on the firing coefficients
$w_{ij}$ from\footnote{What is $w_{ij}$ for some authors is $w_{ji}$ for
others. Our convention is the same as \cite{RN2} but for instance \cite{LBOM96} follows the
opposite convention.} $i$ to $j$.
\emph{Biases} are treated as the weights $w_{0j}$ from
a special always-activated unit $0$ ($a_0\equiv 1$) connected to every
other unit. A common choice
for the activation function $s$ is the \emph{logistic} \cite{RN2}
function
$\actf(V)=\frac{\e^V}{1+\e^V}=\frac{1}{1+\e^{-V}}$, although for instance
\cite{LBOM96} recommends the hyperbolic tangent
$\actf(V)=\tanh(V)$, related to the logistic by
$\tanh(V)=2(\frac{1}{1+\e^{-2V}})-1$.
We refer to
\cite{RN2}, which we mostly follow with minor changes in notation.

For a given
non-input unit
$j$, we call the parameters $w_{0j}$ and $w_{ij}$ for $i\to j$ the set of
\emph{incoming parameters} to unit $j$.

The dataset for this network is a set $\D$ of inputs, where each input
$x\in \R^{\Lin}$ is a real-valued vector over the input layer. For each
input is given a target $y$ in an arbitrary space.
We view the network as a probabilistic generative model: 
given an input $a_i=x_i$ on the input layer $\Lin$, we assume that the
activations of the output layer are interpreted in a fixed way as a
probability distribution $\omega(y)$ over the target space. The goal is
to maximize the probability to output $y$ on input $x$: we define the
loss function
\begin{equation}
\ell(\omega,y)\deq -\ln \omega(y)
\end{equation}
the sum of which over the dataset is to be minimized. For instance,
interpreting the output layer activities $(a_k)_{k\in\Lout}$ as Gaussian
variables with mean $a_k$ and variance $1$ leads to a quadratic loss
function $\ell$.


\begin{ex}[ (Square-loss, Bernoulli, and two classification
interpretations)]
The \emph{square-loss interpretation} of the output layer sends the
activations $(a_k)_{k\in\Lout}$ of the output layer to a random variable
$Y=(Y_k)_{k\in \Lout}$ of independent Gaussian variables, where each $Y_k\sim
\mathcal{N}(a_k,1)$ is a Gaussian of mean $a_k$.

Assume that the activities $a_i$ of units in the network lie in $[0;1]$.
The \emph{Bernoulli interpretation} of the output layer is a Bernoulli
distribution as follows:
given the activations $(a_k)_{k\in\Lout}$ of the output layer, the final
output is
a $\{0,1\}^{\Lout}$-valued random variable
$Y=(Y_k)_{k\in
\Lout}$ of independent Bernoulli variables, where the activations $a_k$ give the probability
to have $Y_k=1$, namely $\Pr(Y_k=1)=a_k$.

For classification, the interpretation must send the output activations
$(a_k)_{k\in\Lout}$ to a probability distribution over
the indices $k$ in the output layer. In the \emph{softmax
interpretation}\footnote{Usually combined
with a linear activation function ($\actf=\Id$) on the last layer}, the probability of class $k\in\Lout$ is
$\e^{a_k}/\sum_{k'\in\Lout} \e^{a_{k'}}$. In the \emph{spherical
interpretation}\footnote{This latter example is motivated by a theoretical argument:
the set of probability
distributions over a finite set, equipped with its Fisher metric, is
isometric to the positive quadrant in a sphere and so is naturally
parametrized by numbers $a_k$ with $\sum a_k^2=1$, and these variables
yield a slightly simpler expression for the Fisher matrix. Besides, taking squares
gives a boost to the most activated output unit, in a smooth way, as in
the softmax interpretation.},
the probability of class $k\in
\Lout$ is $a_k^2/(\sum_{k'\in\Lout} a_{k'}^2)$.
\end{ex}

Remark~\ref{rem:outputparams} covers the case when the interpretation
depends on extra parameters $\vartheta$, such as a softmax
$\Pr(k)=\e^{\vartheta_k a_k}/\sum_{k'\in\Lout} \e^{\vartheta_{k'} a_{k'}}$
with trainable weights $\vartheta_k$.

\paragraph{Backpropagation.} A common way to train the network on a given target value $y$ is by
backpropagation, which amounts to a gradient descent over the parameters
$w_{ij}$. For a given loss function, define the backpropagated value $b_i$ 
at each unit
$i$ by
\begin{equation}
\label{eq:b}
b_i \deq -\frac{\partial \ell}{\partial a_i}
\end{equation}
so that $b_i$ indicates how we should 
modify the activities to decrease $\ell$. Then the value of $b_i$ satisfy
the \emph{backpropagation equations} \cite{RN2} from the output layer:
\begin{equation}
b_i= \sum_{j,\,i\to j} w_{ij}\deractf_j b_j
\qquad \text{for $i\not\in \Lout$}
\end{equation}
where the activation levels $a_i$ have first been computed by forward
propagation, and where let $r_j$ stand for the value of the derivative of
the activation function at unit $j$:
\begin{equation}
\label{eq:r}
\deractf_j\deq \actf'\left( \sum_{i\to j} w_{ij}a_i\right)=
\begin{cases}
a_j(1-a_j) & \text{for sigmoid activation function}
\\
1-a_j^2 & \text{for tanh activation function}
\end{cases}
\end{equation}
The backpropagated values on the output layer are computed directly by
\eqref{eq:b}, for instance,
\begin{equation}
b_i=
\begin{cases}
y_i-a_i & \text{(square-loss interpretation)}
\\
\frac{y_i-a_i}{a_i(1-a_i)} & \text{(Bernoulli interpretation)}
\\
y_i-\frac{\e^{a_i}}{\sum_{k\in\Lout}\e^{a_k}} &
\text{(softmax interpretation)}
\\
\frac{2y_i}{a_i}-\frac{2a_i}{\sum_{k\in\Lout}a_k^2} &
\text{(spherical interpretation)}
\end{cases}
\end{equation}
for $i\in\Lout$.

The backpropagated values are used to compute the gradient of the loss
function with respect to the parameters $w_{ij}$. Indeed we have
\begin{equation}
\frac{\partial \ell}{\partial w_{ij}}=-a_i\deractf_jb_j
\end{equation}
for each edge $(ij)$ in the network. (This includes the bias $w_{0j}$
using $a_0\equiv 1$.)

It is sometimes more convenient to work with the reduced
variables 
\begin{equation}
\tilde b_i\deq \deractf_ib_i
\end{equation}
which satisfy the
backpropagation equation
\begin{equation}
\label{eq:btilde}
\tilde b_i=\deractf_i\sum_{j,\,i\to j} w_{ij} \tilde b_j
\qquad \text{for $i\not\in \Lout$}
\end{equation}
and
\begin{equation}
\label{eq:gradtilde}
\frac{\partial \ell}{\partial w_{ij}}=-a_i\tilde b_j
\end{equation}

The gradient descent with learning rate $\eta>0$ is then defined as the
following update on the firing coefficients:
\begin{equation}
w_{ij}\gets w_{ij}- \eta\frac{\partial \ell}{\partial w_{ij}}=w_{ij}+\eta\,
a_i\deractf_j b_j= w_{ij}+\eta\,
a_i \tilde b_j
\end{equation}


\section{Four invariant gradient algorithms}
\label{sec:algos}

We now describe four gradient algorithms for network training: the
\emph{unitwise natural gradient}, the \emph{quasi-diagonal natural
gradient}, the \emph{backpropagated metric gradient}, and the
\emph{quasi-diagonal backpropagated metric gradient}. Each of these
algorithms is adapted to a different scalability constraint. The unitwise
natural gradient requires low connectivity and a small output layer; the
quasi-diagonal natural gradient requires a small output layer; the
backpropagated metric gradient requires low connectivity; the
quasi-diagonal backpropagated metric gradient has the same asymptotic
complexity as backpropagation.

These algorithms are the implementation of the more general versions
described in Section~\ref{sec:invmet}. As they are designed for
invariance properties, implementing them using either sigmoid or tanh
activation function should result in the same output, learning
trajectories, and performance, provided the initialization is changed
accordingly (Section~\ref{sec:inv}). However, the Bernoulli and classification interpretations of the output layer
assumes that the activities lie in $[0;1]$, as in
sigmoid activation.

We first present the algorithms in a batch version. It is straightforward
to adapt them to use random mini-batches from the dataset. In
Section~\ref{sec:online} they are also adapted to an online setting: this can
be done using standard techniques because the main quantities involved
take the form of averages over the dataset, which can be updated online.

\subsection{Unitwise natural gradient and quasi-diagonal natural
gradient}

The unitwise natural gradient has been proposed as far back as
\cite{Kur94} to train neural networks; however the presentation in
\cite{Kur94} is limited to networks with only one hidden layer, because
it relies on an explicit symbolic computation of entries of the Fisher
matrix. Proposition~\ref{prop:fishisfish} below allows for an efficient
computation of the \emph{exact} Fisher information matrix by doing $n\out$ distinct
backpropagations for each sample in the dataset. This relies on linearity
of backpropagation, as follows.

\begin{defi}[ (Backpropagation transfer rates)]
\label{def:trates}
Fix an input $x$ for the network and compute the activities by forward
propagation.
Let $k$ be a unit in the network and $k\out$ be a unit in the output
layer. Define the \emph{backpropagation transfer rates} $J_k^{k\out}$
from $k\out$ to $k$ by backpropagating the value $1$ at
$k\out$. Formally:
\begin{equation}
\begin{cases}
J^{k\out}_{k\out}\deq 1\quad,\qquad
J^{k\out}_k\deq 0,
& \text{for $k\neq k\out$ in the output layer $\Lout$}
\\
J^{k\out}_k\deq \sum_{j,\,k\to j} w_{kj} \,\deractf_j\,J^{k\out}_j
& \text{for non-output units $k$}
\end{cases}
\end{equation}
where $\deractf_j$ is the derivative of the activation function, given by \eqref{eq:r}.
\end{defi}

These transfer rates have the property that if backpropagation
values $b$ are set on the output layer, then $b_k=\sum_{k\out\in \Lout}
J_k^{k\out} b_{k\out}$ for any unit $k$ (see also \cite[Section 7.2]{LBOM96}).

Computation of the transfer rates can be done by $n\out$ distinct
backpropagations. There are further simplifications, since the transfer
rates for $k$ in the input layer are never used (as there are no incoming
parameters), and the transfer rates on the last hidden layer are readily
computed as $J^{k\out}_k=w_{kk\out} \deractf_{k\out}$. Thus it is
enough to backpropagate the transfer rates from the last hidden layer to
the first hidden layer. In particular,  with only one hidden layer (the case
considered in \cite{Kur94} for the Fisher matrix) no backpropagation is
needed.

\begin{defi}[ (Fisher modulus)]
\label{def:fishmod}
Fix an input $x$ for the network and compute the activities by forward 
propagation. For each unit $k$ in the network, define the \emph{Fisher
modulus} $\Phi_k(x)$ of unit $k$ on input $x$ as follows, depending on output layer
interpretation.
\begin{itemize}
\item For the Bernoulli interpretation, set
\begin{equation}
\Phi_k(x)\deq \sum_{k\out\in \Lout}
\frac{(J_k^{k\out})^2}{a_{k\out}(1-a_{k\out})}
\end{equation}

\item For the square-loss interpretation, set
\begin{equation}
\Phi_k(x)\deq \sum_{k\out\in \Lout} (J_k^{k\out})^2
\end{equation}

\item For the softmax interpretation, set
\begin{equation}
\Phi_k(x)\deq \frac{1}{S}\sum_{k\out\in\Lout}\e^{a_{k\out}}(J_k^{k\out})^2-\frac{1}{S^2}
\left(\sum_{k\out\in\Lout} \e^{a_{k\out}}J_k^{k\out}\right)^2
\end{equation}
where $S\deq\sum_{k\out\in\Lout} \e^{a_{k\out}}$.

\item For the spherical interpretation, set
\begin{equation}
\Phi_k(x)\deq \frac{4}{S}\sum_{k\out\in\Lout}(J_k^{k\out})^2-\frac{4}{S^2}
\left(\sum_{k\out\in\Lout} a_{k\out}J_k^{k\out}\right)^2
\end{equation}
where $S\deq\sum_{k\out\in\Lout} a_{k\out}^2$.
\end{itemize}
\end{defi}

\begin{defi}[ (Unitwise Fisher matrix)]
Let $k$ be a unit in the network. Let $E_k$ be the set of incoming units 
to $k$ (including the always-activated unit $0$). The \emph{unitwise
Fisher matrix} at unit $k$ is the $(\#E_k)\times (\#E_k)$ matrix $F^{(k)}$ defined by
\begin{equation}
F^{(k)}_{ij}\deq \E_{x\in \D} \, a_ia_j \deractf_k^2 \Phi_k
\end{equation}
for $i$ and $j$ in $E_k$ (including the unit $0$ with $a_0\equiv 1$),
with $\Phi_k$ the Fisher modulus of Definition~\ref{def:fishmod}.
Here $\E_{x\in \D}$ represents the average
over samples $x$ in the dataset (all the terms $a_i$, $a_j$, $\deractf_k$,
$\Phi_k$ depend on the input to the network). 
\end{defi}

By Proposition~\ref{prop:fishisfish} below, $F^{(k)}_{ij}$ is
the block of the Fisher information matrix associated with the
incoming parameters to $k$, hence the name.

\begin{defi}[ (Unitwise natural gradient)]
\label{def:unatgrad}
The \emph{unitwise natural gradient} with learning rate $\eta>0$ updates the parameters of the
network as follows.

For each unit $k$, define the vector $G^{(k)}$ by
\begin{equation}
G^{(k)}_i\deq -\E_{x\in \D}\frac{\partial \ell(y)}{\partial w_{ik}}
=\E_{x\in \D} \,a_i \deractf_kb_k
\label{eq:defGk}
\end{equation}
where $b_k$ is the backpropagated value at $k$ obtained from the target
$y$ associated with $x$, and where $i$ runs over the incoming units to
$k$ (including the always-activated unit $i=0$).
Compute the vector
\begin{equation}
\label{eq:locnatgrad}
\d w^{(k)}\deq (F^{(k)})^{-1} G^{(k)}
\end{equation}

Then the parameters of the network are updated by
\begin{equation}
\label{eq:locnatstep}
w_{ik}\gets w_{ik}+\eta \d w_i^{(k)}
\end{equation}
\end{defi}

The asymptotic algorithmic cost of the unitwise natural gradient is as
follows. Computing $\tau$ requires $n\out$ distinct backpropagations for
each input $x$. For
a network with $n$ units, $n\out$ output units, and at most $d$ incoming
connections per unit, this costs $O(n\out nd)$ per data sample (this can be reduced in some
cases, as discussed above). Computing $\Phi$ takes $O(nn\out)$. Computing
$F^{(k)}_{ij}$ for every $k$ takes $O(nd^2)$.  Computing the gradient
requires inverting these matrices, which takes $O(nd^3)$ but is done only
once in each (mini-)batch, thus if the size of batches is larger than $d$
this cost is negligible; if the size of batches is smaller than $d$ or in
an online setting, inversion can be done recursively using the Sherman--Morrison
formula at a cost of $O(nd^2)$ per data sample. So the overall cost of
this gradient is $O(nd^2+n\out nd)$ per data sample.

This algorithmic cost is fine if connectivity is low. We now define a
more light-weight version in case connectivity is large. Its
computational cost is equivalent to that of ordinary backpropagation
provided the output layer is small.

\begin{defi}[ (Quasi-diagonal natural gradient)]
The \emph{quasi-diagonal natural gradient} with learning rate $\eta>0$ updates the parameters of the
network as follows.

For each unit $k$, compute only the entries $F_{00}^{(k)}$,
$F_{0i}^{(k)}$, and $F_{ii}^{(k)}$ of the unitwise Fisher matrix at $k$.
Define the vector $G^{(k)}$ as in~\eqref{eq:defGk} above. Define the
vector $\d w^{(k)}$ by
\begin{align}
\label{eq:qdfishupdate1}
\d w_i^{(k)}&=\frac{G_i^{(k)}F_{00}^{(k)}-G_0^{(k)}F_{0i}^{(k)}}{F_{ii}^{(k)}F_{00}^{(k)}-(F_{0i}^{(k)})^2}
\qquad\text{ for $i\neq 0$}
\\
\label{eq:qdfishupdate2}
\d w_0^{(k)}&=\frac{G_0^{(k)}}{F_{00}^{(k)}}-\sum_{i\neq
0}\frac{F_{0i}^{(k)}}{F_{00}^{(k)}}\d w_i^{(k)}
\end{align}
and update the parameters of the network by
\begin{equation}
w_{ik}\gets w_{ik}+\eta \d w_i^{(k)}
\end{equation}
\end{defi}

With respect to the unitwise natural gradient, the algorithmic cost does
not involve any $O(nd^2)$ terms because we only compute $O(d)$ entries of
the matrices $F$ and do not require a matrix inversion. The variables
$\tau$ and $\Phi$ still need to be computed. Thus the overall complexity
is reduced to $O(n\out n d)$.

The quasi-diagonal formulas may seem arbitrary. If we remember that (omitting the
superscript $(k)$) $F_{00}=\E_{x\in \D} \, \deractf_k^2\,\Phi_k$,
$F_{0i}=\E_{x\in \D} \, a_i \deractf_k^2\,\Phi_k$, and $ F_{ii}=\E_{x\in
\D} \, a_i^2 \,\deractf_k^2\,\Phi_k$, we can consider these sums as
expectations over the dataset  with weights $\deractf_k^2\Phi_k$. Then
the weighted average of $a_i$ is $A_i=F_{0i}/F_{00}$ and its weighted
variance is $V_i=F_{ii}/F_{00}-A_i^2$ so that we have
\begin{equation}
\d w_i=\frac{G_i-G_0A_i}{F_{00}V_i}
\end{equation}
and in particular the denominator is always positive unless the activity
of
unit $i$ is constant (in this case, the numerator vanishes too).

A possible interpretation is as follows: If the activity $a_i$ of unit
$i$ is not centered on average over the dataset (with the weights above),
increasing a weight $w_{ik}$ not only increases the influence of unit $i$
over unit $k$, but also shifts the average activity of unit $k$, which
may not be desirable. Using the method above, if $a_i$ is not centered, when
we change $w_{ik}$ a corresponding term is automatically subtracted from
the bias $w_{0k}$ so as not to shift the average activity of unit $k$, as
discussed in the introduction. On the other hand if the activity $a_i$
is centered, then the
update is diagonal, and scaled by the inverse ``variance'' $1/V_i$.

\subsection{Backpropagated metric gradient and quasi-diagonal
backpropagated metric gradient}

Computing the Fisher matrix as above requires performing $n\out$ backpropagations
for each sample. If one tries to compute the Fisher modulus $\Phi$
directly by backpropagation, the backpropagation equation involves
cross-terms between different units. Neglecting these cross-terms results
in a simpler version of the Fisher modulus which can be computed in one
backward pass; the corresponding backpropagation equation is well-known
as an approximation of the Hessian \cite[Section~7]{LBOM96}. It turns out
this quantity and the associated metric are still intrinsic.

\begin{defi}[ (Backpropagated modulus)]
\label{def:bpmod}
Fix an input $x$ for the network and compute the activities by forward
propagation.
Define the \emph{backpropagated modulus} $m_k(x)$ for each unit $k$ by
\begin{equation}
m_k(x)\deq \sum_{j,\, k\to j} w_{kj}^2 \,\deractf_j^2 \, m_j(x) 
\end{equation}
if $k$ is not an output unit, and, depending on output interpretation,
\begin{equation}
m_k(x)\deq
\begin{cases}
\frac{1}{a_k(1-a_k)}& \text{(Bernoulli)}
\\
1 & \text{(square-loss)}
\\
\frac{\e^{a_k}}{S}(1-\frac{\e^{a_k}}{S}), \quad S=\sum_{k\out\in\Lout}
\e^{a_{k\out}}
& \text{(softmax)}
\\
\frac{4}{S}(1-\frac{a_k^2}{S}), \quad S=\sum_{k\out\in\Lout} a_{k\out}^2
& \text{(spherical)}
\end{cases}
\end{equation}
for $k$ in the output layer.
\end{defi}

\begin{defi}[ (Backpropagated metric)]
Let $k$ be a unit in the network. Let $E_k$ be the set of incoming units 
to $k$ (including the always-activated unit $0$). The
\emph{backpropagated metric} at unit $k$ is the $(\#E_k)\times (\#E_k)$ matrix $M^{(k)}$ defined by
\begin{equation}
M^{(k)}_{ij}\deq \E_{x\in \D} \, a_ia_j \deractf_k^2 m_k
\end{equation}
for $i$ and $j$ in $E_k$ (including the unit $0$ with $a_0\equiv 1$).
Here $\E_{x\in \D}$ represents the average
over samples $x$ in the dataset (all the terms $a_i$, $a_j$, $a_k$,
$m_k$ depend on the input to the network). 
\end{defi}

The backpropagated metric gradient can thus be described as an
approximate, blockwise Hessian method in which the Hessian is
approximated by the Gauss--Newton technique with, in addition, cross-unit
terms neglected. Such a method turns out to be intrinsic.

\begin{defi}[ (Backpropagated metric gradient)]
\label{def:bpmgrad}
The \emph{backpropagated metric gradient} with learning rate $\eta>0$ updates the parameters of the
network as follows.

For each unit $k$, define the vector $G^{(k)}$ by
\begin{equation}
G^{(k)}_i\deq -\E_{x\in \D}\frac{\partial \ell(y)}{\partial w_{ik}}
=\E_{x\in \D} \,a_i \deractf_k b_k
\end{equation}
where $b_k$ is the backpropagated value at $k$ obtained from the target
$y$ associated with $x$, and where $i$ runs over the incoming units to
$k$ (including the always-activated unit $i=0$).
Compute the vector
\begin{equation}
\label{eq:bpmgrad}
\d w^{(k)}\deq (M^{(k)})^{-1} G^{(k)}
\end{equation}

Then the parameters of the network are updated by
\begin{equation}
w_{ik}\gets w_{ik}+\eta \d w_i^{(k)}
\end{equation}
\end{defi}

The algorithmic cost of the backpropagated metric gradient is $O(nd^2)$
per data sample, with notation as above. Indeed, computing $m$ costs the
same as a backpropagation pass, namely $O(nd)$. Computing the matrices
$M$ costs $O(nd^2)$. Inverting the matrices has no impact on the overall
complexity, as explained after
Definition~\ref{def:unatgrad}. This cost is acceptable for small
$d$ (sparsely connected networks). For large $d$ we define the following.

\begin{defi}[ (Quasi-diagonal backpropagated metric gradient)]
The \emph{quasi-diagonal backpropagated metric gradient} with learning rate $\eta>0$ updates the parameters of the
network as follows.

For each unit $k$, compute only the entries $M_{00}^{(k)}$,
$M_{0i}^{(k)}$, and $M_{ii}^{(k)}$ of backpropagated metric at $k$.
Define the vector $G^{(k)}$ as in~\eqref{eq:defGk} above. Define the
vector $\d w^{(k)}$ by
\begin{align}
\label{eq:qdbpmupdate1}
\d w_i^{(k)}&=\frac{G_i^{(k)}M_{00}^{(k)}-G_0^{(k)}M_{0i}^{(k)}}{M_{ii}^{(k)}M_{00}^{(k)}-(M_{0i}^{(k)})^2}
\qquad\text{ for $i\neq 0$}
\\
\label{eq:qdbpmupdate2}
\d w_0^{(k)}&=\frac{G_0^{(k)}}{M_{00}^{(k)}}-\sum_{i\neq
0}\frac{M_{0i}^{(k)}}{M_{00}^{(k)}}\d w_i^{(k)}
\end{align}
and update the parameters of the network by
\begin{equation}
w_{ik}\gets w_{ik}+\eta \d w_i^{(k)}
\end{equation}
\end{defi}

The same remarks as for the quasi-diagonal natural gradient apply for
interpreting the various terms. The
denominator $M_{ii}^{(k)}M_{00}^{(k)}-(M_{0i}^{(k)})^2$ can be seen as a
weighted variance of the activity of unit $i$, and is positive unless
$a_i$ is constant over the dataset. The contribution of $\d w_i^{(k)}$ to
$\d w_0^{(k)}$ compensates the change of average activity induced by a
change of $w_{ik}$.

The asymptotic cost of this update is $O(nd)$ per data sample,
as for backpropagation. 

If, in the quasi-diagonal backpropagated metric gradient, the
non-diagonal terms are neglected ($M^{(k)}_{0i}$ is set to $0$), then
this reduces to the diagonal Gauss--Newton method equations from \cite[Section
7.4]{LBOM96} (also used for instance in \cite{peskylr}).

\begin{rem}
On the incoming parameters to the output layer, the unitwise natural
gradient and the backpropagated metric gradient coincide if the
Bernoulli or square-loss interpretation is used. (Actually, with learning
rate $\eta=1$ they also both
coincide with the Newton method restricted to the output layer
parameters.)
\end{rem}

\begin{rem}
Since these algorithms rely on inverting matrices, regularization is an
issue. In practice, terms $\eps\Id$ have to be added to $F$ and $M$
before inversion; terms $\eps$ have to be added to the diagonal terms
$F^{(k)}_{00}$, $F^{(k)}_{ii}$, $M^{(k)}_{00}$ and $M^{(k)}_{ii}$ in
the quasi-diagonal reduction. This formally breaks the invariance
properties. Section~\ref{sec:bestfit} elaborates on this. Still, this
operation preserves the guarantee of improvement for small enough
learning rates. 
\end{rem}

\subsection{Adaptation to an online setting}
\label{sec:online}

The unitwise natural gradient and unitwise backpropagated metric gradient
both update the weights by
\begin{equation}
\delta w=A^{-1}G
\end{equation}
with $G$ the gradient of
the loss function over the dataset, and $A$ a positive-definite, symmetric
matrix. A key feature here is that the matrix $A$ takes the form of an expectation over the dataset:
$F^{(k)}_{ij}= \E_{x\in \D} \, a_ia_j\deractf_k^2 \Phi_k$ for the
Fisher matrix, and $M^{(k)}_{ij}= \E_{x\in \D} \, a_ia_j\deractf_k^2
m_k$ for the backpropagated metric.

Any such algorithm can be turned online using a standard construction as
follows
(compare e.g.\ \cite{TONGA}). Another possibility is, of course, to use
mini-batches.

In the following, $A$ stands for either the unitwise Fisher matrix or the
backpropagated metric. Let $A(x)$ be the corresponding contribution of
each input $x$ in the expectation, namely, $A(x)^{(k)}_{ij}=a_ia_j\deractf_k^2
\Phi_k$ for the Fisher metric and $A(x)^{(k)}_{ij}=a_ia_j\deractf_k^2
m_k$ for the backpropagated metric, so that $A=\E_{x\in \D} \,A(x)$.

At each step $t$, we use one new sample in the dataset,
update an estimate $A^{(t)}$ of $A$, and follow a gradient step for this
sample, as follows.

\begin{itemize}
\item Initialize the matrix $A^{(0)}$ by using a small subsample
$\D_\mathrm{init}\subset \D$, for instance the first $n_\mathrm{init}$
samples in the dataset:
\begin{equation}
A^{(0)}\deq \E_{x\in \D_\mathrm{init}} \,A(x)
\end{equation}
\item Fix a discount factor $0<\gamma<1$. For each new sample $x_t$,
compute its contribution $A(x_t)$ and update $A$ by
\begin{equation}
A^{(t)}\deq(1-\gamma)A^{(t-1)}+\gamma A(x_t)
\end{equation}
\item Compute the inverse of $A^{(t)}$ from the inverse of $A^{(t-1)}$
by the Sherman--Morrison formula at each unit, using that $A(x_t)$ is a
rank-one matrix at each unit. (This way matrix inversion is no more costly
than the rest of the step.)
\item Compute the negative gradient $G(x_t)$ of the loss function on input $x_t$
by backpropagation.
\item Update the parameters by
\begin{equation}
w\gets w+\eta_t (A^{(t)})^{-1} G(x_t)
\end{equation}
where $\eta_t$ is the learning rate.
\item For the quasi-diagonal reductions of the algorithms, only the entries
$A_{00}$, $A_{ii}$ and $A_{0i}$ of the matrix $A$ are updated at each
step. No matrix inversion is required for the update
equations~\eqref{eq:qdfishupdate1}--\eqref{eq:qdfishupdate2} and
\eqref{eq:qdbpmupdate1}--\eqref{eq:qdbpmupdate2}.
\end{itemize}

We could also initialize $A^{(0)}$ to
a simple matrix like $\Id$, but this breaks the invariance properties of
the algorithms.

The update rule for $A^{(t)}$ depends on the discount factor $\gamma$. It should be large enough
so that a large number of data points contribute to the computation of
$A$, but small enough to be reactive so that $A$ evolves as training gets
along. In our setting, from the particular form of $A(x)$ at a unit $k$
we see that each $A(x_t)$ contributes a rank-one matrix. This means that
$\gamma$ should be much smaller than $1/n_k$ with $n_k$ the number of
parameters at unit $k$, because otherwise the estimated matrix $A^{(t)}$
will be close to a low-rank matrix, and presumably a poor approximation
of the true matrix $A$, unreliable for numerical inversion.

The same remark applies to the number $n_\mathrm{init}$ of samples used
for initialization: it should be somewhat
larger than the number of parameters at each unit, otherwise $A^{(0)}$
will be of low rank.

\section{Constructing invariant algorithms: Riemannian metrics for neural
networks}
\label{sec:invmet}

\subsection{Gradient descents and metrics, natural metrics}
\label{sec:gradintro}

The gradient of a function $f$
on $\R^d$ gives the direction of steepest ascent: among all (very small)
vectors with a given norm, it provides the greatest variation of $f$.
Formally, the gradient $\grad f$ of a smooth function $f$ is defined by the
property that
\begin{equation}
f(x+\eps v)=f(x)+\eps\langle \grad f,v\rangle +O(\eps^2)
\end{equation}
for any vector $v$, for small enough $\eps$. This depends on the
choice of a scalar product $\langle \cdot,\cdot \rangle$.
In an orthonormal basis, the coordinates of the gradient are simply the
partial derivatives $\partial
f/\partial x_i$ so that gradient descent is
\begin{equation}
x_i\gets x_i-\eta\, \partial f/\partial x_i
\end{equation}
in an orthonormal basis.

For a given norm of the vector $v$, 
the quantity $\langle \grad f,v\rangle$ is maximal when $v$ is collinear
with $\grad f$: so the gradient $\grad f$ indeed gives the direction of
steepest ascent among all vectors with a given norm. The gradient step
$x\gets x-\eta\,\nabla f$ can actually be rewritten (for small enough
$\eta$, up to $O(\eta^2)$ and for regular enough functions $f$) as
\begin{equation}
\label{eq:penmove}
x\gets \argmin_y \left\{ f(y)+\frac{1}{2\eta}\norm{y-x}^2\right\}
\end{equation}
namely, the gradient descent moves into the direction yielding the
smallest values of $f$, penalized by the distance from the current
point\footnote{This can be used to define or study the direction of the
gradient in more general metric spaces (e.g., \cite[Chapter 2]{AGS05}).}.
This makes it clear
how the choice of the scalar product will influence the direction of the
gradient $\grad f$: indeed, another scalar product will define another
norm $\norm{v}^2=\langle v,v\rangle$ for the vector $v$, so that the steepest
direction among all vectors with a given norm will not be the same.
The norm thus defines
directions $v$ in which it is ``cheap'' or ''expensive'' to move; the
gradient is the direction of steepest ascent taking this into account.

If we happen to work in a non-orthonormal basis of vectors
$v_1,\ldots,v_d$, the gradient is given by
$A^{-1} \partial f/\partial x$ where $\partial f/\partial x$ is the
vector of partial derivatives with respect to the coordinates $x_i$ in
the basis, and $A$ is the symmetric matrix made of the
scalar products of the basis vectors with themselves: $A_{ij}\deq
\scal{v_i}{v_j}$. Indeed, the change of variable $\tilde x\deq A^{1/2}x$
provides an orthonormal basis, thus gradient descent for $\tilde x$ is
$\tilde x\gets \tilde x-\eta\,\partial f/\partial \tilde x=\tilde x-\eta
A^{-1/2} \partial f/\partial x$.
Thus, translating back on the variable $x$, the gradient descent of $f$ takes the form
\begin{equation}
\label{eq:grad}
x\gets x-\eta\, A^{-1} \partial f/\partial x
\end{equation}
Conversely, we can start with a norm on $\R^d$ defined through a
positive-definite, symmetric matrix $A$ (which thus defines ``cheap'' and
''expensive'' directions). The gradient descent 
using this norm
will then be given by \eqref{eq:grad}.

So any update of the form \eqref{eq:grad} above with $A$ a symmetric, positive-definite
matrix can be seen as the gradient descent of $f$ using some norm. The
matrix $A$ may even depend on the current point $x$, defining a
\emph{Riemannian metric} in which the norm of a small change $\d x$ of
$x$ is
\begin{equation}
\norm{\d x}^2\deq\sum_{ij} \d x_i A_{ij}(x) \d x_j = \transp{\d x} A(x)\, \d x
\end{equation}

An important feature of gradient descent, in any metric, is that for
$\eta$ small enough, the step is guaranteed to decrease the value of $f$.

The choice of a metric $A$ represents the choice of a norm for vectors in
parameter space. Conversely, choosing a set of parameters and using the
``naive'' gradient ascent for these parameters amounts to implicitly deciding that
these parameters form an orthonormal basis.

For the neural network above,
the gradient ascent $w_{ij}\gets w_{ij}- \eta\frac{\partial
\ell}{\partial w_{ij}}$ corresponds to the choice of $A=\Id$ on parameter
space, namely, the norm of a change of parameters
$\dw=(\dw_{ij})$ is $\norm{\dw}^2\deq \sum
\abs{\dw_{ij}}^2$. Thus, gradient descent for $w_{ij}$ gives the best
$\dw$ for a given norm $\norm{\dw}$, that is, the best change of
$f$ for a given change in the numerical value of the parameters.

\paragraph{Example: from sigmoid to tanh activation function.}
Neural networks using the sigmoid and $\tanh$ activation function are
defined, respectively, by
\begin{equation}
a_k=\sigm(\sum_{i,\,i\to k} a_iw_{ik})
\end{equation}
and
\begin{equation}
a'_k=\tanh(\sum_{i,\,i\to k} a'_i w'_{ik})
\end{equation}
(including the biases $w_{0k}$ and $w'_{0k}$). Since
$\tanh(x)=2\sigm(2x)-1$, the activities of the network correspond to
each other via $a'_k=2a_k-1$ for all $k$ if we set
\begin{equation}
\label{eq:sigmtanh1}
w'_{ik}=\frac{w_{ik}}{4}
\end{equation}
for $i\neq 0$, and
\begin{equation}
\label{eq:sigmtanh2}
w'_{0k}=\frac{w_{0k}}{2}+\frac14 \sum_{i\neq 0} w_{ik}
\end{equation}
for the biases.

Consequently, while the gradient for the sigmoid function will try to
improve performance while minimizing the change to the numerical values
of $w_{ik}$ and $w_{0k}$, the gradient for the tanh function will do
the same for the numerical values of $w'_{ik}$ and $w'_{0k}$, obviously
resulting in different updates. If we follow the $\tanh$ gradient and
rewrite it back in terms of the variables $w_{ik}$, we see that the
$\tanh$ update expressed in the variables $w_{ik}$ is
\begin{align}
w_{ik}\gets w_{ik} +16(\d w_{ik}-\frac12\d w_{0k}) \qquad (i\neq 0)
\end{align}
and
\begin{align}
w_{0k}\gets w_{0k}+4 \d w_{0k}-8\sum_{i\neq 0}(\d w_{ik}-\frac12 \d w_{0k})
\end{align}
where $\d w_{ik}$ is the update that would have been applied to $w_{ik}$
if we were following the standard sigmoid backpropagation. Indeed this
takes the form of a
symmetric matrix applied to $\d w_{ik}$ (the cross-contributions of $\d
w_{0k}$ to $w_{ik}$ and of $\d w_{ik}$ to $w_{0k}$ are the same).

Apart from an obvious speedup factor, an important difference 
between this update and ordinary (sigmoid)
backpropagation on the $w_{ik}$ is that
each time a weight $w_{ik}$ is updated, there is an opposite,
twice as small contribution to $w_{0k}$: in this sense, it is as if this
update assumes that the activities $a_i$ are centered around $1/2$ so
that when $w_{ik}$ gets changed to $w_{ik}+c$, one ``needs'' to add
$-c/2$ to the bias so that things stay the same on average.

\paragraph{Newton's method and gradient descent.} To find the minimum of
a function $f$ on $\R$, one can use the Newton method to solve $f'=0$,
namely, $x\gets x-f'(x)/f''(x)$. In higher dimension this becomes
\begin{equation}
x\gets x-(\Hess f)^{-1} \partial f/\partial x
\end{equation}
where $\partial f/\partial x$ is the vector of partial derivatives, and
$(\Hess f)_{ij}\deq \partial^2 f/\partial x_i \partial x_j$ is the
Hessian matrix of $f$.

Around a non-degenerate minimum of $f$, the Hessian $\Hess f$ will be a
positive-definite matrix. So the Newton method can be seen as a
gradient descent with learning rate $\eta=1$, in the metric $A=\Hess f$,
when one is close enough to a minimum.

\paragraph{Intrinsic metrics.}
There could be a lot of arguing and counter-arguing about the ``right''
way to
write the parameters with respect to which the gradient should be
taken.
The solution to avoid these choices is known: use metrics that depend on
what the system does, rather than on how the parameters are decomposed as
numbers.

The \emph{Fisher metric}, which defines a \emph{natural gradient}
\cite{Amari2000book}, is one
such metric. Namely: the size (norm) of a change of parameters is measured by
the change it induces on the probability distribution of the output of
the model. The symmetric matrix $A$ used in the gradient update is then
the \emph{Fisher information matrix}. We will use scaled-down versions of the Fisher metric for
better scalability.

We present another metric for neural networks, the \emph{backpropagated
metric}. The size of a change of parameters at a given unit is measured
by the effect it has on the units it directly influences, which is itself
measured recursively in the same way up to the output layer. The
matrix defining this metric is obtained by well-known equations related
to the Gauss--Newton approximation of the Hessian.

\subsection{Intrinsic metrics and their computation by backpropagation}
\label{sec:main}

Here we rewrite the definition of neural networks in the language of
differential manifolds and Riemannian geometry; this allows us to define metrics directly in an
intrinsic way.

Consider a neural-like network made of units influencing each other.
The activity of each unit $k$ takes values in a space $\A_k$ which we
assume to be a differentiable manifold
(typically $\R$ without a preferred origin and scale, but we allow room
for multidimensional
activations).
Suppose that the activation of the
network follows
\begin{equation}
\label{eq:genactfunc}
a_k=f^k_{\theta_k}(a_{i_1},\ldots,a_{i_{n_k}})
\end{equation}
where $a_{i_1},\ldots,a_{i_{n_k}}$ are the units pointing to $k$, and where
$f^k_{\theta_k}$ is a function from $\A_{i_1}\times\cdots\times \A_{i_{n_k}}$
to $\A_k$, depending on a parameter $\theta_k$ which itself belongs to a manifold
$\Theta_k$. Here we have no special, always-activated unit coding for
biases: the biases are a part of the parameters $\theta_k$.

We shall also assume that the output units in the network are interpreted
through a final decoding function
to produce an object $\omega=\omega((a_k)_{k\in\Lout})$ relevant to the
initial problem, also assumed to belong to a differentiable manifold.

To implement any gradient ascent over the parameters $\theta$, we first
need a (Riemannian) metric on the parameter space. Such a metric can be
defined by choosing a parametrization by $\R^d$ and deciding that the
elementary vectors of $\R^d$ are orthogonal, but this is not intrinsic:
different parametrizations will lead to different learning trajectories.

In this setting, an object is said to be \emph{intrinsic} if it does not
depend on a choice of parametrization of any of the manifolds involved
(activities, parameters, final output).
Hopefully, casting the activities as elements in an abstract manifold,
and writing intrinsic algorithms that will not depend on how this
manifold is represented as numbers, allows the algorithms to be agnostic
as to any physical interpretation of these activities (activation levels,
activation frequencies, log-frequencies, synchronized activity of a group
of neurons...)

We assume that we are given a meaningful Riemannian metric on the final output
$\omega$: that is, we know how to measure the size of a change in the
output.  For instance, if $\omega$ describes a probability distribution
over a target variable $y$, we can use the Fisher metric over $\omega$
(usually simpler to work with than the Fisher metric over the whole
network parameter space). In the case $\omega$ is a Gaussian of fixed
variance centered on the output values, this coincides with a Euclidean norm
on these values.

Then there are several possibilities to define intrinsic Riemannian metrics on
parameter space. The most direct one is the Fisher metric:
the output is seen as a function of all parameters, and the norm of a
change of parameter $\d\theta$ (over all parameters at once) is the
norm of the change it induces on the output. This is not scalable: for a
neural network with $n$ units, $n\out$ output units, and $d$ incoming edges per unit,
processing each data sample takes $O(n^2d^2+n^2n\out)$, compared to
$O(nd)$ for backpropagation.

A more scalable version is to break down the change of parameter into a
sum of changes of incoming parameters to each unit and take the Fisher
metric at each unit independently.
This is the unitwise Fisher metric. As we will see, it
scales well to sparsely connected networks if the output layer is not too
large: processing each data sample takes $O(nd^2+n\out nd)$.

An even simpler version is the backpropagated metric, defined by
backwards induction from the output layer: the norm of a change of
parameter on the output layer is the norm of the change it induces on the
final result, and the norm of a change of parameter at an internal unit
is the sum of the norm of the resulting changes at the units it
influences directly. Processing each data sample takes $O(nd^2)$.

Quasi-diagonal reduction (Section~\ref{sec:qd}) further produces simplified
intrinsic metrics in which the $O(nd^2)$ terms reduce to $O(nd)$.

\begin{enonce2}{Notation}
In what follows, we use the standard objects of differential geometry but
try to present them in an intuitive way; Appendix~\ref{sec:formal} gives
a fully formal treatment. The notation $\d a$, $\d
\theta$, $\d \omega$ denotes tangent vectors on the corresponding
manifolds (intuitively, differences between two very close values of $a$
or $\theta$ or $\omega$). The notation $\frac{\partial a_i}{\partial
a_k}$ denotes the differential (total derivative) of the activity $a_i$ seen as a function
of $a_k$. In a basis it is represented as the Jacobian matrix of partial
derivatives of the components of $a_i$ w.r.t.\ those of $a_k$ (if
activities are more than $1$-dimensional). In particular, for a numerical function $f$,
$\frac{\partial f}{\partial \theta}$ is represented as a row vector, and
for an infinitesimal change (tangent vector) $\d \theta$ we have $\d
f=\frac{\partial f}{\partial \theta}\d \theta$.
The various metrics involved are $(0,2)$-tensors, but we
use standard matrix notation for them. With this convention a metric
gradient descent on $\theta$ takes the form
$\theta\gets\theta-M(\theta)^{-1}\transp{\frac{\partial f}{\partial \theta}}$.
\end{enonce2}

\begin{defi}[ (Natural metric, unitwise natural metric, backpropagated
metric)]
Let $\norm{\d\omega}^2=\sum
\Omega_{ij}\d\omega_i\d\omega_j=\transp{\d\omega}\Omega\d\omega$ be a metric on the
final output of the network, given by the symmetric, positive-definite
matrix $\Omega$. We define three metrics on the parameter
set.

\begin{itemize}
\item 
The \emph{natural metric} on the parameter set $\theta=(\theta_k)$ is
defined as follows. Let $x$ be an input in the dataset $\D$ and let
$\omega(x)$ be the final output of the network run with input $x$ and
parameter $\theta$. Let $\d\theta$ be a variation of $\theta$ and let
$\d\omega(x)$ be the resulting variation of $\omega(x)$. Let
\begin{equation}
\datnatnorm{\d\theta}{x}^2 \deq \norm{\d\omega(x)}^2
\end{equation}
and then define the natural metric by
\begin{equation}
\natnorm{\d\theta}^2\deq \E_{x\in\D} \datnatnorm{\d\theta}{x}^2
\end{equation}

In matrix form, we have $\d\omega(x)=\frac{\partial \omega(x)}{\partial
\theta}\d\theta$ where $\frac{\partial\omega}{\partial \theta}$ is the
Jacobian matrix of
$\omega(x)$ as a function of $\theta$, so that the natural metric is
given by the matrix
\begin{equation}
\label{eq:natnorm}
\natnorm{\d\theta}^2
=\E_{x\sim \D}\transp{\d\theta}\transp{\frac{\partial\omega(x)}{\partial \theta}}
\Omega\,
\frac{\partial\omega(x)}{\partial \theta}
\d\theta
\end{equation}
The natural metric is given by a matrix of size $\dim\theta=\sum_k \dim
\theta_k$.

\item The \emph{unitwise natural metric} on the parameter set $\theta$ is
\begin{equation}
\unatnorm{\d\theta}^2 \deq \sum_k \natnorm{\d\theta_k}^2
\end{equation}
where $k$ runs over the units of the network and $\d\theta_k$ is the
variation of the incoming parameters to unit $k$.
This metric is given by keeping only the block-diagonal
terms incoming to each unit in the matrix defining the natural metric.

In case $\omega$ is a probability distribution and the metric $\Omega$ on
$\omega$ is the Fisher metric, we also call $\natnorm{\d\theta}$ and
$\unatnorm{\d\theta}$ the \emph{Fisher metric} and \emph{unitwise Fisher
metric}.

\item The \emph{backpropagated metric} on $\theta$ is defined as follows.
Let $x$ be an input in the data. We first define a metric on each of the
activities $a_k$, depending on the input $x$, working from the output
layer backwards.

Given a change $\d a_{k\out}$ in the activity at an output
unit $k\out$, let $\d \omega(x)$ be the corresponding change in the final
output and set
\begin{equation}
\datbpnorm{\d a_{k\out}}{x}^2\deq \norm{\d \omega(x)}^2
\end{equation}
The metric on internal units $k$ is defined as follows: Given a change
$\d a_k$ in the activity of unit $k$, let $\d a_i$ be the resulting
changes in the activities of units $k\to i$ directly influenced by $k$.
Define by induction from the output layer
\begin{equation}
\datbpnorm{\d a_{k}}{x}^2\deq \sum_{i,\,k\to i} \datbpnorm{\d a_i}{x}^2
\end{equation}
Given a change $\d\theta_k$ of the incoming parameters to unit $k$, let
$\d a_{k,x}$ be the change of activity of unit $k$ resulting from the
change $\d\theta_k$, when the network is run on input $x$.
Define the backpropagated metric by
\begin{equation}
\bpnorm{\d\theta_k}^2 \deq \E_{x\in\D}\datbpnorm{\d a_{k,x}}{x}^2
\end{equation}
and
\begin{equation}
\bpnorm{\d\theta}^2\deq \sum_{k\in \L} \bpnorm{\d\theta_k}^2
\end{equation}
\end{itemize}
\end{defi}

Another metric, the \emph{outer product} (OP) metric, can be
defined from slightly different ingredients. It corresponds to an often-used
variant of the natural gradient (e.g., \cite{APF00, TONGA}), in which the expectation under the
current probability distribution is replaced with a similar
term involving only the desired target $y$ for each input $x$
(more details in Section~\ref{sec:usualfish}). 
It can readily be computed by backpropagation from \eqref{eq:op}.

Whereas the metrics above depend on the actual output $\omega(x)$ for
each input $x$, together with a metric on $\omega$, but not on any target
value for $x$, the OP metric depends on a loss function
$\ell(\omega(x),y(x))$ encoding the deviation of $\omega(x)$ from a
desired target $y(x)$ for $x$; but not on a choice of metric for
$\omega$.

\begin{defi}[ (Outer product metric)]
\label{def:op}
For each input $x$ in the dataset $\D$, let $\omega(x)$ be the final
output of the network run with input $x$ and parameter $\theta$. Let
$\ell(\omega(x),y(x))$ be
the loss function measuring how $\omega(x)$ departs from the desired
output $y(x)$ for $x$.

The \emph{outer product metric} is defined as follows. Let
$\d\theta$ be a variation of $\theta$ and let $\d\ell_x$ be the
resulting variation of $\ell(\omega(x),y(x))$. Define
\begin{equation}
\label{eq:defop}
\opnorm{\d\theta}^2\deq \E_{x\in\D} (\d\ell_x)^2
\end{equation}
In matrix form, this metric is
\begin{equation}
\opnorm{\d\theta}^2 = 
\E_{x\sim \D}\transp{\d\theta}\,\transp{\frac{\partial\ell}{\partial \theta}}
\!\frac{\partial\ell}{\partial \theta}
\,\d\theta
\end{equation}
where $\frac{\partial\ell}{\partial \theta}$ is the row vector of
partial derivatives (the differential) of the loss function. Thus this
metric is given by the matrix 
\begin{equation}
\label{eq:op}
\E_{x\sim \D} \transp{\frac{\partial\ell}{\partial
\theta}}\!
\frac{\partial\ell}{\partial \theta}
\end{equation}
hence its name.

The \emph{unitwise outer product metric} is defined by
\begin{equation}
\uopnorm{\d\theta}^2 \deq \sum_k \opnorm{\d\theta_k}^2
\end{equation}
where $k$ runs over the units of the network and $\d\theta_k$ is the
variation of the incoming parameters to unit $k$.
This metric is given by keeping only the block-diagonal
terms incoming to each unit in the matrix defining the tensor-square
differential metric.
\end{defi}

The OP metric has been used simply under the name ``natural
gradient'' in \cite{APF00,TONGA}, which can lead to some confusion because it is distinct from
the natural metric using the true Fisher information matrix (see the
discussion in \cite{BengioNG2013}). Moreover
the OP metric makes sense for optimization situations more
general than the natural gradient, in which the loss function is not
necessarily of the form $\ln p_\theta$ for a probabilistic model $p$. For
these two reasons we
adopt a purely descriptive name\footnote{
The outer product metric is distinct from the ``outer product
(or Levenberg--Marquardt) approximation'' of the Hessian of the loss function
\cite[5.4.2]{Bishop_book}. The latter can be obtained from the natural metric
\eqref{eq:natnorm} in which the output metric $\Omega$ is replaced with
the Hessian of the loss function $\ell_x$ w.r.t.~$\omega$. It depends
on the parametrization of $\omega$. 
For exponential families in the canonical parametrization it coincides
with the Fisher metric.}.

The OP metric is characterized, among all possible metrics, by a unique
property: it provides a
gradient step for which progress is most evenly distributed among all data
samples.

\begin{prop}[ (OP gradient equalizes the gain over the samples)]
\label{prop:equalize}
Let $L\deq \E_{x\in \D} \ell_x$ be the average loss with $\ell_x$ the loss on
input $x$.
The
direction $\d\theta$ given by the gradient of $L$ computed in the outer
product
metric (Def.~\ref{def:grad}) has the following property: Among all
directions $\d\theta$ yielding the same infinitesimal increment $\d L$ at
first order, it is the one for which the increment is most evenly spread
over the data samples $x\in \D$, namely,
$\Var_{x\in\D}\d\ell_x=\E_{x\in\D} (\d\ell_x-\d L)^2$
is minimal.
\end{prop}

The proof is given in the Appendix.  The \emph{unitwise} OP metric does
not, in general, satisfy this property: instead, it minimizes the
variance, over a random data sample $x\in\D$ and a random unit $k$ in the
network, of the contribution to $\d\ell_x$ of the change $\d\theta_k$ at
unit $k$, so that it tries to spread $\d L$ uniformly both over data
samples and units.

\begin{rem}[ (Metric for output parameters)]
\label{rem:outputparams}
The case when the decoding function $\omega((a_k)_{k\in\Lout})$ depends
on additional ``output parameters'' $\vartheta$ (e.g., softmax output with variable
coefficients $\Pr(k)=\e^{\vartheta_k a_k}/(\sum_{k'} \e^{\vartheta_{k'} a_{k'}})$) can be recovered by
considering $\omega$ as an additional output unit to the network, so that
$\vartheta$ becomes the parameter of the activation function of $\omega$.
In particular, applying the definitions above, the metric $\Omega$ on
$\omega$ induces a metric on $\vartheta$
by
\begin{equation}
\bpnorm{\d\vartheta}^2=\natnorm{\d\vartheta}^2=\transp{\d\vartheta}
\left(
\E_{x\in \D}\transp{\frac{\partial \omega}{\partial \vartheta}} \Omega \,\frac{\partial
\omega}{\partial \vartheta}
\right)\d\vartheta
\end{equation}
given by the matrix $\E_{x\in \D}\transp{\frac{\partial \omega}{\partial
\vartheta}} \Omega \,\frac{\partial
\omega}{\partial \vartheta}$.
So $\vartheta$ can be trained by gradient descent in this metric.
\end{rem}

\bigskip

For the Fisher metric, the following is well-known.

\begin{prop}[ (Invariance)]
The natural metric, unitwise natural metric, backpropagated metric, and
plain and unitwise outer product metrics are intrinsic:
$\natnorm{\d\theta}$, $\unatnorm{\d\theta}$, $\bpnorm{\d\theta}$,
$\opnorm{\d\theta}$, and $\uopnorm{\d\theta}$ do not depend on
a choice of parametrization for the activations $a_k$ and for the parameter
$\theta_k$ at each unit $k$.
\end{prop}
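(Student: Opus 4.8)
The plan is to reduce every one of these metrics to a pullback of a fixed intrinsic object under smooth maps between manifolds, and then invoke the coordinate-freeness of pullbacks. The abstract fact I would isolate first is the following: if $f\colon M\to N$ is a smooth map and $g$ is a $(0,2)$-tensor on $N$, the pullback $(f^*g)(u,v)\deq g(df\,u,df\,v)$ is a well-defined $(0,2)$-tensor on $M$, depending only on the geometric map $f$ and the tensor $g$, not on any coordinates chosen on $M$ or $N$. This holds because a tangent vector, its image $df\,u$ under a differential, and the evaluation of a tensor on tangent vectors are all coordinate-independent notions. Each metric below is an instance of this construction.

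For the natural metric, note that for each fixed input $x$ the assignment $\theta\mapsto\omega(x)$ is a smooth map from the parameter manifold to the output manifold, and $\datnatnorm{\d\theta}{x}^2=\norm{\d\omega(x)}^2$ is exactly the pullback of the intrinsic output metric $I$ evaluated on the tangent vector $\d\theta$. Hence each term is intrinsic, and $\natnorm{\d\theta}^2=\E_{x\in\D}\datnatnorm{\d\theta}{x}^2$ is intrinsic as an average of intrinsic quantities. Crucially, a pullback depends only on the composite map $\theta\mapsto\omega(x)$ and not on how this map factors through the intermediate activation manifolds $\A_k$; this is precisely what yields invariance under reparametrizing the activations $a_k$, since any reparametrization of an $\A_k$ leaves the composite map unchanged. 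The tensor square differential metric is handled identically, with $N=\R$ and $g$ the canonical self-pairing: $\d\ell_x$ is the differential of the scalar function $\theta\mapsto\ell(\omega(x),y(x))$ applied to $\d\theta$, so $(\d\ell_x)^2$ is intrinsic and so is its average.

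For the two unitwise metrics I would exploit that the admissible reparametrizations act separately at each unit, that is, they respect the product decomposition $\theta=(\theta_k)_k$ of the parameter manifold into $\prod_k\Theta_k$. Consequently a variation $\d\theta_k$ supported on a single block remains a tangent vector to $\Theta_k$ under any allowed change of coordinates, and $\natnorm{\d\theta_k}^2$ is intrinsic by the argument above applied to the map $\theta_k\mapsto\omega(x)$ (the other parameters held fixed). The block-diagonal sums $\unatnorm{\d\theta}^2=\sum_k\natnorm{\d\theta_k}^2$ and $\utsdnorm{\d\theta}^2$ are then sums of intrinsic terms.

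The backpropagated metric requires an induction, and this is the step I expect to demand the most care. Working backwards from the output layer, I would show by descending induction on the units that each $\datbpnorm{\cdot}{x}^2$ is an intrinsic $(0,2)$-tensor on the corresponding activation manifold $\A_k$. The base case at an output unit is the pullback of $I$ under the decoding map $a_{k\out}\mapsto\omega(x)$, hence intrinsic. For the inductive step, the change $\d a_i$ induced in a downstream unit by $\d a_k$ is the image of $\d a_k$ under the differential of the partial activation map $a_k\mapsto a_i$ (other inputs fixed), so $\datbpnorm{\d a_k}{x}^2=\sum_{i,\,k\to i}\datbpnorm{\d a_i}{x}^2$ is a sum of pullbacks of the already-intrinsic tensors on the $\A_i$, hence intrinsic on $\A_k$. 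Finally $\datbpnorm{\d\theta_k}{x}^2$ is the pullback of this tensor under $\theta_k\mapsto a_k$, and averaging over $x$ and summing over units preserves intrinsicness. The only things to watch are that the induction is genuinely well-founded (the network is acyclic, so descending from $\Lout$ terminates) and that at each stage one pulls back along the correct partial map; once these are pinned down, invariance under reparametrizing both the activations and the per-unit parameters follows term by term.
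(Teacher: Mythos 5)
Your proof is correct and takes the same route as the paper, whose entire argument is the one-line observation that these metrics ``have been defined without any reference to parametrizations''; you are simply making that observation rigorous by casting each definition as a pullback of an intrinsic tensor and checking that averages, block sums, and the backward induction for the backpropagated metric preserve intrinsicness. The extra care you take with the acyclicity of the network and with pulling back along the correct partial maps is exactly the detail the paper leaves implicit.
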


\begin{proof}
These metrics have been defined without any reference to
parametrizations, directly by defining the norm of a tangent vector
$\d\theta$; see Appendix~\ref{sec:formal} for a more formal treatment. Consequently the value of the norm $\norm{\d\theta}$ is the same
expressed in any coordinate system \cite[2.1]{GHL87}.
\end{proof}

The natural metric actually has stronger invariance properties than the
unitwise natural metric: it does not depend on a change of
parametrization of the whole parameter $\theta=(\theta_k)$ that would mix
the various components. As such, the unitwise natural metric depends on a
choice of decomposition of the network into units, while the natural
metric is only a function of the input-output relationship of the whole
network. The same holds for the OP and unitwise OP metrics.

\begin{rem}[ (Unitwise metrics as change in activation profile)]
\label{rem:activation}
We saw above that the metric used to define a gradient represents a
``cost'' of moving in certain directions. All three unitwise metrics (unitwise natural, backpropagated, and
unitwise OP) share a common property: 
these metrics decompose as a sum, over the units $k$, of terms
of the form $\norm{\d\theta_k}^2 =
\E_{x\in\D}\,c_{x,k}\norm{\d a_k(x)}^2$ where $\d a_k(x)$ is the
resulting 
change of activity at $k$ on input $x$, and 
$c_{x,k}$ is a weight (different for these three metrics) estimating the
influence of $k$ on the output.
Thus, the ``cost'' of a change at unit $k$
according to these metrics, is an average square norm of the resulting
\emph{change in activation profile $a_k(x)$} over $x$ in the dataset.
This is related to the \emph{best-fit interpretation} of these metrics
(Section~\ref{sec:bestfit}).
\end{rem}

\paragraph{Computing the metrics.} These metrics can be explicitly
computed as follows.

The outer product metric is the easiest to compute:
the terms $\frac{\partial\ell}{\partial \theta}$ are directly computed
by ordinary backpropagation, namely, $\frac{\partial\ell}{\partial
\theta_k}=\frac{\partial \ell}{\partial a_k} \frac{\partial a_k}{\partial
\theta_k}$ where $\frac{\partial \ell}{\partial a_k}$ ($=b_k$) is computed by
backpropagation and $\frac{\partial a_k}{\partial
\theta_k}$ is obtained from the activation function at unit $k$. Then the
matrix defining the metric is $\E_{x\in\D}\transp{\frac{\partial\ell}{\partial
\theta}}\frac{\partial\ell}{\partial
\theta}$, or, for the unitwise version,
$\E_{x\in\D}\transp{\frac{\partial\ell}{\partial
\theta_k}}\frac{\partial\ell}{\partial
\theta_k}$ at each unit $k$.

To compute the natural and unitwise natural metrics, it is enough to compute the Jacobian matrix
$\frac{\partial\omega}{\partial \theta}$. This can be done by performing
one backpropagation for each component of the output layer, for each input $x\in \D$, as follows.

\begin{defi}[ (Backpropagation transfer rates)]
\label{def:trates-gen}
Let $k\out$ be an output unit and let $k$ be any unit in the network. The
\emph{backpropagation transfer rate} $J^{k\out}_k$ from $k\out$ to $k$ is
the $\dim(a_{k\out})\times \dim(a_k)$ matrix defined by
\begin{equation}
\begin{cases}
J^{k\out}_{k\out}\deq\Id_{\dim(a_{k\out})}
\\
J^{k\out}_{k}\deq 0
& \text{for $k\neq k\out$ in the output layer $\Lout$}
\\
J^{k\out}_k \deq \sum_{j,\,k\to j} 
J^{k\out}_j \frac{\partial a_j}{\partial a_k}
& \text{for non-output units $k$}
\end{cases}
\end{equation}
where $\frac{\partial a_j}{\partial a_k}$ is the Jacobian matrix of the
activation function from unit $k$ to unit $j$. Then we have
$J^{k\out}_k=\frac{\partial a_{k\out}}{\partial a_k}$.
\end{defi}

This depends on an input $x$: the activation state of the network has to be
computed by forward propagation before these quantities can be computed.

Typically the activities are one-dimensional, not multidimensional, so that each
$J^{k\out}_{k}$ is just a number, not a matrix.
In this case, all the transfer rates $J^{k\out}_k$ can be computed by performing $n\out$
distinct backpropagations each initialized with a single $1$ on the
output layer.

Since the influence of the parameter $\theta_k$ on the output goes
through the activity of unit $k$,
the unitwise natural metric at $k$ can be computed from
a single number (if activities are one-dimensional)
measuring the influence of unit $k$ on the output, the \emph{Fisher
modulus}.

\begin{defi}[ (Fisher modulus)]
Let $x$ be an input.
Let $k$ be a unit in the network. Let $\Omega$ be the metric on the final
output $\omega$. The \emph{Fisher modulus} $\Phi_k(x)$ of $k$ on input
$x$ is the
$\dim(a_k)\times \dim(a_k)$ matrix given by
\begin{equation}
\label{eq:fishmod-gen}
\Phi_k(x)\deq \transp{\left(\sum_{k\out\in \Lout}
\frac{\partial \omega}{\partial a_{k\out}}J^{k\out}_k
\right)}\Omega \left(\sum_{k\out\in \Lout}
\frac{\partial \omega}{\partial a_{k\out}}J^{k\out}_k
\right)
\end{equation}
For each input $x$, the Fisher modulus is an intrinsic metric on $a_k$: for a given input
$x$, the norm
\begin{equation}\fmnorm{\d a_k}^2\deq \transp{\d a_k} \Phi_k \d
a_k\end{equation} does not
depend on any choice of parametrization.
\end{defi}

Note that $\frac{\partial \omega}{\partial a_{k\out}}$ depends on the
output layer interpretation but not on any parameter $\theta$. Thus,
since the
transfer rates $J$ can be computed by backpropagation, the Fisher
modulus only involves known quantities.

\begin{prop}[ (Computation of the unitwise natural metric)]
The unitwise natural metric at unit $k$ is given by
\begin{align}
\unatnorm{\d\theta_k}^2 &=
\E_{x\in\D}\,\fmnorm{\d a_k(x)}^2
\\&=
\E_{x\in\D}\,\transp{\d\theta_k}\transp{\frac{\partial
a_k}{\partial \theta_k}} \Phi_k \,\frac{\partial
a_k}{\partial \theta_k}\d\theta_k
\end{align}
where $\d a_k(x)$ is the variation of $a_k(x)$ induced by
$\d\theta$, and $\frac{\partial
a_k}{\partial \theta_k}$ is the Jacobian matrix of the activation
function at $k$.
Thus the matrix defining the unitwise natural metric at unit $k$ is
\begin{equation}
F^{(k)}=\E_{x\in\D}\,\transp{\frac{\partial
a_k}{\partial \theta_k}} \Phi_k\, \frac{\partial
a_k}{\partial \theta_k}
\end{equation}
\end{prop}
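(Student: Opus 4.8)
The plan is to unfold the definitions and reduce everything to a single application of the chain rule through the network. First I would recall that, by definition of the unitwise natural metric, $\unatnorm{\d\theta_k}^2=\natnorm{\d\theta_k}^2$, where the right-hand side is the natural metric evaluated on a variation $\d\theta$ that perturbs only the incoming parameters of unit $k$ while leaving all other parameters fixed. Thus $\natnorm{\d\theta_k}^2=\E_{x\in\D}\norm{\d\omega(x)}^2$, and it remains to compute the change $\d\omega(x)$ of the output produced by $\d\theta_k$ on each input $x$.

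Next I would observe that $\theta_k$ enters the network only through the activation function of unit $k$, so its entire influence on the output is mediated by the induced change $\d a_k=\frac{\partial a_k}{\partial\theta_k}\d\theta_k$ of the activity at $k$. Propagating this change forward, the resulting variation at an output unit $k\out$ is $\d a_{k\out}=\frac{\partial a_{k\out}}{\partial a_k}\d a_k=J^{k\out}_k\,\d a_k$, using the identity $J^{k\out}_k=\frac{\partial a_{k\out}}{\partial a_k}$ from Definition~\ref{def:trates-gen}. Summing the contributions of all output units through the decoding map $\omega$ gives
\[
\d\omega=\sum_{k\out\in\Lout}\frac{\partial\omega}{\partial a_{k\out}}\,\d a_{k\out}=\left(\sum_{k\out\in\Lout}\frac{\partial\omega}{\partial a_{k\out}}J^{k\out}_k\right)\d a_k .
\]

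Then I would substitute this into $\norm{\d\omega}^2=\transp{\d\omega}I\,\d\omega$ and recognize the Fisher modulus: the two factors sandwiching $I$ are exactly those appearing in \eqref{eq:fishmod-gen}, so $\norm{\d\omega}^2=\transp{\d a_k}\Phi_k\,\d a_k=\fmnorm{\d a_k}^2$. Taking the expectation over the dataset yields the first equality $\natnorm{\d\theta_k}^2=\E_{x\in\D}\fmnorm{\d a_k(x)}^2$. Finally, replacing $\d a_k$ by $\frac{\partial a_k}{\partial\theta_k}\d\theta_k$ produces a quadratic form in the fixed vector $\d\theta_k$, and factoring it out of the expectation lets me read off the symmetric matrix $F^{(k)}=\E_{x\in\D}\transp{\frac{\partial a_k}{\partial\theta_k}}\Phi_k\frac{\partial a_k}{\partial\theta_k}$.

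The \emph{main obstacle} is the second step: justifying that the total derivative $\frac{\partial a_{k\out}}{\partial a_k}$, which aggregates the contributions of all directed paths from $k$ to $k\out$, coincides with the backpropagation transfer rate $J^{k\out}_k$. This is precisely the content of Definition~\ref{def:trates-gen} (established there by backward induction over the acyclic network), so it can simply be invoked; once it is granted, the remainder is a direct linear-algebra computation. The only additional care needed is to confirm that $\theta_k$ has no direct effect on any unit other than $k$, so that the single channel $\d a_k$ captures the whole variation of $\omega$ — and this follows immediately from the model assumption $a_k=f^k_{\theta_k}(a_{i_1},\ldots,a_{i_{n_k}})$.
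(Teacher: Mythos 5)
Your proof is correct and follows essentially the same route as the paper's: both arguments hinge on the identity $J^{k\out}_k=\frac{\partial a_{k\out}}{\partial a_k}$ from Definition~\ref{def:trates-gen}, use it to collapse $\sum_{k\out}\frac{\partial\omega}{\partial a_{k\out}}J^{k\out}_k$ into $\frac{\partial\omega}{\partial a_k}$, and then chain through $\d a_k=\frac{\partial a_k}{\partial\theta_k}\d\theta_k$ before averaging over the dataset. The paper merely states this more compactly in terms of Jacobian matrices, whereas you spell out the same computation step by step with tangent vectors; the content is identical.
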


\begin{proof}
By definition of the transfer rates $J$ we have 
$J^{k\out}_k=\frac{\partial a_{k\out}}{\partial a_k}$. Thus
$\sum_{k\out\in \Lout}
\frac{\partial \omega}{\partial a_{k\out}}J^{k\out}_k=\frac{\partial
\omega}{\partial a_k}$ so that
\begin{equation}
\Phi_k=\transp{\frac{\partial \omega}{\partial
a_k}} \Omega\,\frac{\partial
\omega}{\partial a_k}
\end{equation}
hence
\begin{equation}
\transp{\frac{\partial
a_k}{\partial \theta_k}} \Phi_k \,\frac{\partial
a_k}{\partial \theta_k}
=\transp{\frac{\partial
a_k}{\partial \theta_k}}\transp{\frac{\partial \omega}{\partial
a_k}} \Omega\,\frac{\partial
\omega}{\partial a_k}
\frac{\partial
a_k}{\partial \theta_k}
=\transp{\frac{\partial \omega}{\partial \theta_k}} \Omega\,\frac{\partial
\omega}{\partial \theta_k}
\end{equation}
which, after averaging over the dataset, is the definition of the
unitwise natural metric at $k$.
\end{proof}

An analogous formula can be defined for the full (rather than unitwise)
Fisher matrix, by defining a Fisher modulus $\Phi_{kk'}$ indexed by two
units, and using unit $k'$ on the left and $k$ on the right
in~\eqref{eq:fishmod-gen}. Then the block of entries of the Fisher matrix
corresponding to parameters $\theta_k$ and $\theta_{k'}$ is
\begin{equation}
\E_{x\in\D}\,\transp{\frac{\partial
a_{k'}}{\partial \theta_{k'}}} \Phi_{kk'}\, \frac{\partial
a_k}{\partial \theta_k}
\end{equation}
(see also Proposition~\ref{prop:fishisfish}).

\bigskip

The unitwise Fisher metric is costly to compute when the output layer is
large. We can define another intrinsic metric for the activation of
unit $k$, simply by backpropagating the metric of the output layer.

The changes in the output induced by a change of $\theta_k$ all transit
through the activation of unit $k$. So if we have an
intrinsic metric $\norm{\d a_k}^2$ for the activation of unit $k$, we
can immediately define an intrinsic metric for $\theta_k$, by looking
at the resulting change $\d a_k=\frac{\partial a_k}{\partial
\theta_{k}}\d \theta_{k}$ induced by a change $\d
\theta_k$, and defining the norm of $\d \theta_k$ to be
the norm of the resulting $\d a_k$. If the metric on $a_k$ is given,
in some parametrization of $a_k$, by $\norm{\d a_k}^2=\transp{\d a_k} g_k
\d a_k$ where $g_k$ is a symmetric, positive-definite matrix of size $(\dim
a_k)\times (\dim a_k)$,
then
defining $\norm{\d\theta_k}$ to be the
norm of this $\d a_k$,
\begin{equation}
\norm{\d\theta_k}\deq \norm{\frac{\partial a_k}{\partial
\theta_k}\d\theta_k}
\end{equation}
yields
\begin{equation}
\norm{\d\theta_k}^2=
\transp{\left(\frac{\partial
a_k}{\partial \theta_k}\d\theta_k\right)} g_k \left(\frac{\partial
a_k}{\partial \theta_k}\d\theta_k\right)
\end{equation}
in other words, the matrix defining this metric is $\transp{\frac{\partial
a_k}{\partial \theta_k}} g_k \frac{\partial
a_k}{\partial \theta_k}$.

The unitwise Fisher metric is obtained from the
Fisher modulus by this construction. We now define another intrinsic
modulus playing the same role for the backpropagated metric.

\begin{prop}[ (Backpropagated modulus and computation of the backpropagated metric)]
Let $x$ be an input. Let $k$ be a unit in the network. Let $\Omega$ be the metric on the final
output $\omega$. The \emph{backpropagated modulus} $m_k(x)$ at $k$ is the
$\dim(a_k)\times \dim(a_k)$ matrix given by
\begin{equation}
m_k(x)\deq
\begin{cases}
\transp
{\frac{\partial \omega}{\partial a_k}}\Omega\,
\frac{\partial \omega}{\partial a_k}
&\text{for $k$ in the output layer}
\\
\sum_{j,\,k\to j} \transp
{\frac{\partial a_j}{\partial a_k}}m_j\,
\frac{\partial a_j}{\partial a_k}
&\text{for $k$ an internal unit}
\end{cases}
\end{equation}

Then, for each input $x$, the backpropagated metric on $a_k$ is given by
the backpropagated modulus:
\begin{equation}
\datbpnorm{\d a_k}{x}^2= \transp{\d a_k} m_k \d a_k
\end{equation}
and so the backpropagated metric on 
$\theta_k$ is
given by the matrix $\E_{x\in \D}\,\transp{\frac{\partial
a_k}{\partial \theta_k}} m_k \frac{\partial
a_k}{\partial \theta_k}$, namely,
\begin{equation}
\bpnorm{\d\theta_k}^2=
\E_{x\in \D}\,
\transp{\d\theta_k}\transp{\frac{\partial
a_k}{\partial \theta_k}} m_k \,\frac{\partial
a_k}{\partial \theta_k}\d\theta_k
\end{equation}
\end{prop}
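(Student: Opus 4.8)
The plan is to prove the activity-level identity $\datbpnorm{\d a_k}{x}^2 = \transp{\d a_k} m_k \d a_k$ by backward induction on the units, exploiting the fact that the recursion defining $m_k$ is set up to mirror, term by term, the recursion defining the backpropagated metric on activities; the two statements about $\theta_k$ then follow by a single substitution and an average over $\D$. Since the network is acyclic, this backward induction from the output layer is well founded and terminates. For the base case, let $k\out$ be an output unit. By definition $\datbpnorm{\d a_{k\out}}{x}^2 \deq \norm{\d\omega(x)}^2$, and since $\d\omega = \frac{\partial\omega}{\partial a_{k\out}}\d a_{k\out}$, expanding $\norm{\d\omega}^2 = \transp{\d\omega}I\d\omega$ gives $\transp{\d a_{k\out}}\transp{\frac{\partial\omega}{\partial a_{k\out}}}I\,\frac{\partial\omega}{\partial a_{k\out}}\d a_{k\out}$, which is exactly $\transp{\d a_{k\out}}m_{k\out}\d a_{k\out}$ by the output-layer case of the definition of $m$.

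For the inductive step at an internal unit $k$, I would start from the defining recursion $\datbpnorm{\d a_k}{x}^2 \deq \sum_{j,\,k\to j}\datbpnorm{\d a_j}{x}^2$, where each $\d a_j$ is the change induced in a directly influenced child $j$, namely $\d a_j = \frac{\partial a_j}{\partial a_k}\d a_k$ with $\frac{\partial a_j}{\partial a_k}$ the one-step Jacobian of the activation function of $j$ in its argument coming from $k$. Applying the induction hypothesis $\datbpnorm{\d a_j}{x}^2 = \transp{\d a_j}m_j\d a_j$ to each summand and substituting $\d a_j = \frac{\partial a_j}{\partial a_k}\d a_k$, I can factor $\d a_k$ out of the sum to obtain $\transp{\d a_k}\bigl(\sum_{j,\,k\to j}\transp{\frac{\partial a_j}{\partial a_k}}m_j\,\frac{\partial a_j}{\partial a_k}\bigr)\d a_k$. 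The parenthesized matrix is precisely $m_k$ by its internal-unit definition, which closes the induction.

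For the parameter statement, the definition sets $\datbpnorm{\d\theta_k}{x}^2 \deq \datbpnorm{\d a_k}{x}^2$ for the induced $\d a_k = \frac{\partial a_k}{\partial\theta_k}\d\theta_k$. Substituting this into the activity identity just proved yields $\datbpnorm{\d\theta_k}{x}^2 = \transp{\d\theta_k}\transp{\frac{\partial a_k}{\partial\theta_k}}m_k\,\frac{\partial a_k}{\partial\theta_k}\d\theta_k$, and taking $\E_{x\in\D}$ produces $\bpnorm{\d\theta_k}^2$ and identifies the defining matrix $\E_{x\in\D}\transp{\frac{\partial a_k}{\partial\theta_k}}m_k\frac{\partial a_k}{\partial\theta_k}$, as claimed.

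I expect the only delicate point to be conceptual rather than computational: one must read $\frac{\partial a_j}{\partial a_k}$ in the recursion for $m_k$ as the \emph{direct}, one-step Jacobian of each directly influenced child $j$ (matching the ``units it influences directly'' wording in the definition of the metric), rather than as a total derivative summed over all paths from $k$ to the output. It is exactly this one-step-and-sum convention, counting each directed edge separately, that makes the backpropagated metric differ from the natural metric and that makes the recursion for $m_k$ align summand by summand with the recursion defining $\datbpnorm{\d a_k}{x}^2$; getting this bookkeeping right is what makes the induction go through cleanly.
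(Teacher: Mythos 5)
Your proof is correct and follows the same route as the paper, which simply declares the result ``immediate from the definition of the backpropagated metric and $\d a_i=\frac{\partial a_i}{\partial a_k}\d a_k$ and $\d a_k=\frac{\partial a_k}{\partial \theta_k}\d\theta_k$''; your backward induction merely writes out in full the term-by-term matching that the paper leaves implicit. Your closing remark about reading $\frac{\partial a_j}{\partial a_k}$ as the one-step Jacobian is exactly the right reading and is consistent with the paper's definitions.
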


\begin{proof}
Immediate from the definition of the backpropagated metric and $\d a_i=\frac{\partial a_i}{\partial a_k}\d a_k$ and $\d
a_k=\frac{\partial a_k}{\partial \theta_k}\d\theta_k$.
\end{proof}

Like the Fisher modulus, the backpropagated modulus is a single number
when
activities are one-dimensional.
The cost of its computation is the same as one
backpropagation pass.

The equation defining the backpropagated modulus is well-known: it is
related to the so-called Gauss--Newton approximation to the Newton method
(see for instance~\cite{LBOM96}, Section~7), which consists in computing
the Hessian of the loss function and throwing away all terms involving
the second derivative of the activation function (those could result in
non--positive-definite terms, in which case the Newton method is
ill-behaved), with the additional approximation that cross-terms between
different units are also thrown away.  Here we see that no approximation
is involved: we do not throw away annoying terms, we simply define an
intrinsic metric. There is actually no meaningful notion of the Hessian
of a function on manifolds \cite[paragraph 3.37]{GHL87} unless additional structure (affine,
Riemannian) is given or we are at a critical point of the function; the
annoying terms above are precisely the terms that
prevent such a notion to exist. So one could even say, in the context of
differential geometry, that the Newton method is an approximation of the
backpropagated metric rather than the other way round.

The backpropagated modulus and the Fisher modulus are related: If one
tries to write a backpropagated equation to compute the Fisher modulus
$\Phi_k$ in terms of the Fisher modulus at units pointed by $k$, one finds a
quadratic (instead of linear) backpropagation equation with terms
involving pairs of units. Keeping only the terms involving a single
unit yields the equation defining the backpropagated modulus.

\subsection{Quasi-diagonal reduction of a unitwise metric}
\label{sec:qd}

The unitwise Fisher metric, backpropagated metric, and unitwise OP
metric, still involve a full matrix on the incoming parameter space at each unit,
and are thus not adapted if network connectivity is large. We now
introduce two metrics enjoying lesser invariance properties than the
above, but quicker to compute.

Given an intrinsic metric $\norm{\d\theta_k}$ on $\theta_k$ (such as the unitwise
Fisher or backpropagated metric), we are going to define a simpler one,
$\qdnorm{\d\theta_k}$. The inverse of the matrix defining this metric
will be \emph{quasi-diagonal}, with the only non-zero diagonal terms
being those between a weight and the bias.  This will allow for quick
gradient steps costing no more than classical backpropagation.

This relies on the affine structure in neural networks: this
simplification makes sense in a somewhat more restricted setting than the general setting above.
Suppose
that the activation function
\begin{equation}
a_k=f^k_{\theta_k}(a_{i_1},\ldots,a_{i_{n_k}})
\end{equation}
can be written as a composition of a fixed, non-linear activation
function $\phi$ and a quantity $y_k$ that is an affine function of
$a_{i_1},\ldots,a_{i_{n_k}}$:
\begin{equation}
\label{eq:phiaffine}
a_k=\phi(y_{k,\theta_k}(a_{i_1},\ldots,a_{i_{n_k}}))
\end{equation}
such that when $\theta_k$ ranges over its values, $y_{k,\theta_k}$ ranges
over all possible affine functions of $a_{i_1},\ldots,a_{i_{n_k}}$.
For this to make sense, we now have to assume that the activities $a_k$
live in an affine space. So let us go back
to activities with values in
$\R$, but without any preferred origin and scale for activities: we look for invariance under
replacement of $a_i$ with $\alpha_i a_i+\beta_i$ and likewise for $y_i$. 

In any given
parametrization (choice of origin and basis) of $a_i$ and $y_k$ we can write
\begin{equation}
\label{eq:affiney}
y_k=\sum_{i,\,i\to k} w_{ik} a_i+w_{0k}
\end{equation}
for some values $w_{ik}$;
specifying the parameter $\theta_k$ is
equivalent to specifying these quantities.

But this decomposition will change if we change the affine
parametrization of activities: if $a'_i=\alpha_i a_i+\beta_i$ and
$y'_k=\gamma_k y_k+\delta_k$ the relation becomes $y'_k=\delta_k+\sum_i
\gamma_k w_{ik} \alpha_i^{-1}(a'_i-\beta_i)+\gamma_k w_{0k}=\sum_i
(\gamma_k w_{ik}\alpha_i^{-1})a'_i+(\gamma_k w_{0k}+\delta_k-\sum_i
w_{ik}\alpha_i^{-1}\beta_i)$ so that the new weights are
$w'_{ik}=\gamma_k w_{ik}\alpha_i^{-1}$ and the new bias is
$w'_{0k}=\gamma_k w_{0k}+\delta_k-\sum_i w_{ik}\alpha_i^{-1}\beta_i$. In
particular we see that there is no intrinsic ``separation'' between the
bias and the weights; but that there is a separation between $w_{ik}$ and
$w_{i'k}$ for different incoming units $i$ and $i'$. This is formalized
as follows.


Let $\d\theta_k$ be a change of parameter $\theta_k$. For $i\neq 0$, let $\d
w_{ik}$ be the resulting change of $w_{ik}$ in a given parametrization,
and let $\d w'_{ik}$ be the resulting change in another parametrization. If $\d w_{ik}=0$, then we have $\d w'_{ik}=0$ as well in any other
affine
parametrization, since $w'_{ik}=\gamma_k w_{ik}\alpha_i^{-1}$ yields $\d
w'_{ik}=\gamma_k \d w_{ik}\alpha_i^{-1}$. Note that this does not depend
on the input $x$ either\footnote{This relies on the affine form
\eqref{eq:affiney} of $y_k$. If $y_k$ is not affine in
\eqref{eq:phiaffine}, having a constant $\partial y_k/\partial a_i$
is not a well-defined notion and may depend
on the input.}.
Thus,
having $\d w_{ik}=0$ is a property of $\d\theta_k$ that does not depend
on the chosen affine parametrization of activities: it is an intrinsic
property of the change of parameter $\d\theta_k$.
Say that a change of parameter $\d \theta_k$ \emph{does not involve unit
$i$} if $\d w_{ik}$ vanishes.

For the bias the situation is different: the expression $w'_{0k}=\gamma_k w_{0k}+\delta_k-\sum_i w_{ik}\alpha_i^{-1}\beta_i$ giving the bias
in a parametrization from the bias in another parametrization is more
complex, and so the fact that $\d w_{0k}=0$ does depend on the
parametrization. This is where the metric $\norm{\d\theta_k}$ we are trying to simplify
comes into play.

Say that a change of parameter $\d\theta_k$ is \emph{pure bias} if it does not involve any unit
$i$ incoming to $k$, i.e., if $\d w_{ik}=0$ for all $i\neq 0$. This is an
intrinsic condition.  Say that $\d\theta_k$ is \emph{bias-free} if it is
orthogonal, in the metric $\norm{\d\theta_k}$ we are trying to simplify, to all
pure-bias vectors.
Being bias-free is an intrinsic condition, because by
assumption the metric
$\norm{\d\theta_k}$ is intrinsic. Being bias-free does not simply mean $\d
w_{0k}=0$; let us work it out in
coordinates.

Let
$A_{ii'}$ be the symmetric matrix defining the metric $\norm{\d\theta_k}$ in
a given parametrization. The associated scalar product is
\begin{equation}
\langle \d\theta_k,\d\theta'_k\rangle=
\sum_i\sum_{i'} 
A_{ii'} \d w_{ik} \d w'_{i'k}
+\sum_i A_{0i}(\d w_{0k}\d w'_{ik} + \d w'_{0k}\d w_{ik})
+A_{00}\d w_{0k}\d w'_{0k}
\end{equation}
with $A_{0i}=A_{i0}$.

In particular, if the only non-zero component of
$\d\theta_k$ is $\d w_{ik}$, then its scalar product with a pure bias
$\d w'_{0k}$ will be $A_{0i}\d w'_{0k}\d w_{ik}$. On the other hand,
if to $\d\theta_k$ we add a bias component
$\d w_{0k}=-A_{00}^{-1}A_{0i}\d w_{ik}$, then the scalar product with any
pure bias will vanish. Such a $\d\theta_k$ is thus bias-free.

In the case when the parameter $\theta_k$ allows to represent all affine
functions of the incoming activations, we can decompose a
variation $\d\theta_k$ of $\theta_k$ into components $\d\theta_{ki}$ each
involving only one incoming unit $i$, and a pure bias component
$\d\theta_{k0}$. This
decomposition is unique if we impose that each $\d\theta_{ki}$ is
bias-free. Explicitly, if in some parametrization we have $\d\theta_k=(\d
w_{0k},\d w_{1k},\ldots,\d w_{n_kk})$ this decomposition is
\begin{equation}
\d\theta_{ki}=(-A_{00}^{-1}A_{0i}\d w_{ik},0,\ldots,\d w_{ik},0,\ldots,0)
\end{equation}
and 
\begin{equation}
\d\theta_{k0}=(\d w_{0k}+\sum_i A_{00}^{-1}A_{0i}\d w_{ik},0,\ldots,0)
\end{equation}
The decomposition $\d\theta_k=\d\theta_{k0}+\sum_i \d\theta_{ki}$ is intrinsic.

We can then define a new intrinsic metric on $\d\theta_k$ by setting
\begin{equation}
\qdnorm{\d\theta_k}^2\deq \norm{\d\theta_{k0}}^2+\sum_i
\norm{\d\theta_{ki}}^2
\end{equation}
which is readily computed:
\begin{equation}
\begin{aligned}
\qdnorm{\d\theta_k}^2&=
A_{00} \left(\d w_{0k}+\sum_i A_{00}^{-1} A_{0i} \d w_{ik}\right)^2
\\&\qquad +\sum_i (A_{ii}\d w_{ik}^2-2g_{0i} (A_{00}^{-1}A_{0i} \d w_{ik})\d
w_{ik}
+A_{00}(A_{00}^{-1}A_{0i} \d w_{ik})^2)
\\&=
A_{00}\d w_{0k}^2+2\sum_i A_{0i} \d w_{0k}\d w_{ik}
+\sum_{i,i'} A_{00}^{-1}A_{0i}A_{0i'} \d w_{ik} \d w_{i'k}
\\&\qquad+\sum_i (A_{ii}-A_{00}^{-1}A_{0i}^2)\d w_{ik}^2
\end{aligned}
\end{equation}

Thus, this metric is defined by a matrix $\tilde A$ given by
$\tilde A_{00}=A_{00}$, $\tilde A_{0i}= A_{0i}$ and
$\tilde A_{ii'}=A_{00}^{-1}
A_{0i}A_{0i'}+\1_{i=i'}(A_{ii}-A_{00}^{-1}A_{0i}^2)$.

\begin{defi}
\emph{Quasi-diagonal reduction} is the process which, to an
intrinsic metric defined by a matrix $A$ in affine coordinates,
associates the metric defined by the
matrix
\begin{equation}
\tilde A\deq \diag(A)+A_{00}^{-1}(v\otimes
v)-\diag(A_{00}^{-1}(v\otimes v))
\end{equation}
where
\begin{equation}
v=(A_{00},\ldots,A_{0i},\ldots)
\end{equation}

The \emph{quasi-diagonal backpropagated metric} is the quasi-diagonal
metric obtained from the backpropagated metric. The \emph{quasi-diagonal
Fisher metric} is the one obtained from the unitwise Fisher metric. The
\emph{quasi-diagonal OP metric} is the one obtained from the unitwise
OP metric.
\end{defi}

The reasoning in this section can be summarized as follows.

\begin{prop}
Assume that the activation function is a fixed non-linear function
composed with an affine function.
Then the quasi-diagonal reduction $\tilde A$ of an intrinsic metric $A$ is
intrinsic.
\end{prop}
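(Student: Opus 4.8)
The plan is to show that $\qdnorm{\cdot}$ coincides with a norm assembled entirely from ingredients already shown to be intrinsic in this section, so that intrinsicness is simply inherited. The construction preceding the statement has in effect done all the work; the proof only has to record why each step is parametrization-independent and then recognize $\tilde A$ as the matrix of the resulting norm.

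First I would isolate the one place where the affine hypothesis is essential. Under an affine change $a'_i=\alpha_i a_i+\beta_i$, $y'_k=\gamma_k y_k+\delta_k$, the weight transforms diagonally as $w'_{ik}=\gamma_k w_{ik}\alpha_i^{-1}$, so the condition $\d w_{ik}=0$ (that $\d\theta_k$ does not involve unit $i$) does not depend on the chosen affine parametrization. Consequently the line of variations involving only unit $i$, and the subspace of pure-bias variations (those with $\d w_{ik}=0$ for all $i\neq 0$), are intrinsically defined subspaces of the tangent space at $\theta_k$. I would then note that \emph{bias-free} is intrinsic as well: it is the condition of being orthogonal, in the given intrinsic metric $\norm{\cdot}$, to every pure-bias vector, and both the metric and the pure-bias subspace are intrinsic, hence so is this orthogonality.

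These observations give the intrinsic decomposition $\d\theta_k=\d\theta_{k0}+\sum_i\d\theta_{ki}$ into the pure-bias part $\d\theta_{k0}$ and the bias-free single-unit parts $\d\theta_{ki}$, exactly as displayed above; its uniqueness makes it parametrization-independent. Since $\qdnorm{\d\theta_k}^2=\norm{\d\theta_{k0}}^2+\sum_i\norm{\d\theta_{ki}}^2$ is a sum of values of the intrinsic norm $\norm{\cdot}$ evaluated on intrinsically defined components, $\qdnorm{\cdot}$ is intrinsic. Finally I would check that this intrinsic quantity is represented in any affine parametrization by the matrix $\tilde A$ with $\tilde A_{00}=A_{00}$, $\tilde A_{0i}=A_{0i}$, and $\tilde A_{ii'}=A_{00}^{-1}A_{0i}A_{0i'}+\1_{i=i'}(A_{ii}-A_{00}^{-1}A_{0i}^2)$, which is the explicit computation already carried out.

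The main thing to be careful about is the first step. It is precisely the affineness of $y_k$ that makes $\d w_{ik}=0$ an intrinsic notion and hence makes the per-unit decomposition meaningful: drop that hypothesis and a general reparametrization mixes the incoming units, so there is no intrinsic separation along which to reduce. Everything else is routine, since orthogonality, the pure-bias subspace, and the original norm are all intrinsic by construction.
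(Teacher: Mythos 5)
Your proof is correct and follows essentially the same route as the paper, which presents the argument as the discussion preceding the proposition (intrinsicness of $\d w_{ik}=0$ under affine reparametrization, the pure-bias and bias-free decomposition, and the explicit computation of $\tilde A$) and then states the proposition as a summary. You have accurately identified where the affine hypothesis is used and why each ingredient is parametrization-independent; nothing is missing.
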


Importantly, the matrix
$\tilde A=\diag(A)+A_{00}^{-1}(v\otimes
v)-\diag(A_{00}^{-1}(v\otimes v))$
is the sum of a diagonal matrix and
a rank-$1$ matrix.
This allows for easy inversion, resulting in a
quasi-diagonal inverse matrix.

\begin{prop}[ (Quasi-diagonal gradient step)]
Let $\tilde A$ be the quasi-diagonal reduction of $A$.
Let $b=(b_0,\ldots,b_i,\ldots)$ and $w=\tilde A^{-1}b$. Then $w$ is given
by
\begin{align}
w_i&=\frac{b_iA_{00}-b_0A_{0i}}{A_{ii}A_{00}-A_{0i}^2}
\qquad\text{ for $i\neq 0$}
\\w_0&=\frac{b_0}{A_{00}}-\sum_{i\neq 0}\frac{A_{0i}}{A_{00}}w_i
\end{align}
\end{prop}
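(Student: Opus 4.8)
The plan is to solve the linear system $\tilde A w = b$ directly, exploiting the fact that, away from the bias index $0$, the matrix $\tilde A$ differs from a diagonal matrix only by a rank-one term. First I would record the entries of $\tilde A$ read off from its definition: expanding $v\otimes v$ with $v_0 = A_{00}$ and $v_i = A_{0i}$, one finds $\tilde A_{00} = A_{00}$, $\tilde A_{0i} = \tilde A_{i0} = A_{0i}$ for $i\neq 0$, $\tilde A_{ii} = A_{ii}$, and $\tilde A_{ii'} = A_{00}^{-1} A_{0i} A_{0i'}$ for distinct $i,i'\neq 0$. The subtracted diagonal term $\diag(A_{00}^{-1}(v\otimes v))$ is exactly what restores the $(i,i)$ entry to $A_{ii}$ and the $(0,0)$ entry to $A_{00}$, so the rank-one correction is felt only in the genuinely off-diagonal weight-weight entries.

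Next I would write the system row by row. The bias row reads $A_{00} w_0 + \sum_{i\neq 0} A_{0i} w_i = b_0$, which will immediately yield the stated formula for $w_0$ once the $w_i$ are known. For a weight row $j\neq 0$, separating the diagonal contribution from the rank-one contribution gives $A_{0j} w_0 + A_{jj} w_j + A_{00}^{-1} A_{0j}\sum_{i\neq 0,\,i\neq j} A_{0i} w_i = b_j$. The one real step is to introduce the single scalar $\sigma \deq \sum_{i\neq 0} A_{0i} w_i$, which lets me rewrite every weight row as $A_{0j}\bigl(w_0 + A_{00}^{-1}\sigma\bigr) + (A_{jj} - A_{00}^{-1} A_{0j}^2)\, w_j = b_j$; thus all coupling between distinct weights is funneled through the single common quantity $w_0 + A_{00}^{-1}\sigma$.

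The bias row then supplies that common quantity: dividing it by $A_{00}$ gives $w_0 + A_{00}^{-1}\sigma = A_{00}^{-1} b_0$. Substituting this back into the weight row decouples it completely, leaving $(A_{jj} - A_{00}^{-1} A_{0j}^2)\, w_j = b_j - A_{00}^{-1} A_{0j} b_0$; clearing the common factor $A_{00}$ from numerator and denominator produces exactly $w_j = (b_j A_{00} - b_0 A_{0j})/(A_{jj} A_{00} - A_{0j}^2)$. Feeding these $w_i$ back into the bias row recovers the stated expression for $w_0$. (Alternatively, one could simply substitute the claimed $w$ into $\tilde A w$ and verify the product equals $b$ termwise, but the forward solution above is more transparent and explains the origin of the formulas.)

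I do not expect a genuine obstacle: the rank-one structure of the correction guarantees that the cross-weight coupling collapses to the scalar $\sigma$, making the decoupling automatic. The denominators $A_{jj} A_{00} - A_{0j}^2$ are the $2\times 2$ principal minors of $A$, hence strictly positive by positive-definiteness (equivalently, they are the weighted variances discussed after the quasi-diagonal definitions), so the solution is well defined. The only point requiring care is the bookkeeping of which terms of $v\otimes v$ land on the diagonal versus off the diagonal, since the index $0$ with $v_0 = A_{00}$ must be treated consistently with the weight indices $v_i = A_{0i}$.
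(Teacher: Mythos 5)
Your proof is correct and follows exactly the route the paper intends: the paper gives no explicit proof, only the remark that $\tilde A$ is a diagonal matrix plus the rank-one term $A_{00}^{-1}(v\otimes v)$ off the diagonal, which is precisely the structure you exploit when the cross-weight coupling collapses to the single scalar $\sigma=\sum_{i\neq 0}A_{0i}w_i$ and the bias row eliminates it. Your entry bookkeeping for $\tilde A$ and the final positivity remark via the $2\times2$ principal minors both match the paper's statements, so this is a complete and faithful filling-in of the omitted computation.
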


Thus, only the entries $A_{00}$, $A_{ii}$ and $A_{0i}$ of the original
matrix $A$ need to be known in order to implement gradient descent using
the quasi-diagonal metric defined by $\tilde A$.

Note that if the original matrix $A$ is positive-definite, we have
$A_{00}>0$ and
$A_{00}A_{ii}>A_{0i}^2$ (by the Cauchy--Schwarz inequality applied to the
first and $i$-th basis vectors), so that the solution $w$ above is
well-defined and unique.

\subsection{Intrinsic gradients}
\label{sec:grads}

Thanks to these intrinsic metrics we can define intrinsic gradient directions in
parameter space. Given a dataset $\D$ of inputs $x$ and corresponding
targets $y$, the average loss function is 
\begin{equation}
L_\theta\deq \E_{x\in \D}\ell_\theta(y)
\end{equation}
where we put a subscript $\theta$ to make explicit its dependency on the
parameters of the network.
Given an intrinsic metric $\norm{\cdot}$, the differential
\begin{equation}
G=-\frac{\partial L_{\theta}}{\partial \theta}
\end{equation}
of the average loss
with respect to the full parameter set $\theta$, defines a
gradient direction $\grad_\theta L$ by the usual definition: it is the
only tangent vector such that for any
$\d\theta$ we have
\begin{equation}
L_{\theta+\d\theta}=L_{\theta}+\langle \grad_\theta
L,\d\theta\rangle+O(\norm{\d\theta}^2)
\end{equation}
where $\langle \cdot,\cdot\rangle$ is the scalar product associated with
the norm $\norm{\cdot}$. In a parametrization where $\norm{\cdot}^2$ is
given by a symmetric, positive definite matrix $A$, the gradient is given
by
\begin{equation}
\grad_\theta L=A^{-1}\frac{\partial L}{\partial \theta}=-A^{-1}G
\end{equation}
The gradient $\grad_\theta L$ is an intrinsic tangent vector on the
parameter set.

\begin{defi}
\label{def:grad}
The natural gradient, unitwise natural gradient, backpropagated metric gradient, 
OP gradient, unitwise OP gradient, and their quasi-diagonal reductions,
respectively, are the following update rule for $\theta$:
\begin{equation}
\theta\gets \theta-\eta \grad_\theta L
\end{equation}
where $\grad_\theta L=A^{-1}\frac{\partial L}{\partial \theta}$ is the gradient of the average loss function $L$
computed with $A$ the natural metric, unitwise natural metric, backpropagated
metric, OP metric, unitwise OP metric, and their
quasi-diagonal reductions, respectively.
\end{defi}

The algorithms of Section~\ref{sec:algos} are the application of these
updates to ordinary neural networks, written out with $[0;1]$-valued
activities and sigmoid activation function. More details on how this
works out are given below (Section~\ref{sec:usualfish}).

This update is intrinsic only under all \emph{affine} reparametrizations of
the parameter $\theta$. Indeed, even if the tangent vector
$\grad_\theta L$ giving the direction of the gradient is fully
intrinsic, adding a tangent vector to a parameter $\theta$ is not an
intrinsic operation (if two parametrizations differ by a non-affine
transformation, then the additions will not amount to the same).

On the
other hand, the ideal limit when the learning rate $\eta$ tends to $0$ is
intrinsic: the trajectories of the differential equation
\begin{equation}
\label{eq:gradtraj}
\frac{\mathrm{d}\theta(t)}{\mathrm{d}t}=-\grad_{\theta(t)} L
\end{equation}
are intrinsic trajectories in parameter space for the unitwise natural
gradient and backpropagated metric.

For the quasi-diagonal algorithms, invariance is always restricted to
affine reparametrizations, since this is the setup in which they are
well-defined.

\subsection{The Fisher matrix for neural networks}
\label{sec:usualfish}

The general definitions above depend on the choice of a metric on the
output $\omega$ of the network. When this metric is the Fisher metric on the
output layer, applying the general definitions above to ordinary neural
networks leads to the algorithms of Section~\ref{sec:algos}.  This is
mostly by direct computation and we do not reproduce it fully. Let us
however discuss the Fisher metric in more detail.

For each input $x$, the network defines a probability distribution
$\omega$ on the
outputs $y$. This
probability distribution depends on the parameters of the network. Thus,
for each input $x$, we can define a \emph{datum-wise} Fisher matrix on
the parameter set:
\begin{equation}
\label{eq:inputfish}
F(x)_{w_{ij}w_{i'j'}}=\E_{y|x} \frac{\partial \ell(y)}{\partial
w_{ij}}\frac{\partial \ell(y)}{\partial w_{i'j'}}
\end{equation}
where as above $\ell(y)=\ln \omega(y)$ and where $\E_{y|x}$ denotes
expectation for $y$ following the distribution $\omega$ defined by the
input $x$.

The dataset together with the network define a probability distribution
on pairs $(x,y)$, by first choosing at random an input $x$ in the
dataset, then running the network on this input. The Fisher matrix
associated with this distribution on pairs $(x,y)$ is the average of
the datum-wise Fisher matrix over the dataset
\begin{equation}
F=\E_{x\in \D} F(x)
\end{equation}
(see \cite{Amari2000book}, Section~8.2), or more explicitly
\begin{equation}
F_{w_{ij}w_{i'j'}}=\E_{x\in\D} \E_{y|x} \frac{\partial \ell(y)}{\partial
w_{ij}}\frac{\partial \ell(y)}{\partial w_{i'j'}}
\end{equation}

\paragraph{Exact Fisher matrix versus one-sample Fisher matrix.} One
possible way to train neural networks using the natural gradient is to
estimate the Fisher matrix by taking an input $x$ in the dataset,
taking a random output $y$ for this input, and add the term
$\frac{\partial \ell(y|x)}{\partial w_{ij}}\frac{\partial
\ell(y|x)}{\partial w_{i'j'}}$ to the current estimate of the Fisher
matrix (with a discount factor for older contributions in an online
setting). This leads to the \emph{Monte Carlo natural gradient} with $K$
samples $y$ per input $x$:
\begin{equation}
\label{eq:MCnatgrad}
\hat F_{w_{ij}w_{i'j'}}=\E_{x\in \D}\,\frac{1}{K}\sum_{k=1}^{K}
\frac{\partial \ell(y_k)}{\partial
w_{ij}}\frac{\partial \ell(y_k)}{\partial w_{i'j'}}
\end{equation}
where each $y_k$ is drawn according to the output probability distribution
$\omega$ defined by the output for input $x$. Even $K=1$ can lead to reasonable
results (Section~\ref{sec:exp}).

An important variant uses for $y$ the actual target value for input $x$, instead
of taking $y$ as a random sample given by the activations of the output
layer:
\begin{equation}
\label{eq:onesamplefisher}
\hat F_{w_{ij}w_{i'j'}}=\E_{x\in \D}  \frac{\partial \ell(y(x))}{\partial
w_{ij}}\frac{\partial \ell(y(x))}{\partial w_{i'j'}}
\end{equation}
with $y(x)$ the target for $x$: this is just the \emph{outer product (OP)
metric} of Definition~\ref{def:op}. It is not an unbiased estimate of the
Fisher metric; still, hopefully
when the network converges towards the desired
targets, the law of $y$ taken from the output distribution converges to
the actual target and the two variants get close. This variant has been present for a
long time in
studies on natural gradient (as is clear, e.g., from
Equation~(14) in \cite{APF00}) and is elaborated upon in
\cite{TONGA}. As pointed out in \cite{BengioNG2013}, the two variants are
often confused.

These two variants are both intrinsic. The
OP variant, contrary to the true natural gradient, depends on the targets
and not only on the network and inputs.

Both the true natural gradient, its Monte Carlo approximation, and its
``one-sample''/OP variant give rise to a unitwise version and to a
quasi-diagonal version (Section~\ref{sec:qd}). For a network with $n$
units and at most $d$ incoming connections per unit, the algorithmic cost
of processing each data sample is $O(Kn^2d^2)$ for the Monte Carlo
natural gradient, $O(Knd^2)$ for its unitwise version and $O(Knd)$ for
its quasi-diagonal reduction. Algorithmic cost for the OP metric is the same
with $K=1$.

In Section~\ref{sec:exp} we compare performance of the unitwise natural
gradient, Monte Carlo unitwise natural gradient with $K=1$, and
unitwise OP natural gradient. We will see that although the OP metric and
the one-sample ($K=1$) Monte Carlo natural gradient  look
similar, the latter can perform substantially better.

\paragraph{Exact Fisher matrix computation.} It is possible
compute the exact Fisher matrix (rather than using a single value for
$y$)
by using the Fisher modulus and backpropagation transfer rates. The
latter can be computed
by doing $n\out$ backpropagations for each input.  This is of course more
convenient than computing the expectation $\E_{y|x}$ by summing over the (in the Bernoulli case) $2^{n\out}$
possible outcomes for $y$.

The backpropagation transfer rates $J_k^{k\out}$ from
Definition~\ref{def:trates} simply implement the general
Definition~\ref{def:trates-gen} for ordinary neural networks. In
Section~\ref{sec:algos}, the unitwise natural gradient was obtained from
these transfer rates through the Fisher modulus. Here we reproduce the
corresponding formula for 
all terms of the Fisher matrix, not only the terms of the unitwise
Fisher matrix incoming to a given unit, so we introduce a Fisher modulus
indexed by pairs of units.

\begin{prop}[ (Exact Fisher matrix for neural networks)]
\label{prop:fishisfish}
Let $x$ be an input for the network. Compute the transfer rates
$J_k^{k\out}$ as in Definition~\ref{def:trates}.
Depending on output layer interpretation, set for each pair of units $k$
and $k'$:
\begin{equation}
\label{eq:fishmod}
\begin{cases}
\Phi_{kk'}(x)\deq \sum_{k\out\in \Lout} J_k^{k\out}J_{k'}^{k\out}
&\text{(square-loss)}
\\
\Phi_{kk'}(x)\deq \sum_{k\out\in \Lout}
\frac{J_k^{k\out}J_{k'}^{k\out}}{a_{k\out}(1-a_{k\out})}
&\text{(Bernoulli)}
\\
\Phi_{kk'}(x)\deq
\frac{1}{S}\sum_{k\out\in\Lout}\e^{a_{k\out}}J_k^{k\out}J_{k'}^{k\out}
&\text{(softmax)}
\\
\phantom{\Phi_{kk'}(x)\deq}\;
-\frac{1}{S^2}
\left(\sum_{k\out\in\Lout} \e^{a_{k\out}}J_k^{k\out}\right)
\left(\sum_{k\out\in\Lout} \e^{a_{k\out}}J_{k'}^{k\out}\right)
\\
\phantom{\Phi_{kk'}(x)\deq}\text{where }S\deq\sum_{k\out\in\Lout}
\e^{a_{k\out}}
\\
\Phi_{kk'}(x)\deq
\frac{4}{S}\sum_{k\out\in\Lout}J_k^{k\out}J_{k'}^{k\out}
&\text{(spherical)}
\\
\phantom{\Phi_{kk'}(x)\deq}\;
-\frac{4}{S^2}
\left(\sum_{k\out\in\Lout} a_{k\out}J_k^{k\out}\right)
\left(\sum_{k\out\in\Lout} a_{k\out}J_{k'}^{k\out}\right)
\\
\phantom{\Phi_{kk'}(x)\deq}\text{where }S\deq\sum_{k\out\in\Lout}
a_{k\out}^2
\end{cases}
\end{equation}

Then the entry of the datum-wise Fisher matrix $F(x)$ associated with
parameters $w_{ik}$ and $w_{jk'}$ (including biases with $i=0$ or
$j=0$) is
\begin{equation}
\label{eq:exactfisherx}
F(x)_{w_{ik}w_{jk'}}=a_ia_j \deractf_k \deractf_{k'} \Phi_{kk'}
\end{equation}
and thus the corresponding entry in the Fisher matrix is
\begin{equation}
\label{eq:exactfisher}
F_{w_{ik}w_{jk'}}=\E_{x\in \D}\,a_ia_j \deractf_k \deractf_{k'}\Phi_{kk'}
\end{equation}
\end{prop}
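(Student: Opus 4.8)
The plan is to compute the datum-wise Fisher matrix $F(x)$ by differentiating the loss $\ell(y) = -\ln\omega(y)$ twice (or equivalently taking the expectation of the outer product of its gradient under the model's own output distribution), and then to trace through how a change in the weight $w_{ik}$ propagates to the output activities via the transfer rates $J_k^{k\out}$. The key structural fact is that the influence of $w_{ik}$ on the output factors through the activity $a_k$: indeed $\frac{\partial\ell}{\partial w_{ik}} = \frac{\partial\ell}{\partial a_k}\cdot\frac{\partial a_k}{\partial w_{ik}}$, and from the sigmoid activation one has $\frac{\partial a_k}{\partial w_{ik}} = a_i\, a_k(1-a_k)$ (using $a_0\equiv1$ for the bias). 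This already isolates the factor $a_i\,a_k(1-a_k)$ attached to unit $k$, and symmetrically $a_j\,a_{k'}(1-a_{k'})$ attached to unit $k'$, matching the prefactors in~\eqref{eq:exactfisherx}. What remains is to identify the residual factor as $\Phi_{kk'}$.

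First I would establish the reduction $\frac{\partial a_k}{\partial w_{ik}} = a_i\,a_k(1-a_k)$ directly from the defining equation $a_k=\sigm(\sum_{i\to k}w_{ik}a_i)$, then write the Fisher entry as an expectation over $y$ drawn from the model:
\begin{equation*}
F(x)_{w_{ik}w_{jk'}} = \E_{y|x}\left[\frac{\partial\ell(y)}{\partial w_{ik}}\frac{\partial\ell(y)}{\partial w_{jk'}}\right]
= a_i\,a_k(1-a_k)\,a_j\,a_{k'}(1-a_{k'})\;\E_{y|x}\!\left[\frac{\partial\ell}{\partial a_k}\frac{\partial\ell}{\partial a_{k'}}\right].
\end{equation*}
Next I would use the transfer rates to express $\frac{\partial a_{k\out}}{\partial a_k} = J^{k\out}_k$, so that by the chain rule $\frac{\partial\ell}{\partial a_k} = \sum_{k\out}\frac{\partial\ell}{\partial a_{k\out}}J^{k\out}_k$; substituting gives
\begin{equation*}
\E_{y|x}\!\left[\frac{\partial\ell}{\partial a_k}\frac{\partial\ell}{\partial a_{k'}}\right]
= \sum_{k\out,\,k'\!\out} J^{k\out}_k\,J^{k'\!\out}_{k'}\;\E_{y|x}\!\left[\frac{\partial\ell}{\partial a_{k\out}}\frac{\partial\ell}{\partial a_{k'\!\out}}\right].
\end{equation*}
The inner expectation is precisely the Fisher information of the output-layer interpretation, evaluated at the output activities, and is a known object for each of the three interpretations.

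The main work — and the only place requiring a genuine per-case calculation — is computing this output-layer Fisher information $\E_{y|x}[\partial_{a_{k\out}}\ell\,\partial_{a_{k'\!\out}}\ell]$ for the Bernoulli, square-loss, and classification cases, and checking that it produces exactly the three formulas for $\Phi_{kk'}$ in~\eqref{eq:fishmod}. For the Bernoulli case the output coordinates are independent, so the matrix is diagonal with entries $\frac{1}{a_{k\out}(1-a_{k\out})}$, collapsing the double sum to a single sum and yielding the stated $\Phi_{kk'}$. For the square-loss case each $Y_k\sim\mathcal N(a_k,1)$ is independent with unit Fisher information, giving the identity matrix and hence $\sum_{k\out}J^{k\out}_k J^{k\out}_{k'}$. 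The classification case is the most delicate: here the output is a single categorical variable with $\Pr(k\out)=a_{k\out}^2/S$, so the coordinates are \emph{not} independent and one must carefully compute the Fisher metric of the sphere parametrization, producing the two-term expression with the $\frac{4}{S}$ and $-\frac{4}{S^2}$ coefficients. I expect this classification computation to be the main obstacle, since it requires differentiating $\ell = -\ln(a_{k\out}^2/S)$ with respect to the activities while accounting for the coupling through the normalizer $S$. Once $\Phi_{kk'}$ is identified in each case, reassembling the factors gives~\eqref{eq:exactfisherx}, and averaging over $x\in\D$ gives~\eqref{eq:exactfisher}.
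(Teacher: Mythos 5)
Your proposal is correct and follows essentially the same route as the paper's own proof: isolate the factor $a_i\,a_k(1-a_k)$ from the sigmoid derivative, reduce via the transfer rates $J_k^{k\out}$ (chain rule) to the output-layer Fisher matrix $I_{k\out k'\out}=\E_{y|x}[\partial_{a_{k\out}}\ell\,\partial_{a_{k'\out}}\ell]$, and compute that matrix case by case (diagonal $1/(a_{k\out}(1-a_{k\out}))$ for Bernoulli, identity for square-loss, and the non-diagonal $\tfrac{4}{S}\1_{k=k'}-\tfrac{4a_ka_{k'}}{S^2}$ for classification). The paper merely phrases the pullback as a quadratic form $\natnorm{\d\omega}^2=\sum I_{kk'}\d a_k\d a_{k'}$ rather than as an expectation of outer products of gradients, which is the same computation.
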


The proof is given in the Appendix and is a more or less straightforward application
of the results of the previous section, together with an explicit
computation of the Fisher metric on the output in the Bernoulli,
square-loss, and classification interpretations.


So it is possible to compute the full Fisher matrix by
performing $n\out$ independent backpropagations for each sample input.
The Fisher matrix $F$, being the average of $F(x)$ over the dataset,
may
be approximated by the standard online or small-batch techniques
using samples from the dataset.

For a network with only one hidden layer, this simplifies and no
additional backpropagations are needed.
Indeed, the backpropagation transfer rates of the input layer are never
used, and on the hidden layer are given
by
\begin{equation}
J_k^{k\out}=w_{kk\out} \deractf_{k\out}
\end{equation}
from which the Fisher modulus can be immediately computed. This is the
case treated in \cite{Kur94} (for the Bernoulli interpretation). 

\section{Some properties of unitwise algorithms and their
quasi-diagonal approximations}
\label{sec:someprops}

\subsection{Performance improvement at each step}

A common feature of all gradient-based algorithms in any metric is that
the objective function improves at each step provided the learning rate
is small enough. Consequently this holds for the unitwise natural
gradient, backpropagated metric gradient, and their quasi-diagonal
reductions.

\begin{prop}
Suppose that training has not reached a local optimum, i.e., that the
gradient vector $G$ of Section~\ref{sec:algos} does not vanish. Suppose
that the metric considered is non-degenerate (i.e., respectively, that the matrices
$F^{(k)}$ or $M^{(k)}$ are invertible, or that the denominators in the
quasi-diagonal algorithms do not vanish), so that the algorithms
considered are well-defined.
Suppose that the chosen interpretation $\omega$ of the output layer
depends smoothly on the output layer activities.

Then there exists a value $\eta_C$ of the learning rate such that, for
any learning rate $\eta<\eta_C$, the value of the loss function strictly
decreases after one step of the unitwise natural gradient, backpropagated
metric gradient, or their quasi-diagonal reductions.
\end{prop}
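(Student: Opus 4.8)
The plan is to recognize all four updates as genuine gradient descents in some positive-definite metric, and then invoke the elementary fact that moving along a strict descent direction with a small enough step decreases the objective. Concretely, each update has the common shape $\theta\gets\theta+\eta\,\d\theta$ with step $\d\theta=A^{-1}G$, where $G=-\partial L/\partial\theta$ is exactly the negative differential of the average loss (indeed $G^{(k)}_i=-\E_{x\in\D}\,\partial\ell/\partial w_{ik}$ by~\eqref{eq:defGk}), and where $A$ is the symmetric block-diagonal matrix assembled from the per-unit matrices. For the unitwise natural and backpropagated metric gradients, $A$ has blocks $F^{(k)}$ or $M^{(k)}$; for the two quasi-diagonal reductions, the explicit formulas \eqref{eq:qdfishupdate1}--\eqref{eq:qdfishupdate2} and \eqref{eq:qdbpmupdate1}--\eqref{eq:qdbpmupdate2} are, by the Quasi-diagonal gradient step proposition applied with $b=G^{(k)}$, precisely $\tilde A^{-1}G$ for the quasi-diagonal reduction $\tilde A$ of $F^{(k)}$ or $M^{(k)}$. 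So in every case $\d\theta=A^{-1}G$ with a common symmetric $A$.

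First I would establish that $A$ is positive-definite. In the unitwise case $F^{(k)}_{ij}=\E_{x\in\D}\,a_ia_j\,a_k^2(1-a_k)^2\Phi_k$ and $M^{(k)}_{ij}=\E_{x\in\D}\,a_ia_j\,a_k^2(1-a_k)^2 m_k$ are Gram-type matrices: for any vector $u$ one has $\transp{u}F^{(k)}u=\E_{x\in\D}\,a_k^2(1-a_k)^2\Phi_k\,(\sum_i u_ia_i)^2\geq 0$, the weight being nonnegative since the Fisher modulus and backpropagated modulus are themselves squared norms. Hence these matrices are positive-semidefinite, and the non-degeneracy hypothesis upgrades ``invertible'' to ``positive-definite''. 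For the quasi-diagonal case, $\tilde A$ is the matrix of the quasi-diagonal metric $\qdnorm{\d\theta_k}^2=\norm{\d\theta_{k0}}^2+\sum_i\norm{\d\theta_{ki}}^2$, a sum of squared $A$-norms over the intrinsic direct-sum decomposition $\d\theta_k=\d\theta_{k0}+\sum_i\d\theta_{ki}$; it is therefore positive-semidefinite, and positive-definite exactly when the denominators $A_{ii}A_{00}-A_{0i}^2$ are positive. These denominators are weighted variances, hence nonnegative by Cauchy--Schwarz, and the stated non-degeneracy assumption makes them strictly positive.

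The core computation is then a one-line Taylor expansion. The smoothness hypotheses---sigmoid activation keeps each $a_k$ in $(0,1)$, the activities depend smoothly on the parameters through the affine-then-sigmoid structure, and $\omega$ depends smoothly on the output activities---guarantee that $\eta\mapsto L(\theta+\eta\,\d\theta)$ is differentiable at $0$, with derivative $\langle\partial L/\partial\theta,\d\theta\rangle=\transp{(-G)}A^{-1}G=-\transp{G}A^{-1}G$. Since $A$ is positive-definite, so is $A^{-1}$, and since $G\neq 0$ by the no-local-optimum hypothesis, this directional derivative is strictly negative. By the very definition of the derivative there is then some $\eta_C>0$ with $L(\theta+\eta\,\d\theta)<L(\theta)$ for every $0<\eta<\eta_C$, which is the claim.

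I expect the only genuinely non-routine point to be the positive-definiteness of the quasi-diagonal matrix $\tilde A$: the earlier text needs only its invertibility, so that the step is well-defined, whereas the descent argument needs the full quadratic form $\transp{G}\tilde A^{-1}G$ to be positive. Reading $\tilde A$ as the Gram matrix of the bias-free decomposition, rather than manipulating its explicit expression, is what makes this transparent. Everything else is the standard observation, already flagged in the main text, that gradient descent in a positive-definite metric decreases the objective for a small enough learning rate.
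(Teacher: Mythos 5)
Your proof is correct and follows the same route the paper takes, namely recognizing each update as $\theta\gets\theta+\eta A^{-1}G$ for a positive-definite metric $A$ and invoking the standard first-order descent argument; the paper in fact leaves this as an immediate consequence of that general fact and gives no written proof. Your explicit verification that the Gram structure of $F^{(k)}$ and $M^{(k)}$, and the sum-of-squared-norms structure of the quasi-diagonal reduction $\tilde A$, upgrade the non-degeneracy hypotheses to positive-definiteness is a useful filling-in of details the paper only gestures at (via the Cauchy--Schwarz remark after the quasi-diagonal gradient step proposition).
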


As usual, the value of $\eta_C$ depends on the current state of the
network and thus may change over the course of training.

\subsection{Invariance properties}
\label{sec:inv}

The algorithms presented in Section~\ref{sec:algos} are the
implementation of the gradients and metrics defined in
Section~\ref{sec:invmet}, written out using $[0;1]$-valued activities and the
logistic activation function. We could have written them out, for instance, using
$[-1;1]$-valued activities and the $\tanh$ activation function, and the
learning trajectory would be the same—provided, of course, that the
initialization was done so that both implementations of the network
behave
the same at startup. We present a more precise formulation of this
property.

Imagine that the inputs of the network are subjected to simple
transformations such as scaling ($a_i\gets \alpha_i a_i$ for $i$ in the
input layer) or $0$/$1$ inversion ($a_i \gets 1-a_i$). There is a simple
way to change the parameters of subsequent units so that the final
activation of the network stays the same, namely, $w_{ij}\gets
w_{ij}/\alpha_i$ for scaling and $w_{ij}\gets -w_{ij}$, $w_{0j}\gets
w_{0j}+w_{ij}$ for $0$/$1$ inversion. So clearly the expressivity of a
neural network is not sensitive to such changes.

However, training will behave differently. For instance, if we apply one
step of backpropagation training to the scaled inputs with the scaled
network, the coefficients of units which have been scaled down
$(\alpha_i<1)$ will evolve more slowly and conversely for $\alpha_i>1$.
The final output of the network after the update will be different.
(Hence the common practice of rescaling the activities of units.) The
same goes for $0$/$1$ inversion in a slightly more complicated way: evolution of
the bias depends on the activity of input units, and the weights from
input units with activity close to $0$ will evolve faster than those with
activity close to $1$, as seen on Figure~\ref{fig:bptraj}.

We would like the following invariance for a training procedure: If we
start with two networks $N$ and $N'$ which are fed inputs $x$ and $x'$
with $x'$ obtained from a simple transformation of $x$, and if the
parameters of $N'$ are set such that initially its output is the same as
$N$, then we would like the outputs of $N$ and $N'$ to stay the same
after one step of the training procedure.

This is \emph{not} satisfied by backpropagation. However, for any affine
transform of the activities of any unit, this is satisfied by the natural
gradient, unitwise natural gradient, unitwise outer product
gradient, backpropagated metric gradient, and their quasi-diagonal
reductions.

\bigskip

The sigmoid and $\tanh$ networks correspond to each other by the
following rewriting, thanks to $\tanh(x)=2\sigm(2x)-1$: if $a_k=\sigm(\sum_{i\to k} w_{ik}a_i+w_{0k})$ and
$a'_k=\tanh(\sum_{i\to k} w'_{ik}a'_i+w'_{0k})$ (and interpretation of
the output layer in the $\tanh$ case is done by putting back the
activities in $[0;1]$ via $a'\mapsto 1/2+a'/2$), then the two networks
will behave the same if we set $w_{ik}=4w'_{ik}$ ($i\neq 0$) and
$w_{0k}=2w'_{0k}-2\sum_{i\neq 0} w'_{ik}$.

\begin{defi}
Let $k$ be an input or
internal unit.
Call \emph{$(\alpha,\beta,\gamma)$-affine reparametrization of unit $k$} the
following operation: Replace the activation of unit $k$
\begin{equation}
a_k=f^k_{\theta_k}(a_{i_1},\ldots,a_{i_{n_k}})
\end{equation}
where $\theta_k=(w_{0k},(w_{ik})_{i\to k})$,
with
\begin{equation}
a'_k=\alpha f^k_{\gamma\,\theta'_k}(a_{i_1},\ldots,a_{i_{n_k}})+\beta
\end{equation}
where $\theta'_k=(w'_{0k},(w'_{ik})_{i\to k})$.
Send $a'_k$ instead of $a_k$ to the next layer of the network, with
weights modified as follows:
\begin{equation}
w'_{kj}=w_{kj}/\alpha,\qquad w'_{0j}=w_{0j}-w_{kj}\beta/\alpha
\end{equation}
for all units $j$ such that $k\to j$, and $w'_{ik}=w_{ik}/\gamma$ for all
units $i$ with $i\to k$ (including $i=0$), so that the final outputs before and
after the reparametrization are the same.
\end{defi}

The passage from $\sigm$ to $\tanh$ consists in applying the
$(2,-1,2)$-reparametrization to all units.
We have restricted the
definition to non-output units to simplify notation; for output units a
corresponding reparametrization of the output interpretation has to be
done.

The following result is an immediate consequence of the intrinsic definition of
the algorithms. It is only part of the invariance properties of the objects
from Section~\ref{sec:invmet}. In particular, in the limit of small
learning rates ($\eta\to 0$), the trajectories \eqref{eq:gradtraj} of the
unitwise natural gradient, backpropagated
metric gradient, and unitwise OP gradient, are invariant under all smooth
one-to-one reparametrization and not only affine ones.

\begin{prop}[ (Invariance under affine reparametrization of activities)]
Consider a network obtained from an initial network by applying any
number of $(\alpha,\beta,\gamma)$-affine reparametrizations to any number
of units (where $\alpha$, $\beta$ and $\gamma$ may depend on the unit).

Then, after one step of the unitwise natural gradient, backpropagated
metric gradient, Monte Carlo unitwise natural gradient, unitwise OP gradient, or their quasi-diagonal reductions, the final outputs of the
non-reparametrized and reparametrized networks are the same.

Consequently, the learning trajectories, and performance, of the two networks with these
corresponding initializations are the same.
\end{prop}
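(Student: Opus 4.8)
The plan is to reduce the whole statement to the intrinsic nature of the gradients established in Section~\ref{sec:invmet}, using crucially that the reparametrizations considered are \emph{affine}. First I would observe that an $(\alpha,\beta,\gamma)$-affine reparametrization of a unit $k$ is merely an affine change of parametrization on the relevant manifolds: the activity is replaced by $a'_k=\alpha a_k+\beta$, an affine change of coordinates on the activity space $\A_k$, while the prescription $w'_{ik}=w_{ik}/\gamma$ together with the compensating changes $w'_{kj}=w_{kj}/\alpha$ and $w'_{0j}=w_{0j}-w_{kj}\beta/\alpha$ on the outgoing side is exactly the induced affine change of coordinates on the incoming parameters $\theta_k$ and on the parameters of the units fed by $k$. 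Thus the map $\psi$ sending the old parameter vector $\theta$ to the reparametrized one $\theta'$ is affine, and by construction it leaves the input–output map unchanged: $\omega(x)$ is the same for the two networks at startup, for every input $x$.

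Next I would invoke intrinsicness. By the invariance of the metrics established above, the unitwise natural metric, the backpropagated metric, and the unitwise TSD metric produce, in the two parametrizations, matrices $A$ representing the \emph{same} quadratic form on tangent vectors; and by the proposition of Section~\ref{sec:qd} the same is true of their quasi-diagonal reductions, precisely because the activation is a fixed nonlinearity composed with an affine function and $\psi$ is affine. Consequently, as recorded in Section~\ref{sec:grads}, the gradient direction $\grad_\theta L=A^{-1}\partial L/\partial\theta$ is an intrinsic tangent vector: its coordinate representation in the new parametrization is obtained from the old one by applying the constant linear part $D\psi$ of the reparametrization.

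The crux is then the affine covariance of the update rule. Because $\psi$ is affine, adding a tangent vector commutes with it:
\begin{equation}
\psi(\theta-\eta\,\grad_\theta L)=\psi(\theta)-\eta\,D\psi(\grad_\theta L).
\end{equation}
Here $D\psi(\grad_\theta L)$ is exactly the representation of the intrinsic gradient in the reparametrized coordinates, which is what the reparametrized algorithm uses to update $\psi(\theta)=\theta'$. Hence the two updated parameter vectors are again $\psi$-related and compute the same final output. Since $\psi$ is fixed once and for all, an induction on the number of steps shows that the two parameter vectors stay $\psi$-related at every iteration; as the loss depends on the parameters only through the identical output $\omega(x)$, the two learning trajectories and their performance coincide at all times.

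The main obstacle I anticipate is not computational but conceptual: one must pin down the covariance step cleanly, checking that the intrinsic tangent vector $\grad_\theta L$ transforms under $\psi$ exactly by $D\psi$ and that this matches the output of the reparametrized algorithm. This is the single place where affineness is indispensable—for a non-affine $\psi$ the operation $\theta\mapsto\theta-\eta\,\grad_\theta L$ would fail to intertwine with $\psi$ and the two updates would differ, which is exactly why the statement is restricted to affine reparametrizations. For the quasi-diagonal variants one must additionally verify that the $(\alpha,\beta,\gamma)$-reparametrization lies within the class of affine reparametrizations for which quasi-diagonal reduction was shown to be intrinsic.
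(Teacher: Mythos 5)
Your proposal is correct and follows essentially the same route as the paper, which dismisses the proposition as ``an immediate consequence of the intrinsic definition of the algorithms'': you simply unpack that one-line argument, identifying the $(\alpha,\beta,\gamma)$-reparametrization as an affine change of coordinates on activities and parameters, invoking the intrinsicness of the metrics (and of the quasi-diagonal reduction under affine reparametrizations), and noting that adding the intrinsic gradient tangent vector commutes with an affine coordinate change — which is precisely the point the paper itself makes in Section~\ref{sec:grads} when explaining why invariance is restricted to affine reparametrizations.
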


This may seem a simple thing, but we should keep in mind that this
property is \emph{not} satisfied by backpropagation, or by quasi-Newton
methods if the latter use diagonal approximations of the Hessian.

In particular, the algorithms presented here are insensitive to shifting
and scaling of all units in the network. Traditionally, it is recommended
to normalize the activities on input units so that they average to $0$
over the dataset and have a prescribed variance: the algorithms here
automatically do the same in an implicit way, for all (not only input)
units. As a consequence, units with low activation levels get updated as
fast as highly activated units.  (Note that as discussed after the
definition of the quasi-diagonal algorithms, these averages and variances
are computed according to non-uniform weights on the dataset given by the
Fisher modulus or backpropagated modulus.)

Still the invariance above only applies if the two networks considered
have corresponding initializations. For instance, if the initial weights
are random with a variance set to $1$ whatever the data, obviously the
initial behavior of the network will be sensitive to scaling of its
input. So \emph{these methods do not remove the need for traditional
recommendations for initializing the weights} (either by normalizing the
data and then taking weights of size $1$, or by taking initial weights
depending on the variance or covariance matrix of the data).

\bigskip

The unitwise gradients (natural, backpropagated metric, OP, and Monte
Carlo natural),
but not
their quasi-diagonal reductions, have a further, more interesting invariance
property: invariance under affine recombination of the signals a unit
receives from its various incoming units. For instance, if we start with
zero weights, an internal unit will evolve in the same way if it receives
$f$ and $f+\eps g$ (where $f$ and $g$ are seen as function of the input
$x$) as if it receives $f$ and $g$. This is especially useful if $g$ is
correlated to the desired output.

\begin{prop}[ (Invariance under affine recombination of incoming
signals)]
\label{prop:invrecomb}
Consider a neural network and define a new one in the following way. Let
$k$ be a non-input unit in the network, with $n_k$ incoming units, and
let $\phi:\R^k\to\R^k$ be an invertible affine map. Define a new network
by replacing the activation function at unit $k$
\begin{equation}
a_k=f^k_{\theta_k}(a_{i_1},\ldots,a_{i_{n_k}})
\end{equation}
with
\begin{equation}
a_k=f^k_{\phi^\ast(\theta_k)}(\phi(a_{i_1},\ldots,a_{i_{n_k}}))
\end{equation}
still parametrized by $\theta_k$,
where $\phi^\ast(\theta_k)$ results from applying the
dual\footnote{$\theta_k$ is an affine form over the $n_k$-tuple of
incoming activities. $\phi^\ast$ is defined, axiomatically, by the property that
applying $\phi^\ast(\theta_k)$ to the activities transformed by $\phi$, is
the same as applying $\theta_k$ to the untransformed activities.
Decomposing $\theta_k=(w_{0k},(w_{ik})_{i\to k})$, the affine
matrix defining $\phi^\ast$ is the transpose of the inverse of the affine
matrix defining $\phi$.} inverse affine
transformation $\phi^\ast$ to $\theta_k$, so that initially the responses
of the original and reparametrized networks are the same.

Then, after one step of the unitwise natural gradient, backpropagated
metric gradient, Monte Carlo natural gradient, or unitwise OP gradient, with respect to $\theta_k$, the final outputs of the
non-reparametrized and reparametrized networks are the same.

Consequently, the learning trajectories, and performance, of the two
networks with these corresponding initializations are the same.
\end{prop}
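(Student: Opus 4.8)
The plan is to reduce the statement to the general invariance principle already established for these metrics, namely that the unitwise natural, backpropagated, and unitwise TSD metrics are \emph{intrinsic}: they were defined without reference to any parametrization of the activities or of the parameter $\theta_k$. The key observation is that an affine recombination $\phi$ of the incoming signals to unit $k$ is nothing but a reparametrization of the affine form $\theta_k$ that unit $k$ applies to its inputs. Concretely, $\theta_k$ is an affine form on the $n_k$-tuple of incoming activities, and replacing $(a_{i_1},\ldots,a_{i_{n_k}})$ by $\phi(a_{i_1},\ldots,a_{i_{n_k}})$ while replacing $\theta_k$ by $\phi^\ast(\theta_k)$ produces \emph{exactly the same} affine function of the original activities. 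So the two networks compute the identical activity $a_k$, the identical downstream activities, and the identical output $\omega(x)$ for every input $x$. They are two coordinate descriptions of one and the same map from parameters to outputs.

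First I would verify that the initialization makes the two networks agree pointwise: by construction $\phi^\ast(\theta_k)$ applied after $\phi$ equals $\theta_k$ applied directly, so $a_k$ and hence the whole forward pass coincide at startup. Next I would invoke the intrinsic character of the metric at unit $k$. The unitwise metric at $k$ is a $(0,2)$-tensor on the parameter manifold $\Theta_k$; under the change of coordinates $\theta_k \mapsto \phi^\ast(\theta_k)$ it transforms as a tensor, i.e.\ the matrix $A$ in the new coordinates is $\transp{P} A' P$ where $P$ is the Jacobian of the coordinate change and $A'$ the matrix in the old coordinates. Simultaneously the gradient covector $G = -\partial L/\partial \theta_k$ transforms contravariantly as $G = \transp{P} G'$. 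Therefore the update $\d w = A^{-1} G$ transforms as $\d w = P^{-1}\d w'$, which is precisely the tangent-vector transformation law. The two computed update directions are thus the same intrinsic tangent vector on $\Theta_k$, written in two coordinate systems.

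The final step is to observe that applying this common tangent vector to the respective parameters $\theta_k$ and $\phi^\ast(\theta_k)$ yields two new parameter values that are again related by $\phi^\ast$, because the relation $\phi^\ast$ is affine and the gradient step is an affine (indeed linear) operation on the tangent vector; this uses that the update is intrinsic only under \emph{affine} reparametrizations, exactly as noted in Section~\ref{sec:grads}. Since the post-update parameters are still $\phi^\ast$-related and $\phi$ is fixed, the forward passes again coincide and the outputs $\omega(x)$ are equal after one step. Iterating gives equality of the entire learning trajectory and hence of performance. The one point requiring genuine care, and the main obstacle, is the interaction between the affine (not merely linear) nature of $\phi$ and the bias coordinate $w_{0k}$: one must check that the dual-inverse prescription $\phi^\ast$ correctly tracks how the bias absorbs the translation part of $\phi$, so that the tensor and covector transformation laws hold with the \emph{same} matrix $P$ (the transpose-inverse of the affine matrix of $\phi$). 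Once this bookkeeping is confirmed, the argument is just the statement that a gradient computed in an intrinsic metric is an intrinsic tangent vector, composed with affine-invariance of the Euler step.
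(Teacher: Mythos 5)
Your argument is correct, and it coincides in substance with the second of the two justifications the paper gives: the paper remarks that the unitwise natural, backpropagated, and unitwise TSD metrics never break the $n_k$-tuple of incoming activities into its individual components, so they remain well defined (and intrinsic) when the incoming tuple is viewed as a point of an abstract manifold; invariance under $\phi$ is then a special case of the general invariance proposition. What you do differently is to make that remark quantitative, by observing that the reparametrized network is exactly the original one precomposed with the linear map $\theta_k\mapsto\phi^\ast(\theta_k)$ on parameter space, and then checking the transformation laws $A=\transp{P}A'P$, $G=\transp{P}G'$, hence $A^{-1}G=P^{-1}(A')^{-1}G'$, together with the fact that the Euler step commutes with an affine change of coordinates. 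This is a perfectly valid and arguably more self-contained route. The paper's \emph{primary} argument is different: it derives the proposition from the best-fit interpretation (Proposition~\ref{prop:bestfit}), which shows that the update at unit $k$ is the solution of a weighted least-squares problem whose answer depends only on the \emph{linear span} of the incoming activities $(1,a_{i_1},\ldots,a_{i_{n_k}})$ as functions over the dataset; that route buys an additional interpretation (implicit orthonormalization of incoming signals) and makes transparent why the quasi-diagonal reductions fail to have this property. Two small points on your write-up: the step you flag as "requiring genuine care" is already settled by the definition of $\phi^\ast$ in the statement's footnote --- $\phi^\ast$ acts on the full coordinate vector $(w_{0k},w_{1k},\ldots)$ by a single fixed invertible matrix (the transpose-inverse of the affine matrix of $\phi$ in homogeneous coordinates), so the bias is absorbed automatically and your matrix $P$ is well defined; and the word "contravariantly" applied to the covector $G$ is a terminology slip, though your formula is the right one.
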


Once more, this is not simply $\phi$ in one place cancelling out
$\phi^{-1}$ in another: indeed, backpropagation or quasi-Hessian methods
do not have this property, and neither do the quasi-diagonally-reduced
algorithms.

\begin{proof}
This comes as a consequence of the best-fit interpretation
(Proposition~\ref{prop:bestfit}) below.

It also follows from the
intrinsic constructions by noting that, unlike the quasi-diagonal
reductions, the construction of these gradients never breaks down the
$n_k$-tuple of incoming activities into its components from each incoming
unit; thus, contrary to the quasi-diagonal reductions, we could have
written the unitwise natural and backpropagated metrics in a setting
where
activation functions are given by
$a_k=f^k_{\theta_k}(g(a_{i_1},\ldots,a_{i_{n_k}}))$ where $g$ is a fixed,
parameterless map with values in a manifold.
\end{proof}


\subsection{Best-fit interpretation}
\label{sec:bestfit}

The unitwise natural gradient, backpropagated metric gradient, and
unitwise OP gradient (but
not their quasi-diagonal reductions) share an interpretation as a
least-squares regression problem at each unit. Namely, the backpropagated
value $b_k(x,y)$ on input $x$ and target $y$ indicates how the activity
of unit $k$ should change on input $x$. Seeing $b_k$ as a function of the
input $x$, unit $k$ has to use the activities of incoming units $i$ (also
seen as functions of $x$) and combine them using the weights $w_{ik}$, to
match $b_k(x,y)$ as close as possible for each $x$.
This idea is presented in \cite{Kur94} in a more specific setting. This is
also relevant to the behavior of the algorithms when the matrices $F$ and
$M$ defining the metrics are singular or close to singular, as we will
see.

\begin{prop}[ (Intrinsic gradients as best fit to $b$)]
\label{prop:bestfit}
Let $k$ be a non-input unit in the network.
For $x$ in the dataset $\D$, let $b_k(x)$ be the backpropagated value
\eqref{eq:b}
obtained on input $x$ and the corresponding target $y$.

Consider the solution $\lambda=(\lambda_i)$ to the following
weighted least-squares problem:
\begin{equation}
\label{eq:leastsquares}
\lambda=\argmin \left\{
\sum_{x\in\D} \left(\sum_i \lambda_i
a_i(x)-\frac{b_k(x)}{\deractf_k(x)\Phi_k(x)}\right)^2
W_x
\right\}
\end{equation}
where $i$ runs over the incoming units to $k$ (including $i=0$ with
$a_0\equiv 1$), 
$\Phi_k(x)$ is the Fisher modulus (Definition~\ref{def:fishmod}),
and the weights are
\begin{equation}
\label{eq:bestfitweights}
W_x\deq \deractf_k(x)^2\Phi_k(x)
\end{equation}

Then the unitwise natural gradient step \eqref{eq:locnatstep} is given by
$\lambda$, namely, the update is $w_{ik}\gets w_{ik}+\eta \lambda_i$ at
each unit $k$.

The same holds for the backpropagated metric gradient using the
backpropagated modulus $m_k$ (Definition~\ref{def:bpmod}) instead of the Fisher
modulus $\Phi_k$.

The same holds for the unitwise OP gradient using $b_k(x)^2$ instead of the
Fisher modulus $\Phi_k$.
\end{prop}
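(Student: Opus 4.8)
The plan is to recognize the weighted least-squares problem \eqref{eq:leastsquares} as an ordinary linear regression in the unknowns $\lambda=(\lambda_i)$ and to verify that its normal equations are precisely $F^{(k)}\lambda=G^{(k)}$, i.e.\ the defining equation \eqref{eq:locnatgrad} of the unitwise natural gradient step. First I would fix the unit $k$, drop the superscript $(k)$, and abbreviate the regression target as $t(x)\deq b_k(x)/(a_k(x)(1-a_k(x))\Phi_k(x))$, the weights being $W_x=a_k(x)^2(1-a_k(x))^2\Phi_k(x)$ as in \eqref{eq:bestfitweights}. Differentiating the objective $\sum_{x\in\D}(\sum_i\lambda_i a_i(x)-t(x))^2W_x$ with respect to each $\lambda_j$ and setting the result to zero gives the normal equations
\begin{equation}
\sum_i\lambda_i\Big(\ssum_{x\in\D}a_i(x)a_j(x)W_x\Big)=\ssum_{x\in\D}t(x)a_j(x)W_x
\end{equation}
for every incoming index $j$ (including $j=0$).

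Next I would evaluate both sides. For the coefficient matrix, substituting $W_x$ gives $\sum_{x\in\D}a_ia_jW_x=\sum_{x\in\D}a_ia_ja_k^2(1-a_k)^2\Phi_k=(\#\D)\,F_{ij}$ straight from the definition of the unitwise Fisher matrix, using $\E_{x\in\D}=\tfrac1{\#\D}\ssum_{x\in\D}$. The decisive simplification is on the right-hand side: in the product $t(x)a_j(x)W_x$ the denominator $a_k(1-a_k)\Phi_k$ of $t$ cancels against $a_k^2(1-a_k)^2\Phi_k$ in $W_x$, leaving one surviving factor $a_k(1-a_k)$, so that
\begin{equation}
t(x)a_j(x)W_x=a_j\,a_k(1-a_k)\,b_k.
\end{equation}
Hence $\sum_{x\in\D}t\,a_jW_x=(\#\D)\,G_j$ by the definition \eqref{eq:defGk} of $G$. (One reads this product form directly, or assumes $\Phi_k(x)>0$, so the cancellation is legitimate even where $\Phi_k$ is small.) The common factor $\#\D$ drops out and the normal equations collapse to $F\lambda=G$.

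Since the objective is a nonnegative quadratic form in $\lambda$, and $F$ is invertible whenever the step \eqref{eq:locnatgrad} is defined, the minimizer is the unique solution $\lambda=F^{-1}G=\d w^{(k)}$, which is exactly the update \eqref{eq:locnatstep}. Finally I would observe that the weights and target in \eqref{eq:leastsquares}--\eqref{eq:bestfitweights} were manifestly engineered so that the modulus $\Phi_k$ cancels, and therefore the identical computation settles the two remaining claims verbatim: replacing $\Phi_k$ by the backpropagated modulus $m_k$ turns the coefficient matrix into $M^{(k)}$ and yields $M\lambda=G$, the backpropagated metric gradient \eqref{eq:bpmgrad}; and replacing $\Phi_k$ by $b_k^2$ turns it into the unitwise TSD matrix $\E_{x\in\D}a_ia_ja_k^2(1-a_k)^2b_k^2=\E_{x\in\D}\tfrac{\partial\ell}{\partial w_{ik}}\tfrac{\partial\ell}{\partial w_{jk}}$ and yields the unitwise TSD step. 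The entire content is the algebraic identity $t\,a_j\,W_x=a_j a_k(1-a_k)b_k$; there is no genuine obstacle here, only the bookkeeping of confirming that the prescribed weights reproduce exactly the matrix $F^{(k)}$ and the vector $G^{(k)}$.
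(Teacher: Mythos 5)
Your proof is correct and follows essentially the same route as the paper, which simply invokes the standard formula for the weighted least-squares solution with the weights \eqref{eq:bestfitweights}; you merely spell out the normal equations and verify the cancellation $t(x)\,a_j(x)\,W_x=a_j\,a_k(1-a_k)\,b_k$ that makes them collapse to $F^{(k)}\lambda=G^{(k)}$ (and likewise for $M^{(k)}$ and the TSD matrix). Nothing is missing.
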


Thus, the gradient step depends on the \emph{linear span} of the incoming
activities $(a_i(x))_{i\to k}$, seen as functions over the dataset
(which, by the way, proves Proposition~\ref{prop:invrecomb} above). This
is why the gradient step is the same whether the unit receives signals
$f(x)$ and $g(x)$ or $f(x)$ and $f(x)+\eps g(x)$. 
Thus, these algorithms perform an implicit orthonormalization of the
incoming signals at each unit (not only input units) in the network.

\begin{proof}
A direct application of the well-known formula for the solution of the
weighted least-squares problem~\eqref{eq:leastsquares}, with the choice
of weight~\eqref{eq:bestfitweights}, yields exactly the updates
\eqref{eq:locnatgrad} and \eqref{eq:bpmgrad}.
\end{proof}

\paragraph{Non-invertibility and regularization of the matrices.} In
several situations the matrices $F$ and $M$ used to define the unitwise natural
update and backpropagated metric update can be singular.

This is the case, for
instance, if one input unit is uniformly set to $0$ over all elements in
the dataset: obviously such a unit is not informative, and the
corresponding term will vanish both in the metric and in the
gradient.
This is also the case when, e.g., two units incoming to the same unit are
perfectly correlated.  Correlation in the activation profiles
happens systematically in case the size of the dataset is smaller than the
number of incoming parameters at a given unit.

The linear regression viewpoint limits, in theory, the seriousness of
these issues:
this only means the linear regression problem has several
solutions (one can add any quantity to a non-informative weight), and any
of them will do as an update. Indeed, for instance, the matrix $M$ in
Definition~\ref{def:bpmgrad} is of the form $A\transp{A}$,
and $M^{-1}$ is applied to the vector $G$ which is of the form $AY$, thus
$G$ always lies in the image of $A$ and thus the linear system is
underdetermined, not overdetermined.
From the gradient ascent viewpoint this means the
matrix $M$ will be singular but the gradient term $\partial L/\partial w$
will vanish in the corresponding directions.

Numerically, however, the issue must be dealt with. One can use the
Moore--Penrose
pseudoinverse of $M$ (or $F$ in the Fisher matrix case), obtained by adding $\eps.\Id$ to $M$ or to
$F$ with very small $\eps$. This
is a standard regularization technique. It has the advantage of
producing a well-defined update when $\eps\to 0$, asymptotically
independent of $\eps$.

Thus, if a formal definition is needed, one can decide to use the
Moore--Penrose pseudoinverse for $M^{-1}$ and $F^{-1}$ in the definition
of the updates.  However, this formally breaks the invariance properties:
the Moore--Penrose pseudoinverse selects, among the several possible
solutions $(\lambda_i)$ to an underdetermined least squares problem, the
one with smallest norm $\sum \lambda_i^2$, and this is not intrinsic.

\section{A first experimental comparison}
\label{sec:exp}

Although the main focus of this article is theoretical, we performed a
light set of experiments to ensure that the suggested methods are
viable. The companion article \cite{pcnn} contains more in-depth
experiments with recurrent networks and complex symbolic sequences.

To test the influence of the different methods, we chose a very simple
problem in which a perfect solution is expected to be found.  A sparsely
connected network with $5$ layers of size $100$, $30$, $10$, $30$, and
$100$ was built, and $16$ random length-$100$ binary strings were fed to
the input layer, with the target equal to the input (auto-encoding). Ideally the network
learns to encode each of the $16$ samples using $4$ bits on the middle
layer (thus with room to spare) and uses the bottom layer parameters to
rewrite the output from this. This is purely an optimization task without
a learning aspect, as there is no generalization to be done and no
underlying pattern. The focus is on which methods are able to converge to
a good solution.

The sparsely connected network is built at random in each instance as
follows. Each of the $100$ units in the input layer is linked to $5$
randomly selected nodes in the first hidden layer. Each of the $30$
units in the first hidden layer is linked to $5$ random nodes in the
middle hidden layer. The scheme is reversed for the bottom part of the
model: each of the $100$ output units is linked to $5$ random nodes in
the last hidden layer, and each unit in the last hidden layer is linked
to $5$ random nodes of the middle hidden layer.

For each instance, the dataset is made of $16$ random binary strings of
length $100$.  The target for each input is identical to the input. We
use
Bernoulli interpretation of the output.

Note that this setting is adverse for the unitwise and quasi-diagonal
natural gradients, which require a small output layer; this must be
remembered in the comparisons below.

To test the influence of parametrization on non-invariant algorithms, and
to check invariance of the invariant ones, each algorithm was implemented
both using $\sigm(x)$ and $\tanh(x)$ as the activation function.

The methods tested are: backpropagation; unitwise natural gradient;
quasi-diagonal natural gradient; unitwise OP gradient; Monte Carlo
unitwise or quasi-diagonal natural gradient with one sample ($K=1$ in
\eqref{eq:MCnatgrad}); backpropagated metric gradient;
quasi-diagonal backpropagated metric gradient; diagonal Gauss--Newton
(\cite{LBOM96,peskylr}; equivalent to keeping only the diagonal terms in the quasi-diagonal
backpropagated metric gradient); and a batch version of Adagrad/RMSprop
\cite{AdaGrad2011} in which the learning rate for each gradient component
is divided by the root mean square of this component over the samples.

Since the sample size is small, the algorithms were run in batch mode.

\paragraph{Regularization.} 
The algorithms were taken directly from Section~\ref{sec:algos}. To all
methods except backpropagation, we added a
regularization term of $10^{-4}\Id$ to the various matrices involved, to
stabilize numerical inversion.
This value is not so small;
values such as $10^{-7}$ seemed to affect performance. This is probably
due to the small sample size ($16$ samples): each sample contributes a
rank-$1$ matrix to the various metrics. Larger sample sizes would
probably need less regularization.

\paragraph{Initialization.} For the $\tanh$ activation function, all the
weights were initialized to a centered Gaussian random variable of
standard deviation $1/\sqrt{d_j}$ with $d_j$ the
number of units pointing to unit $j$, and the biases set to $0$. For the sigmoid activation
function, the initialization was the corresponding one (using
Eqs.~\ref{eq:sigmtanh1} and~\ref{eq:sigmtanh2}) so that initially the
responses of the networks are the same: namely, each weight was set to a
centered Gaussian random variable of standard deviation $4/\sqrt{d_j}$, and then the bias at unit
$k$ was set to $-\sum_{i\,i\to k}w_{ik}/2$. This initialization has the
property that if the incoming signals to a unit are independent, centered
about $1/2$ (sigmoid) or $0$ (tanh) and of variance $\sigma$ with
$\sigma$ not too large, then the output of the unit is also centered of
variance $\approx \sigma$. (The factor $4$ in the sigmoid case
compensates for the derivative $1/4$ of the sigmoid function at $0$.) See
the argument in \cite{GlorotBengio2010}\footnote{The other initialization
suggested in \cite{GlorotBengio2010}, with weights of magnitude
$\sqrt{6/(d_j+d'_j)}$ with $d'_j$ the number of edges \emph{from} $j$,
did not make any significant difference in our setup.}.

\paragraph{Learning rate.} A simple adaptive method was used for the
learning rate. All methods based on gradients in a metric have a
guarantee of improvement at each step if the learning rate is small
enough. So in the implementation, if a step was found to make the loss
function worse (in a batch mode, thus summed over all samples), the step
was cancelled and the learning rate was divided by $2$. If the step
improves the loss function, the learning rate is increased by a factor
$1.1$. The initial learning rate was set to $0.01$; in practice the
initial value of the learning rate is quickly forgotten and has little
influence.

Unfortunately this scheme only makes sense in batch mode, but it has the
advantage of automatically selecting learning rates that suit each
method, thus placing all methods on an equal footing. 

\paragraph{Execution time and number of iterations.} First, $10,000$ steps of backpropagation
were performed on the whole dataset, in batch mode. The resulting running
time was set aside and converted to an equivalent number of iterations
for all of the other algorithms. This is a very rough way to proceed,
since the running times can depend on the implementation
details\footnote{We tried to implement each method equally carefully.},
and vary from run to run (because floating point operations do
not take the same time depending on the numbers they are operating on,
especially when both very small and very large values are involved).

Most of all, the different methods scale in different ways with the
network, and so the network used here may not be representative of other
situations. In particular this auto-encoder setting with $100$ output
units puts the unitwise natural gradient and quasi-diagonal natural
gradient at a disadvantage (on the same time budget they must performe $n\out$
backpropagations per sample), compared to, e.g., a classification
setting.

Nevertheless, we give in Table~\ref{fig:niter} the numbers of iterations giving roughly equal
running time for each method.

\begin{table}
\begin{center}
\begin{tabular}{lc}
\multicolumn{1}{c}{Method} & Number of iterations
\\\hline
Backpropagation (sigmoid) & 10,000
\\
Backpropagation (tanh) & 10,000
\\
Natural gradient & 9 to 10
\\
Unitwise natural gradient & 2,100 to 2,300
\\
Quasi-diagonal natural gradient & 2,800 to 3,100
\\
Backpropagated metric & 4,200 to 4,300
\\
Quasi-diagonal backpropagated metric & 7,400 to 7,500
\\
Monte Carlo unitwise natural gradient ($K=1$) & 2,800 to 2,900
\\
Monte Carlo quasi-diagonal natural gradient ($K=1$) & 3,900 to 4,000
\\
Unitwise OP gradient & 4,000 to 4,100
\\
Diagonal Gauss--Newton & 7,700 to 7,800
\\
AdaGrad & 8,000 to 8,100
\\
\hline
\end{tabular}
\end{center}
\label{fig:niter}
\caption{Number of iterations resulting in approximately equal execution
times for our problem}
\end{table}

The natural gradient was also tested (using the exact full Fisher matrix
as obtained from Proposition~\ref{prop:fishisfish}). The computational
cost is very high and only $10$ iterations take place in the alloted
time, too few for convergence. Thus we do not report the associated
results.

\paragraph{Results.} In Table~\ref{fig:res}, we report the 
average loss per sample, in bits, and its standard deviation, at the end
of the allocated number of training iterations.  These values
can be interpreted as representing the average number of bits of an
output that the model did not learn to predict correctly (out of $100$). 
The results of the implementations using sigmoid and tanh activation are
reported separately.

Performance as a function of time is plotted in Figure~\ref{fig:fullfig}.

The statistics were made using $20$ independent runs for each method.

\begin{table}
\begin{center}
\begin{tabular}{lcc}
\multicolumn{1}{c}{\multirow{2}{*}{Method}}
& \multicolumn{2}{c}{Average loss (bits) $\pm$ std-dev}
\\
\cline{2-3}
& sigmoid & tanh 
\\
\hline
\emph{\hspace{-.5em}Non-invariant:}
\\
Backpropagation & $35.9\pm2.1$ & $24.7\pm 2.2$
\\
Diagonal Gauss--Newton & $11.6\pm 2.5$ & $3.5\pm 2.0$
\\
AdaGrad &	$51.1 \pm 3.3$ & $25.3\pm 2.0$
\\
\emph{\hspace{-.5em}Invariant:}
\\
Unitwise natural gradient & $0.9\pm 1$ & $1.4\pm 1.8$
\\
Quasi-diagonal natural gradient & $3.5\pm 1.2$ & $3.4\pm 1.6$
\\
Backpropagated metric & $0.8\pm 0.8$ & $0.3\pm 0.5$
\\
Quasi-diagonal backpropagated metric & $1.9\pm1.2$ & $1.5\pm 1.3$
\\
Monte Carlo unitwise natural gradient & $12.9 \pm 1.5$ & $14.1 \pm 2.2$
\\
Monte Carlo quasi-diagonal natural gradient & $7.9 \pm 2.3$ & $10.1 \pm 2.5$
\\
Unitwise OP gradient & $24.7\pm 2.5$ & $28.5 \pm 3.4$
\\\hline
\end{tabular}
\end{center}
\caption{Average loss per sample (bits) after an execution time
equivalent to $10,000$ backpropagation passes, computed over $20$
independent runs, together with standard deviation over the runs}
\label{fig:res}
\end{table}

\begin{figure}
\begin{center}
\includegraphics[width=.98\columnwidth]{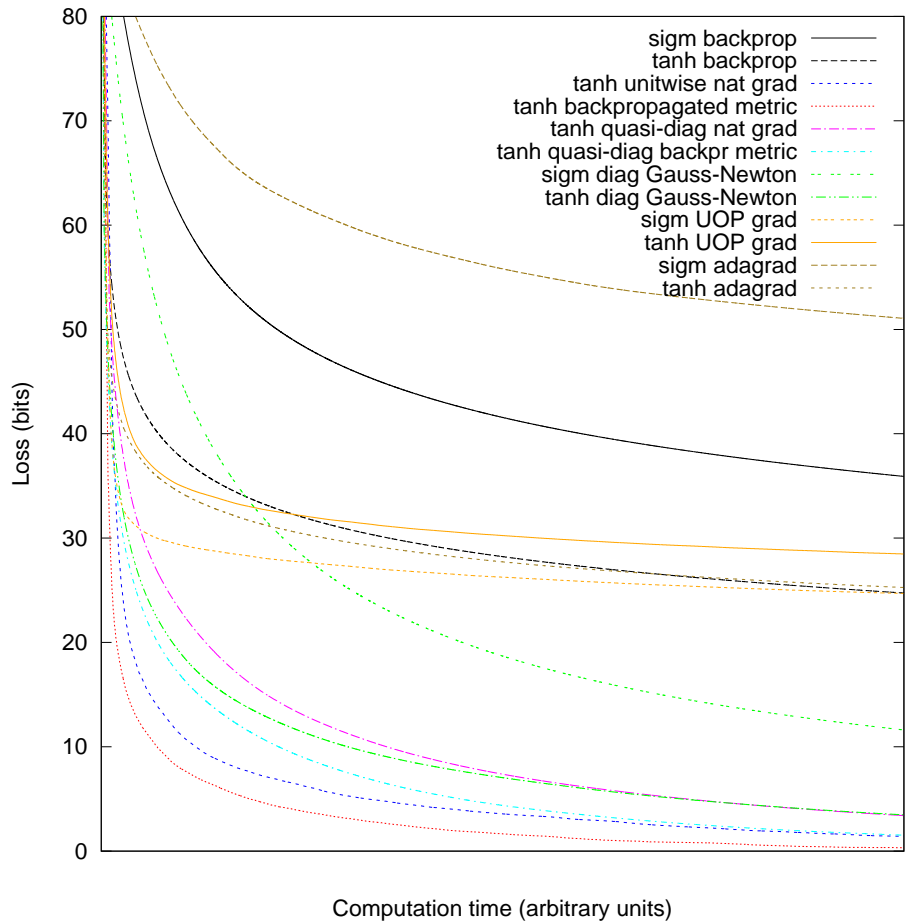}

\caption{\label{fig:fullfig}
Performance over time of all algorithms involved. For better readability
the trajectories of the invariant algorithms have been plotted only in
tanh implementation (Figs.~\ref{fig:mainfig} and~\ref{fig:invtraj} show them in sigmoid
implementation for completeness). Average over 20 runs.}
\end{center}
\end{figure}

\paragraph{Interpretation.} These results are mainly illustrative: the
situation considered here may not be representative because of the small
sample size and network size involved.

Still, it is clear that for problems of this size, the more elaborate
algorithms are very competitive. Only the tanh implementation of the
diagonal Gauss--Newton method comes close to the invariant algorithms
(while its performance in sigmoid implementation is not as good).

As can be expected, the invariant algorithms have similar performance in
sigmoid or tanh implementation: trajectories match each other closely
(Figure~\ref{fig:invtraj}). The variations are caused, first, by random initialization
of the dataset and weights in each run, and second, by the inclusion of
the regularization terms $\eps\Id$, which breaks invariance. If the
effect of the latter is isolated, by having the same initialization in
tanh and sigmoid implementations, the trajectories coincide for the
first few iterations but then start to differ, without affecting overall
performance.

\begin{figure}
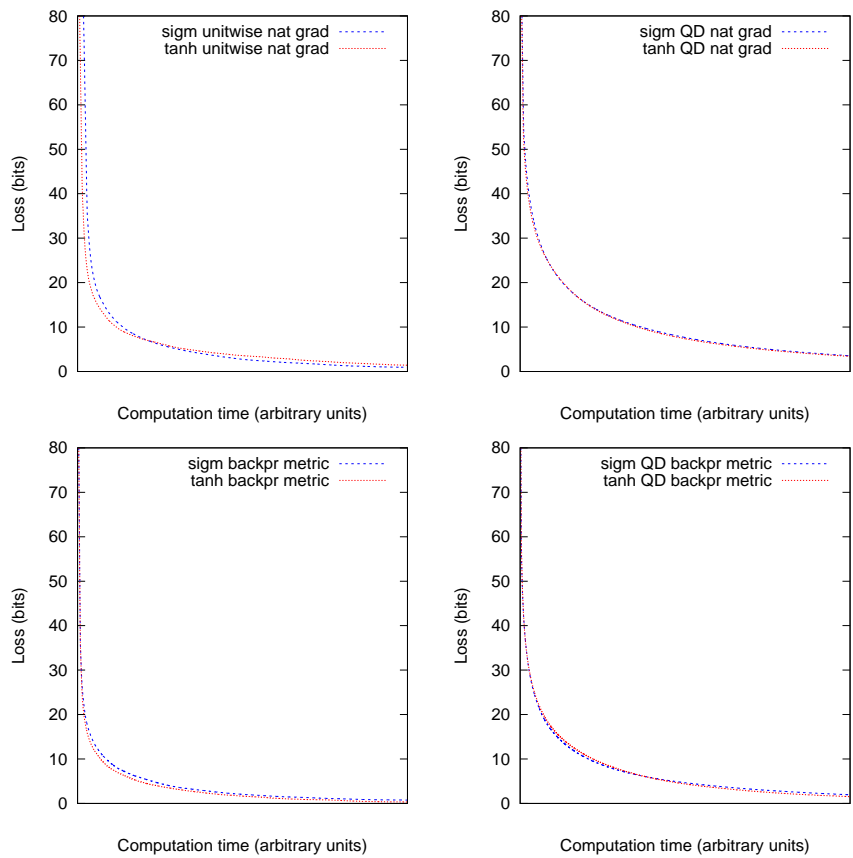

\begin{center}
\includegraphics[width=.45\columnwidth]{invunat}
\includegraphics[width=.45\columnwidth]{invqdnat}

\includegraphics[width=.45\columnwidth]{invbpm}
\includegraphics[width=.45\columnwidth]{invqdbpm}

\caption{\label{fig:invtraj}
Double-checking invariance: Comparison of the trajectories of the invariant algorithms in tanh and
sigmoid implementations}
\end{center}
\end{figure}

In this setting, the natural gradient, in its unitwise and quasi-diagonal
versions, seems to perform slightly worse than the backpropagated metric
methods. This might be an effect of the large output layer size (which
directly affects their computational complexity) combined with a given
computation time budget. Per iteration instead of computation time, the
natural gradient methods perform better than the backpropagated metric
methods, and we expect them to be more competitive for smaller output
sizes, e.g., in classification tasks.

The two algorithms based on squared gradients,
Adagrad and the unitwise OP gradient, both perform rather poorly in this
setting. Adagrad differs from the unitwise OP gradient by using a
diagonal approximation and introducing square roots, which breaks
invariance, while
the unitwise OP gradient is
invariant
and is meant to approximate the natural
gradient. This is a surprise, as, for instance, methods close to the OP
gradient have been found to perform well, e.g., in~\cite{TONGA}, or in~\cite{pcnn} for recurrent networks.
The small size of the dataset in our setting
is not enough to explain this problem, as it does not seem to affect the
other methods. This may be related to the large dimensionality of the
output layer compared
to the number of
samples in our experiment (in contrast to \cite{TONGA} or \cite{pcnn}),
which damages the one-sample OP metric approximation of the natural gradient
and could result in low-rank OP matrices. Indeed, reasoning
on the full (whole-network) metric, the OP gradient contributes a
matrix of rank $1$ for each data sample (see~\eqref{eq:onesamplefisher});
on the other hand, the exact Fisher matrix contributes a sum of $n\out$
matrices of rank $1$ for each data sample as can be seen from
\eqref{eq:fishmod}--\eqref{eq:exactfisherx}. Thus from a theoretical
viewpoint the quality of the one-sample OP approximation of the natural
gradient is likely to depend on output
dimensionality.

Lack of invariance is striking for some algorithms, such as the
diagonal Gauss--Newton method: its performance is very different
in the sigmoid and tanh interpretations (Figure~\ref{fig:diaghess}). The
quasi-diagonal backpropagated metric method only differs from diagonal
Gauss--Newton by the inclusion of a small number of non-diagonal terms in
the update. This change brings the sigmoid and tanh implementations in
line with each other, improving performance with respect to the best of
the two diagonal Gauss--Newton implementations. In settings where the activities
of units (especially internal units, since the input can always be
centered) are not as well centered as here, we expect the quasi-diagonal
backpropagated metric method to outperform the tanh diagonal
Gauss--Newton implementation even more clearly.
Thus the quasi-diagonal backpropagated metric is arguably ``the invariant
way'' to write the diagonal Gauss--Newton algorithm.

\begin{figure}
\begin{center}
\includegraphics[width=.9\columnwidth]{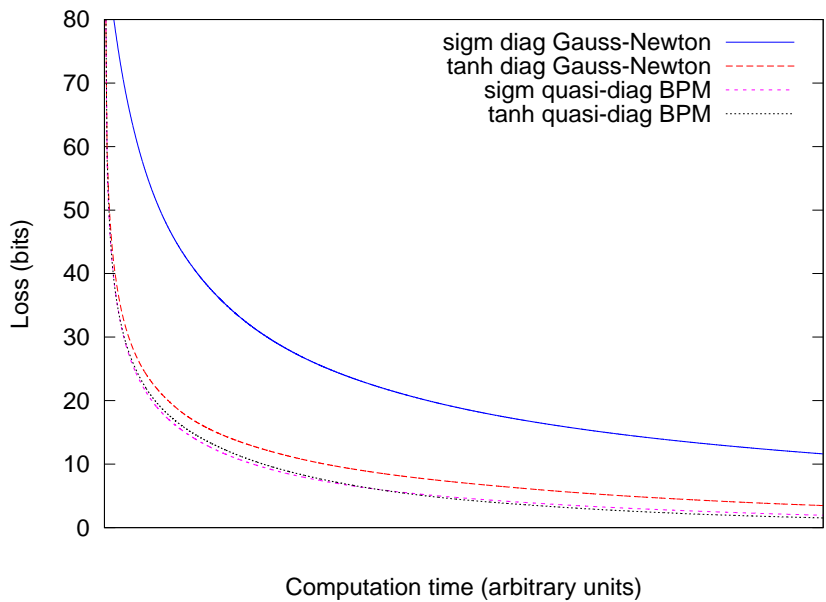}

\caption{\label{fig:diaghess}
Effect of introducing a few non-diagonal terms: Comparison of the
diagonal Gauss--Newton and the quasi-diagonal backpropagated metric
methods}
\end{center}
\end{figure}

In conclusion, although this experiment is a small-scale one, it clearly emphasizes the
interest of using invariant algorithms.

%


\section*{Conclusions}

\begin{itemize}

\item It is possible to write invariant training algorithms for neural
networks that do not have the scalability issues of the natural gradient.
For a network with $n$ units, $n\out$ output units, and at most $d$
incoming connections per unit, we have provided four invariant algorithms for
which the cost of processing a data sample is respectively $O(nd)$,
$O(nd^2)$, $O(ndn\out)$ and $O(nd^2+ndn\out)$. The slower methods are
closer to the natural gradient and have stronger invariance properties.

\item All of these algorithms are mathematically motivated by building
Riemannian metrics on the activity and parameter space of neural
networks, treated as manifolds.

\item The outer product metric encountered in the literature is also
naturally interpreted in this framework. It has a unique property of
spreading improvement most uniformly across the samples at each step.

\item In a small-scale experiment involving an auto-encoding task with
three hidden layers, invariant methods substantially outperform
non-invariant methods. Lack of invariance is clear for some commonly used
methods.

\item The quasi-diagonal backpropagated metric is close to the diagonal
Gauss--Newton method \cite{LBOM96,peskylr} but crucially differs by the inclusion
of a few well-chosen non-diagonal terms. While performance of the
Gauss--Newton method can change substantially from sigmoid to tanh
implementation, the quasi-diagonal backpropagated metric is invariant and
improves performance with respect to diagonal Gauss--Newton in any
implementation.

\end{itemize}

\paragraph{Acknowledgments.} I would like to thank Youhei Akimoto, Jérémy
Bensadon, Cécile Germain, Balázs Kégl, and Michèle Sebag for helpful
conversations and their feedback on this article, as well as the
anonymous referees for careful reading and helpful suggestions.

\appendix

\section*{Appendix}

\renewcommand{\thesubsection}{\Alph{subsection}}

\subsection{Proof of Proposition~\ref{prop:fishisfish}}

The Fisher metric depends, of course, on the interpretation of the output
layer as a probability distribution $\omega$.
The final output $\omega$ is a probability distribution over the values
of $y$ in the target space, parametrized by the activities $a_k$ of the
output units $k$. If the output activities change by $\d a_k$, the 
probability distribution $\omega$ will change as well.
The norm of this change in Fisher metric is
\begin{align}
\natnorm{\d \omega}^2
&=\E_{y\sim\omega} (\delta \ln \omega(y))^2
\\&=
\sum_{k, k'\in\Lout} \E_{y\sim\omega}\frac{\partial \ln \omega(y)}{\partial a_k}
\frac{\partial \ln \omega(y)}{\partial a_{k'}}\d a_k \d a_{k'}
\end{align}
thus stemming from the matrix
\begin{equation}
F_{kk'}\deq\E_{y\sim\omega}\frac{\partial \ln \omega(y)}{\partial
a_k}\frac{\partial \ln \omega(y)}{\partial a_{k'}}
\end{equation}
over the output layer.

In the Bernoulli interpretation, for each component $k$ of the output
layer, the random variable $y_k$ is a Bernoulli variable with parameter
$a_k$.
In the square-loss interpretation, each $y_k$ is a Gaussian random
variable with mean $a_k$ and variance $1$. In the two classification
interpretations, $y$ is a discrete random variable which takes value $k$
with probability $\omega_k=\e^{a_k}/(\sum_{j\in\Lout} \e^{a_j})$ or $\omega_k=a_k^2/(\sum_{j\in\Lout} a_j^2)$.

Let us compute the Fisher metric in the space $\omega$ in each case. In
the Bernoulli case, we have $\omega(y)=\prod_{k\in\Lout}
(\1_{y_k=1}a_k+\1_{y_k=0}(1-a_k))$. Consequently
\begin{equation}
\frac{\partial \ln \omega(y)}{\partial
a_k}=\frac{\1_{y_k=1}}{a_k}-\frac{\1_{y_k=0}}{1-a_k}=
\frac{y_i-a_k}{a_k(1-a_k)}
\end{equation}
Since under the distribution $\omega$ we have $\E y_k=a_k$ and $\Var
y_k=a_k(1-a_k)$ (with $y_k$ and $y_j$ independent for $k\neq j$) we find
\begin{equation}
F_{k k'}=\frac{\1_{k=k'}}{a_k(1-a_k)}
\end{equation}
for $k$ and $k'$ in the output layer.

In the Gaussian case we have $\omega(y)=\prod_k
\frac{\e^{-(y_i-\omega_k)^2/2}}{\sqrt{2\pi}}$ so that $\frac{\partial \ln
\omega(y)}{\partial
\omega_k}=y_i-\omega_k$. Since under the distribution $\omega$ we have
$\E y_k=\omega_k$ and $\Var
y_k=1$ we find $F=\Id$ hence
\begin{equation}
F_{kk'}=\1_{k=k'}
\end{equation}
for $k$ and $k'$ in the output layer.

In the softmax case the probability to have $y=j$ is
$\omega(y)=\e^{a_j}/S$ with $S=\sum_{i\in\Lout} \e^{a_i}$. Thus $\partial
\ln \omega(y)/\partial a_k=\1_{k=j}-\e^{a_k}/S$. Taking the expectation
over $y$ (i.e., over $j$) we find
\begin{align}
F_{kk'}&=
\sum_{j\in\Lout} \frac{\e^{a_{j}}}{S}
\left(\1_{j=k}-\frac{\e^{a_k}}{S}\right)\left(\1_{j=k'}-\frac{\e^{a_{k'}}}{S}\right)
\\&=\frac{\e^{a_k}}{S}\1_{k=k'}
-\frac{\e^{a_k}}{S}\frac{\e^{a_{k'}}}{S}
-\frac{\e^{a_{k'}}}{S}\frac{\e^{a_k}}{S}
+\left(\sum_j \frac{\e^{a_j}}{S}\right)\frac{\e^{a_k}}{S}\frac{\e^{a_{k'}}}{S}
\\&=\frac{\e^{a_k}}{S}\1_{k=k'}-\frac{\e^{a_k}\e^{a_{k'}}}{S^2}
\end{align}

Similarly, in the spherical case the probability to have $y=j$ is
$\omega(y)=a_{j}^2/S$ with $S=\sum_{i\in\Lout} a_i^2$. Thus $\partial
\ln \omega(y)/\partial a_k=2(\frac{\1_{k=j}}{a_j}-\frac{a_k}{S})$. Taking the expectation over $y$ we find
\begin{align}
F_{kk'}&=
4\sum_{j\in\Lout} \frac{a_{j}^2}{S}
\left(\frac{\1_{j=k}}{a_j}-\frac{a_k}{S}\right)\left(\frac{\1_{j=k'}}{a_j}-\frac{a_{k'}}{S}\right)
\\&=4\frac{a_k^2}{S}\frac{1}{a_k}\frac{1}{a_{k'}}\1_{k=k'}
-4\frac{a_k^2}{S}\frac{1}{a_k}\frac{a_{k'}}{S}
-4\frac{a_{k'}^2}{S}\frac{1}{a_{k'}}\frac{a_{k}}{S}
+4\left(\sum_j \frac{a_j^2}{S}\right)\frac{a_k}{S}\frac{a_{k'}}{S}
\\&=\frac4S \1_{k=k'}-\frac{4a_ka_{k'}}{S^2}
\end{align}

These give the expression of the Fisher matrix $F_{kk'}$ for $k$ and $k'$ in the output
layer. This is enough to compute the full Fisher matrix, as follows.

Let $x$ be an
input for the network. Given a variation $\d\theta$ of the network
parameters $\theta$, let $\d a_i$ be the resulting variation of unit $i$,
and let
$\d\omega$ be the resulting variation of the final output $\omega$.
We have
$\d\omega=\sum_{k\in\Lout} \frac{\partial \omega}{\partial a_k}\d a_k$.
The datum-wise Fisher
metric on $\theta$ is
\begin{align}
\natnorm{\d\theta}^2&=\natnorm{\d\omega}^2
\\&=\sum_{k,k'\in\Lout} F_{kk'} \d a_k \d a_{k'}
\end{align}

For each $k$ in the output layer, we have $\d a_k=\sum_i\frac{\partial
a_k}{\partial \theta_i}\d \theta_i$ where the sum runs over all units $i$
in the network. For each $i$ we have $\frac{\partial
a_k}{\partial \theta_i}=\frac{\partial a_k}{\partial a_i}\frac{\partial
a_i}{\partial \theta_i}=J_i^k \frac{\partial
a_i}{\partial \theta_i}$.
Plugging this into the above yields
\begin{equation}
\natnorm{\d\theta}^2=\sum_i\sum_{i'}\sum_{k\in\Lout}\sum_{k'\in\Lout}F_{kk'}J_i^kJ_{i'}^{k'}\frac{\partial
a_i}{\partial \theta_i}\frac{\partial
a_{i'}}{\partial \theta_{i'}}
\end{equation}
so that the term of the Fisher matrix corresponding to $\d\theta_i$ and
$\d\theta_{i'}$ is
$\sum_{k\in\Lout}\sum_{k'\in\Lout}F_{kk'}J_i^kJ_{i'}^{k'}\frac{\partial
a_i}{\partial \theta_i}\frac{\partial
a_{i'}}{\partial \theta_{i'}}
$.

For standard neural networks we have $\delta\theta_i=(\delta
w_{ji})_{j,\,j\to i}$ and moreover
$\frac{\partial
a_i}{\partial w_{ji}}=a_j\deractf_i$.

Plugging into this the expression for $F_{kk'}$ yields the results in
Proposition~\ref{prop:fishisfish}.

%
%
%

\subsection{Proof of Proposition~\ref{prop:equalize}}

Let $v$ be an infinitesimal variation of the parameter $\theta$. Let
$v_i$ be the coordinates of $v$ in some coordinate system. At first
order, the increment in the average loss function along $v$ is $\E_{x\in
\D}\sum_i
\frac{\partial\ell_x}{\partial\theta_i}v_i$.

Let us abbreviate
$\partial_i\ell_x=\frac{\partial\ell_x}{\partial\theta_i}$. The matrix
defining the OP metric is $M_{ij}=\E_{x\in\D} \partial_i\ell_x
\partial_j \ell_x$. The corresponding gradient direction is
$M^{-1}\E_{x\in\D}\partial \ell_x$.

Let $m=\E_{x\in\D}\sum_i \partial_i\ell_x v_i$ be the change in loss
function associated with $v$.
The variance, when $x$ runs over the dataset, of the gain in the loss
function for $x$
is $\E_{x\in\D}\left( (\sum_i \partial_i \ell_x v_i)
-m
\right)^2=\E_{x\in\D}(\sum_i \partial_i \ell_x v_i)^2-m^2$. For fixed
average gain $m$, this is minimal when $\E_{x\in\D}(\sum_i \partial_i
\ell_x v_i)^2$ is minimal.

This is a smooth convex function of $v$, whose minimum we have to find
over the hyperplane $\{v,\, \E_{x\in\D}\sum_i \partial_i\ell_x v_i=m\}$.
The minimum of a positive-definite quadratic functional $\sum_{ij}
A_{ij}v_iv_j$ over a hyperplane $\sum_i B_iv_i=m$, is found at $v=\lambda A^{-1}B$
for some constant $\lambda$. Here we have $B_i=\E_{x\in\D}
\partial_i\ell_x$, and expanding $\E_{x\in\D}(\sum_i \partial_i
\ell_x v_i)^2=\E_{x\in\D}((\sum_i \partial_i
\ell_x v_i)(\sum_j \partial_j
\ell_x v_j))=\sum_{ij}\E_{x\in\D} \partial_i
\ell_x \partial_j \ell_x v_i v_j$ yields $A_{ij}=\E_{x\in\D} \partial_i
\ell_x \partial_j \ell_x=M_{ij}$. Consequently, for any value of $m$, the
optimal $v$ is a multiple of the OP gradient direction
$M^{-1}\E_{x\in\D}\partial
\ell_x$.

\subsection{Definition of the metrics in the formalism of differential
geometry}
\label{sec:formal}

Let $\L$ be the neural network (directed acyclic finite graph of
units); for $k\in \L$, let the activities of unit $k$ belong to a
manifold $\A_k$.
Let the activation function for unit $k$ be
$f_k \from \left(\Theta_k\times \prod_{i\to k} \A_i\right)\to \A_k$,
$(\theta_k,(a_i)_{i\to k})\mapsto f_{\theta_k}((a_i)_{i\to k})$ where
$\Theta_k$ is the manifold of the parameters of unit $k$. Let
$\Lin\subset \L$ and $\Lout\subset \L$ be the input and output layers,
respectively; let $\mathcal{X}$ be the space to which the inputs belong,
and let $\iota\from\mathcal{X}\to \prod_{k\in \Lin} \A_k$ be the input
encoding. Let $\mathcal{O}$ be the manifold to which the outputs belong, and
let $\omega\from \prod_{k\in\Lout} \A_k \to \mathcal{O}$ be the output interpretation.

The values of $a_k$ and of the output $\omega$ can be seen as functions
of the parameter $\theta=(\theta_k)\in\prod_k \Theta_k$ and the input $x$,
by using the induction relations defined by the network:
$a_k(\theta,x)\deq\iota(x)_k$ for $k\in\Lin$,
$a_k(\theta,x)\deq f_k(\theta_k,(a_i(\theta,x))_{i\to k})$ for
$k\not\in\Lin$, and by abuse of notation,
$\omega(\theta,x)\deq \omega((a_k(\theta,x))_{k\in\Lout})$.

Let $\Omega$ be the output metric: a Riemannian metric on the output
manifold $\mathcal{O}$, which to every vector $\d\omega$ tangent to
$\mathcal{O}$ at point $\omega\in\mathcal{O}$, associates its square norm $\Omega_\omega(\d\omega,\d\omega)$ in
a bilinear way. An important example is the Fisher metric when
$\mathcal{O}$ is a space of probability distributions.

Let $\Theta\deq\prod_k \Theta_k$ be the parameter manifold. We are going to
define the natural metric, unitwise natural metric, and backpropagated
metric as Riemannian metrics on $\Theta$.

If $\phi:E\to F$ is a linear map between vector spaces $E$ and $F$, and $g$ is a
bilinear form on $F$, we define the bilinear form $g\circ \phi$ on $E$ by
\begin{equation}
g\circ \phi\from (e,e')\mapsto  g(\phi(e),\phi(e'))
\end{equation}
for any two vectors $e,e'\in E$. If $g$ is positive-semidefinite then so
is $g\circ \phi$.

We denote by $T_pM$ the tangent space to a manifold $M$ at a
point $p\in M$.
Recall \cite[1.36]{GHL87} that if $h\from M \to M'$ is a smooth map between
manifolds, its differential $\frac{\partial h}{\partial p}(p)$ at point
$p\in M$ is a linear map from $T_p M$ to $T_{h(p)}M'$.

Let $\theta\in\Theta$. 
For an input $x$, let $\frac{\partial\omega}{\partial\theta}(\theta,x)$ be the
differential of the network output $\omega(\theta,x)$ with respect to
$\theta$: this is a linear map from $T_\theta\Theta$ to
$T_{\omega(\theta,x)}\mathcal{O}$.

Define the \emph{natural metric}
$g\nat$ as
the bilinear form on $T_\theta\Theta$ for each $\theta$ given by
\begin{equation}
g\nat\deq \frac1{\#\D}\sum_{x\in \D}\, \left(
\Omega_{\omega(\theta,x)}\circ \frac{\partial\omega}{\partial\theta}(\theta,x)
\right)
\end{equation}
where $x$ ranges over
inputs in the dataset
$\D$. By construction, this metric does not depend on any choice of
parametrization and is thus intrinsic.

The \emph{unitwise natural metric} is defined in a likewise manner except that
it first breaks down the tangent vector $\d\theta\in T_\theta\Theta$ into
its components along each unit $k$ using that $\Theta=\prod_k \Theta_k$
and thus $T_\theta\Theta=\bigoplus_k T_{\theta_k}\Theta_k$. The effect is
to make the components $\d\theta_k$ orthogonal. Namely:
\begin{equation}
g\unat(\d\theta,\d\theta)\deq \sum_k
g^k\nat(\d\theta_k,\d\theta_k)
\end{equation}
where $\d\theta=\bigoplus_k \d\theta_k$, and where
\begin{equation}
g^k\nat\deq \frac1{\#\D}\sum_{x\in \D}\left(\Omega_{\omega(\theta,x)}\circ
\frac{\partial\omega}{\partial\theta_k}(\theta,x)\right)
\end{equation}
is the natural metric on $\Theta_k$, with
$\frac{\partial\omega}{\partial\theta_k}(\theta,x)$ the differential
of the network output with respect to $\theta_k$, which is a linear map
from $T_{\theta_k}\Theta_k$ to $T_{\omega(\theta,x)}\mathcal{O}$.

The \emph{backpropagated metric} is defined by backward induction in the
directed acyclic graph $\L$. First, for each input $x$ and each unit
$k$, let us define a
bilinear form $g^{\A_k}\datbp{x}$ on the tangent space $T_{a_k(\theta,x)}\A_k$ to the
activity at $k$. On the output layer let us set
\begin{equation}
g^{\A_k}\datbp{x}\deq \Omega_{\omega(\theta,x)}\circ
\frac{\partial \omega}{\partial a_k}((a_j(\theta,x))_{j\in\Lout})
\qquad\text{for }k\in\Lout
\end{equation}
where $\frac{\partial\omega}{\partial a_k}((a_j(\theta,x))_{j\in\Lout})$ is the
differential of the output interpretation function
$\omega:\prod_{j\in\Lout} \A_j\to\mathcal{O}$ with respect to $a_k$, which is a linear map from
$T_{a_k(\theta,x)}\A_k$ to $T_{\omega(\theta,x)}\mathcal{O}$. Then this is
backpropagated through the network: for each $k$ we define a bilinear
form on $T_{a_k(\theta,x)}\A_k$ by
\begin{equation}
g^{\A_k}\datbp{x}\deq \sum_{i,\,k\to i}
g^{\A_i}\datbp{x}\circ 
\frac{\partial f_i}{\partial a_k}(\theta_i,(a_j(\theta,x))_{j\to i})
\qquad\text{for }k\not\in\Lout
\end{equation}
with $f_i:\Theta_i\times \prod_{j\to i} \A_j\to \A_i$ the activation
function of unit $i$. (If a unit is both an output unit and influences
some other units, we add the two contributions.)
This is transferred to a metric on
$\Theta_k$ via
\begin{equation}
g^{\Theta_k}\datbp{x}\deq g^{\A_k}\datbp{x}\circ
\frac{\partial f_k}{\partial \theta_k}(\theta_k,(a_j(\theta,x))_{j\to k})
\end{equation}
Finally, letting again
$\d\theta=\bigoplus_k \d \theta_k$ be a tangent vector to $\Theta$, define the
backpropagated metric by
\begin{equation}
g\bp(\d\theta,\d\theta)\deq \frac{1}{\#\D}\sum_{x\in\D}\sum_k
g^{\Theta_k}\datbp{x}
(\d\theta_k,\d\theta_k)
\end{equation}
which is a metric on $\Theta$.

Note that these metrics may be non-positive definite (e.g., if a parameter
has no influence on the output).

Since these metrics have been defined using only intrinsic objects
without choosing a parametrization of any of the manifolds $\Theta_k$, they are
intrinsic (for a given output metric $\Omega$). Consequently, when
working in explicit coordinates, the value of the norm of $\d\theta$ is
invariant with respect to any change of variables for each $\theta_k$
(diffeomorphism of $\Theta_k$). The natural metric has the additional
property that it is invariant under changes of variables mixing the
parameters of various units: its invariance group is
$\mathrm{Diff}(\prod_k \Theta_k)$ whereas the invariance group is the
smaller group $\prod_k \mathrm{Diff}(\Theta_k)$ for the unitwise natural
metric and backpropagated metric.

\bibliographystyle{alpha}
\bibliography{gradnn}

\end{document}